\newcommand{\babak}[1]{{\texttt{\color{red} Babak: [{#1}]}}}
\newcommand{\cg}{\mathcal{G}}
\newcommand{\qg}{\mathcal{H}}
\newcommand{\groundedQG}{\mathcal{G}_{\mathcal{H}}}
\newcommand{\edges}{\texttt{E}}
\newcommand{\nodes}{\texttt{V}}
\newcommand{\child}{\texttt{ch}}
\newcommand{\desc}{\texttt{Dsc}}
\newcommand{\nondesc}{\texttt{NDsc}}
\newcommand{\ancs}{\texttt{Ancs}}
\newcommand{\bX}{\boldsymbol{X}}
\newcommand{\bY}{\boldsymbol{Y}}
\newcommand{\bW}{\boldsymbol{W}}
\newcommand{\bZ}{\boldsymbol{Z}}
\newcommand{\bU}{\boldsymbol{U}}
\newcommand{\comp}{\{\mathcal{G}_i\}_{\qg}}
\newcommand{\mb}{\mathbf}
\newcommand{\pr}{\mathbb{P}}
\newcommand{\indep}{\perp\!\!\!\perp}
\newcommand{\probName}{causal DAG summarization}
\newcommand{\groundedDAG}{\text{canonical causal DAG}}
\newcommand{\algoName}{\textsc{CaGreS}}
\newcommand{\ci}{\textsc{CIs}}
\newcommand{\ksnap}{\textcolor{darkgreen}{\textsc{k-Snap}}}
\newcommand{\brutef}{\textcolor{red}{\textsc{Brute-Force}}}
\newcommand{\rand}{\textcolor{darkpurple}{\textsc{Random}}}
\newcommand{\tc}{\textcolor{darkyellow}{\textsc{Transit-Cluster}}}
\newcommand{\cic}{\textcolor{gray}{\textsc{CIC}}}
\newcommand{\flights}{\textsc{Flights}}
\newcommand{\adult}{\textsc{Adult}}
\newcommand{\german}{\textsc{German}}
\newcommand{\accidents}{\textsc{Accidents}}
\newcommand{\urls}{\textsc{Urls}}
\newcommand{\redshift}{\textsc{Redshift}}
\definecolor{darkgreen}{rgb}{0.0, 0.5, 0.0}
\definecolor{darkyellow}{rgb}{0.8, 0.6, 0.0}
\definecolor{darkpurple}{rgb}{0.4, 0.0, 0.4}
\definecolor{yellow}{rgb}{1,1,0.6}
\definecolor{orange}{rgb}{1,0.467,0}
\definecolor{lightyellow}{rgb}{1,1,0.8}
\definecolor{revablue}{RGB}{0, 90, 190}
\newcommand{\reva}[1]{{\leavevmode\color{black}{#1}}}
\definecolor{revbmagenta}{RGB}{220, 60, 150}
\newcommand{\revb}[1]{{\leavevmode\color{black}{#1}}}
\definecolor{revcpurple}{RGB}{140, 30, 160}
\newcommand{\revc}[1]{{\leavevmode\color{black}{#1}}}
\newcommand{\common}[1]{{\leavevmode\color{black}{#1}}}
\newtheorem{theorem}{Theorem}[section]
\newtheorem{corollary}{Corollary}[theorem]
\newtheorem{lemma}[theorem]{Lemma}
\theoremstyle{definition}
\newtheorem{definition}{Definition}
\newtheorem{problem}{Problem}
\newtheorem{example}{Example}
\newcommand{\brit}[1]{\textcolor{teal}{\bf (Brit)~[#1]}{\typeout{#1}}}
\newcommand{\anna}[1]{\textcolor{brown}{\bf (Anna)~[#1]}{\typeout{#1}}}
\newcommand{\node}[1]{\uppercase{#1}}
\def\e#1{\emph{#1}}
\newcommand{\eat}[1]{}
\def\set#1{\mathord{\{#1\}}}
\def\sigmarb{\Sigma_{\text{RB}}}
\def\eqdef{\mathrel{\stackrel{\textsf{\tiny def}}{=}}}
\newtheorem{defn}{Definition}[section]
\newtheorem{thm}[defn]{Theorem}
\renewcommand\footnotetextcopyrightpermission[1]{} 
\begin{document}



\title{Causal DAG Summarization (Full Version)}

\author{Anna Zeng}
\email{annazeng@mit.edu}
\affiliation{
  \institution{CSAIL, MIT}
  \country{USA}
}
\author{Michael Cafarella}
\email{michjc@csail.mit.edu}
\affiliation{
  \institution{CSAIL, MIT}
  \country{USA}
}
\author{Batya Kenig}
\email{batyak@technion.ac.il}
\affiliation{
  \institution{Technion}
  \country{Israel}
}

\author{Markos Markakis}
\email{markakis@mit.edu}
\affiliation{
  \institution{CSAIL, MIT}
  \country{USA}
}

\author{Brit Youngmann}
\email{brity@technion.ac.il}
\affiliation{
  \institution{Technion}
  \country{Israel}
}

\author{Babak Salimi}
\email{bsalimi@ucsd.edu}
\affiliation{
  \institution{University of California, San Diego}
  \country{USA}
}

\begin{abstract}
Causal inference aids researchers in discovering cause-and-effect relationships, leading to scientific insights. Accurate causal estimation requires identifying confounding variables to avoid false discoveries. Pearl’s causal model uses causal DAGs to identify confounding variables, but incorrect DAGs can lead to unreliable causal conclusions. 
However, for high dimensional data, the causal DAGs are often complex beyond human verifiability. Graph summarization is a logical next step, but current methods for general-purpose graph summarization are inadequate for causal DAG summarization. 
This paper addresses these challenges by proposing a causal graph summarization objective that balances graph simplification for better understanding while retaining essential causal information for reliable inference.
We develop an efficient greedy algorithm and show that summary causal DAGs can be directly used for inference and are more robust
to misspecification of assumptions, enhancing robustness for causal
inference.
Experimenting with six real-life datasets, we compared our algorithm to three existing solutions, showing its effectiveness in handling high-dimensional data and its ability to generate summary
DAGs that ensure both reliable causal inference and robustness against misspecifications.

\end{abstract}

\maketitle



\pagenumbering{arabic} 

\section{Introduction}
\label{sec:intro}

Causal inference is central to informed decision-making in economics, sociology, medicine, and in helping analysts unravel complex cause-effect relationships~\cite{varian2016causal,gangl2010causal,kleinberg2011review}. It has become increasingly critical in machine learning, where it supports algorithmic fairness~\cite{salimi2019interventional}, data debiasing~\cite{zhu2023consistent, zhu2024overcoming, zhu2023consistent}, explainable AI~\cite{galhotra2021explaining, miller2019explanation, beckers2022causal, mohan2013graphical}, and enhanced robustness~\cite{magliacane2018domain, sun2021recovering, scholkopf2021toward}. \reva{Causal inference has also become a major theme in recent data management research~\cite{meliou2010causality, bertossi2017causes, meliou2014causality,salimi2020causal,10.14778/3554821.3554902}, integrating causality into data management tasks such as finding input responsibilities toward query answers~\cite{meliou2010causality, meliou2009so, meliou2014causality}, explaining for query results \cite{salimi2018bias,youngmann2024summarized,youngmann2023explaining,roy2014formal}, data discovery~\cite{galhotra2023metam,youngmann2023causal}, data cleaning~\cite{pirhadi2024otclean,salimi2019interventional}, hypothetical reasoning \cite{galhotra2022hyper}, and large system diagnostics~\cite{markakis2024sawmill,causalsim,sage, gudmundsdottir2017demonstration,markakis2024logs}. }

Drawing causal conclusions from data fundamentally hinges on access to background knowledge and assumptions, as data alone cannot establish causality~\cite{pearl_causality_2000,rubin2005causal}. A principled way to encode such background knowledge is through Causal Directed Acyclic Graphs (DAGs)~\cite{pearl_causality_2000}. These graphs explicitly represent assumed causal relationships, enabling systematic reasoning about interventions. Causal DAGs can be used together with graphical criteria such as the backdoor criterion, or in general, Pearl's $do$-calculus~\cite{pearl_causality_2000} to determine whether the effect of interventions can be answered using data and available background knowledge. If so, they help identify the right set of confounding variables to control for, ensuring sound causal inference given the background knowledge.

However, the soundness and robustness of causal inference hinges on the availability of high-quality causal DAGs, which are often not readily available. These DAGs are typically constructed using domain knowledge~\cite{castelnovo2024marrying,vashishtha2023causal,markakis2024press} or through causal discovery methods~\cite{glymour2019review,shimizu2006linear,chickering2002optimal, wiering2002evolving,zhu2019causal}. This elicitation process is costly, error-prone~\cite{oates2017repair}, and time-consuming. Causal discovery methods, while useful, are fundamentally restrictive as they identify a class of DAGs compatible with observed data rather than a singular, definitive model~\cite{glymour2019review}. Moreover, existing discovery methods often do not perform well on real-world data and require significant human intervention for verification~\cite{singh2017comparative, huegle2021manm, constantinou2021large}. The problem is even worse for high-dimensional data, increasing the need for efficient methods to simplify and verify causal models while retaining essential information~\cite{o2006incorporating}. We illustrate this with an example:

\begin{example}
\label{ex:ex1} 
Consider the application of performance diagnosis for a cloud-based data warehouse service. Specifically, consider a dataset collected from the monitoring views in Amazon Redshift Serverless~\cite{redshift-serverless}, including performance metrics and query-extracted features, such as the number of unique tables and columns \revc{referenced in the executed query}.
This dataset enables answering crucial causal queries for optimizing performance. For example, understanding the impact of caching on latency (i.e., \verb|Result Cache Hit| on \verb|Elapsed Time|) can help tune caching mechanisms, or analyzing the effect of join complexity on the query planner's performance (i.e., \verb|Num Joins| on \verb|Plan Time|) can optimize query execution strategies. 
However, the necessary causal DAG to answer such questions is not readily available, and getting it right is non-trivial.


Figure~\ref{fig:example_causal_dag} shows an example causal DAG covering variables from just one monitoring view~\cite{sysqueryhistory} and a few query features, chosen for illustration. This is just a small part \common{of the overall high dimensional dataset.}
\revc{Edges in causal DAGs represent potential cause-effect relationships. In our example, for instance, the edge from} \texttt{Num Columns} \revc{to} \texttt{Exec. Time} \revc{suggests that the number of columns referenced in a query may influence the query's execution time.}

To answer the above causal queries, \verb|Query Template| is a critical confounder that must be adjusted for because it influences both the performance metrics (e.g., \verb|Elapsed| \verb|Time|, \verb|Plan Time|) and the analyzed mechanisms (e.g., \verb|Result| \verb|Cache| \verb|Hit|, \verb|Num Joins|). Failing to adjust for this variable can lead to biased estimations and incorrect conclusions. Hence, any possible misspecification in the causal DAG that would fail to identify this variable as a confounder would result in incorrect effect estimations. Such sensitivity to graph errors makes domain expert verification essential for each existing or missing edge. This task can be overwhelming, even in this small example with only $12$ nodes, as it involves inspecting $66$ potential edges, one per pair of nodes. In the full dataset, the number of variables would be much higher, further complicating the task. \qed




\label{example:example_causal_dag} \end{example}

Graph summarization is a logical next step, as it reduces the number of nodes and edges, making it easier for users to verify and inspect causal DAGs in high-dimensional datasets.
Graph summarization has been extensively studied, with state-of-the-art methods designed to efficiently generate concise representations aimed at minimizing reconstruction errors~\cite{lee2020ssumm, yong2021efficient}, or facilitating accurate query answering~\cite{shin2019sweg, maccioni2016scalable}. 
However, we argue that while general-purpose methods are adept at managing massive graphs, they are inadequate for summarizing causal graphs, a task that demands the preservation of causal information crucial for reliable inference.

In this paper, we propose a graph summarization technique tailored for causal inference. It simplifies high-dimensional causal DAGs into manageable forms without compromising essential causal information, thereby improving interpretability. 
Our approach introduces a causal DAG summarization objective, which balances simplifying the graph for enhanced comprehensibility and retaining essential causal information. 
Using our technique, one can summarize an initial causal DAG (constructed using partial domain knowledge or causal discovery) for simpler verification and elicitation. \common{Additionally, the summary causal DAG can be directly used for causal inference and is more robust to misspecification of assumptions.} Our approach thereby improves {\em interpretability}, {\em verifiability}, and {\em robustness} in causal inference, facilitating the adoption of these techniques in practice.
We illustrate this with an example:

\begin{example}
Consider Fig.~\ref{fig:summary_graphs_ssumm}, which shows the summary graph generated by SSumM~\cite{lee2020ssumm} for the causal DAG of Fig.~\ref{fig:example_causal_dag}. SSumM is a top-performing general-purpose graph summarization method that effectively balances conciseness and reconstruction accuracy. However, the generated graph can no longer be interpreted as a causal DAG, since it exhibits cycles and self-loops.
For example, computing the causal effect of \verb|Num Joins| on \verb|Plan time| is impossible due to the bidirectional edge between their cluster nodes.
Other methods (e.g., ~\cite{yong2021efficient}) exhibit similar weaknesses, making them unsuitable for summarizing causal DAGs.
An in-depth comparison with another graph summarization method~\cite{tian2008efficient} is provided in Section \ref{sec:exp}. We show that although this method can be adapted to generate summary DAGs compatible with causal inference principles, it does not optimally preserve critical causal information, reducing the accuracy of the inference over the summary DAG.


In contrast, Fig.\ref{fig:summary_graphs_cagres} shows the 5-node summary DAG generated by our approach, which preserves critical causal information, offering a more interpretable summary that can be directly used for inference. This summary DAG makes it easier to verify the soundness of assumptions it encodes.
Furthermore, this summary DAG is inherently more robust to misspecification, because our summarization process creates a summary DAG compatible with a \emph{set} of possible initial DAGs. Hence, even if the original causal DAG missed an edge, our summarization algorithm can still create the necessary connections and maintain causal integrity. 
Using the summary DAG for inference intuitively leads to a more conservative set of confounders: it may lead to adjusting for redundant attributes, but they will only be ones that do not hurt the analysis. \qed

\label{example:example_causal_dag_summary} \end{example}

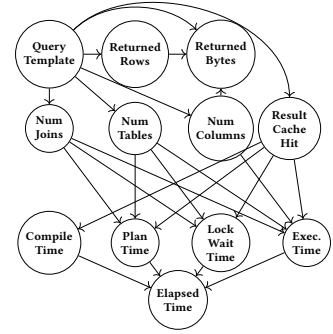
\begin{figure}
\vspace{-5mm}
\centering

 \scriptsize
 \resizebox{0.5\linewidth}{!}{
       \begin{tikzpicture}[
    every node/.style={
        minimum size=0.3cm, 
        font=\bfseries, 
        align=center, 
    },
    vertex/.style={draw, circle}
]

\node[vertex] at (0,0) (Temp) {\scriptsize Query\\Template};
\node[vertex] at (1.5,0) (Rows) {\scriptsize Returned\\Rows};
\node[vertex] at (3,0) (Bytes) {\scriptsize Returned\\Bytes};

\node[vertex] at (0,-1.3) (Joins) {\scriptsize Num\\Joins};
\node[vertex] at (1.5,-1.3) (Tables) {\scriptsize Num\\Tables};
\node[vertex] at (3,-1.3) (Columns) {\scriptsize Num\\Columns};
\node[vertex] at (4.2,-1.3) (Columns) (Cache) {\scriptsize Result\\Cache\\Hit};

\node[vertex] at (0,-3.3) (Compile) {\scriptsize Compile\\Time};
\node[vertex] at (1.5,-3.3)(Plan) {\scriptsize Plan\\Time};
\node[vertex] at (3,-3.3)(Lock) {\scriptsize Lock\\Wait\\Time};
\node[vertex] at (4.5,-3.3) (Execute) {\scriptsize Exec.\\Time};

\node[vertex] at (2.25,-4.3) (Elapsed) {\scriptsize Elapsed\\Time};

\draw[->] (Temp) -- (Rows);
\draw[->] (Temp) to[out=45, in=135, looseness=0.9] (Bytes);
\draw[->] (Rows) -- (Bytes);
\draw[->] (Temp) -- (Tables);
\draw[->] (Temp) -- (Columns);
\draw[->] (Temp) -- (Joins);

\draw[->] (Tables) -- (Plan);
\draw[->] (Joins) -- (Plan);
\draw[->] (Tables) -- (Lock);
\draw[->] (Joins) -- (Lock);
\draw[->] (Tables) -- (Execute);
\draw[->] (Joins) -- (Execute);
\draw[->] (Columns) -- (Execute);
\draw[->] (Columns) -- (Bytes);

\draw[->] (Temp) to[out=45, in=90, looseness=0.95]  (Cache);
\draw[->] (Cache) -- (Compile);
\draw[->] (Cache) -- (Plan);
\draw[->] (Cache) -- (Execute);
\draw[->] (Cache) -- (Lock);

\draw[->] (Compile) -- (Elapsed);
\draw[->] (Plan) -- (Elapsed);
\draw[->] (Execute) -- (Elapsed);
\draw[->] (Lock) -- (Elapsed);

\end{tikzpicture}}
\vspace{-3mm}
\caption{Example causal DAG} \label{fig:example_causal_dag}
\vspace{-5mm}
\end{figure}

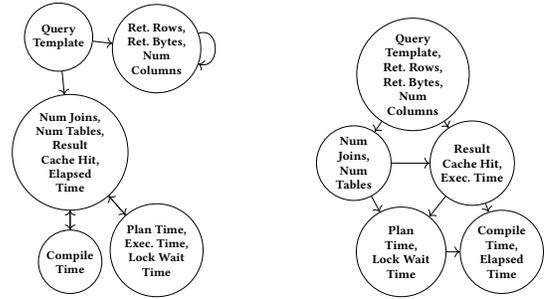
\begin{figure}[t] \scriptsize \centering
    \begin{subfigure}[b]{0.23\textwidth} \centering
    \resizebox{0.7\linewidth}{!}{
            \begin{tikzpicture}[
                node distance=0.5cm,
                every node/.style={
                    minimum size=0.8cm, 
                    font=\bfseries, 
                    align=center, 
                },
                vertex/.style={draw, circle}
            ]
            
            \node[vertex] at (-0.2,0.2) (Temp) {\scriptsize Query\\Template};
            \node[vertex] at (1.5,0) (Rows) {\scriptsize Ret. Rows,\\Ret. Bytes,\\ Num\\Columns};
            \node[vertex]  at (0, -1.8) (Joins) {\scriptsize Num Joins,\\Num Tables,\\Result\\ Cache Hit,\\Elapsed\\Time};
            \node[vertex] at (0,-3.7) (Compile) {\scriptsize Compile\\Time};
            \node[vertex] at (1.5, -3.5) (Plan) {\scriptsize Plan Time,\\Exec. Time,\\Lock Wait\\Time};
            
            \draw[->] (Temp) -- (Rows);
            \draw[->] (Temp) -- (Joins);                        
            \draw[->] (Rows) to[out=20, in=-20, looseness=2] (Rows);
            \draw[->] (Joins) -- (Compile);
            \draw[->]  (Compile) -- (Joins);         
            \draw[->] (Joins) -- (Plan);
            \draw[->] (Plan) -- (Joins);
        \end{tikzpicture}  }
        \caption{Problematic Summary Graph}
        \label{fig:summary_graphs_ssumm}
    \end{subfigure}
    \begin{subfigure}[b]{0.23\textwidth} \centering
    \resizebox{0.75\linewidth}{!}{
       \begin{tikzpicture}[
            node distance=0.5cm,
            every node/.style={
                minimum size=0.7cm, 
                font=\bfseries, 
                align=center, 
            },
            vertex/.style={draw, circle}
        ]

            \node[vertex] at (1,0) (Temp) {\scriptsize Query\\Template,\\Ret. Rows,\\Ret. Bytes,\\ Num\\Columns};
            \node[vertex] at (0, -1.5) (Joins) {\scriptsize Num\\Joins,\\Num\\Tables};
            \node[vertex] at (2, -1.5)(Cache) {\scriptsize Result\\Cache Hit,\\Exec. Time};
            \node[vertex] at (0.8, -3) (Plan) {\scriptsize Plan\\Time,\\Lock Wait\\Time};
            \node[vertex] at (2.5,-3) (Comp) {\scriptsize Compile\\Time,\\Elapsed\\Time};
            
            \draw[->] (Temp) -- (Joins);
            \draw[->] (Temp) -- (Cache);
            \draw[->] (Joins) -- (Cache);
            \draw[->] (Joins) -- (Plan);
            \draw[->] (Cache) -- (Plan);
            \draw[->] (Cache) -- (Comp);
            \draw[->] (Plan) -- (Comp);
        \end{tikzpicture}}
        \caption{Our  Summary DAG}
        \label{fig:summary_graphs_cagres}
    \end{subfigure}
    \vspace{-3mm}
    \caption{$5$-node summary graphs for the DAG in Fig. \ref{fig:example_causal_dag}. }
    \label{fig:summary_graphs}
    \vspace{-4mm}
\end{figure}

Our main contributions are summarized as follows.

\noindent
{\bf Causal DAG Summarization}. We introduce the problem of summarizing causal DAGs in a way that preserves their utility for reliable causal inference (Section \ref{sec:problem}). This necessitates preserving the causal information encoded in the input DAG. Causal DAGs encode information through missing edges, which imply Conditional Independence (CI) constraints. 
We therefore formalize causal DAG summarization as finding a summary DAG that preserves CI statements to the greatest extent possible, while meeting a node number constraint. We prove that this problem is NP-hard.

\noindent
{\bf Summary Causal DAGs}.
We introduce the concept of \emph{summary causal DAGs}, derived by grouping nodes within the original DAG via \emph{node contractions}. Despite inherently leading to information loss, node contraction enables summary DAGs to compactly encapsulate potential causal DAGs from which the summary DAG could have originated. 
We show that contracting nodes is akin to adding edges to the input causal DAG. 
Based on this connection, we develop a sound and complete algorithm for identifying all \ci\ encoded by a summary DAG. \common{This connection is crucial for utilizing summary causal DAGs for causal inference. (Section \ref{sec:csep}).}

\noindent
{\bf The \algoName\ Algorithm}.
\reva{We devise an efficient heuristic greedy algorithm called \algoName}. 
A key feature of \algoName\ is its approach to choosing which node pair to contract. This process is informed by the connection between node contraction and the addition of edges to the input DAG,
prioritizing node pairs that add the fewest edges upon contraction. Additionally, \algoName\ incorporates several optimizations, including caching mechanisms, making it a practical tool for generating summary causal DAGs (Section~\ref{sec:algorithm}).


\noindent
{\bf Causal Inference over Summary Causal DAGs}
We show that summary causal DAGs can be directly utilized for causal inference. We establish that Pearl's \emph{do-calculus} framework~\cite{pearl_causality_2000}, which provides a set of sound and complete rules for reasoning about the effects of interventions using causal DAGs, remains sound and complete for summary DAGs. 
By examining the connection between node contractions and the addition of edges, we offer clear insights into how these modifications affect the soundness and completeness of do-calculus within the framework of summary DAGs 
(Section \ref{sec:identifiability}).

\noindent
{\bf Experimental Evaluation}
We demonstrate how summary DAGs offer robustness against errors in the input causal DAG (Section \ref{sec:robustness}).
We conduct extensive experiments over six datasets demonstrating the effectiveness of \algoName\ compared to existing solutions and two variations of \algoName. The results show the efficiency of \algoName\ in handling high-dimensional datasets and its ability to generate summary DAGs that ensure reliable inference (Section \ref{sec:exp}).

\section{Background}
\label{sec:background}




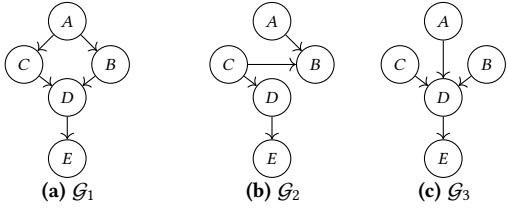
\begin{figure}[t] \scriptsize \centering
    \begin{subfigure}[b]{0.15\textwidth} \centering
        \begin{tikzpicture}[node distance=0.5cm,every node/.style={minimum size=0.5cm}]
            \tikzset{vertex/.style = {draw, circle}}

            \node[vertex] (A) {$A$};
            \node[vertex, below right=0.3cm of A] (B) {$B$};
            \node[vertex, below left=0.3cm of A] (C) {$C$};
            \node[vertex, below=0.5cm of A] (D) {$D$};
            \node[vertex, below=0.3cm of D] (E) {$E$};

            \draw[->] (A) -- (B);
            \draw[->] (A) -- (C);
            \draw[->] (C) -- (D);
            \draw[->] (B) -- (D);
            \draw[->] (D) -- (E);
        \end{tikzpicture}
         \vspace{-2mm}
        \caption{$\mathcal{G}_1$}
        \label{fig:g1}
    \end{subfigure}
    \begin{subfigure}[b]{0.15\textwidth} \centering
        \begin{tikzpicture}[node distance=0.5cm,every node/.style={minimum size=0.5cm}]
            \tikzset{vertex/.style = {draw, circle}}

            \node[vertex] (A) {$A$};
            \node[vertex, below right=0.3cm of A] (B) {$B$};
            \node[vertex, below left=0.3cm of A] (C) {$C$};
            \node[vertex, below=0.5cm of A] (D) {$D$};
            \node[vertex, below=0.3cm of D] (E) {$E$};

            \draw[->] (A) -- (B);
            \draw[->] (C) -- (B);
            \draw[->] (C) -- (D);
            \draw[->] (D) -- (E);
        \end{tikzpicture}
         \vspace{-2mm}
        \caption{$\mathcal{G}_2$}
        \label{fig:g2}
    \end{subfigure}
    \begin{subfigure}[b]{0.10\textwidth} \centering
        \begin{tikzpicture}[node distance=0.5cm,every node/.style={minimum size=0.5cm}]
            \tikzset{vertex/.style = {draw, circle}}
    
            \node[vertex] (A) {$A$};
            \node[vertex, below right=0.3cm of A] (B) {$B$};
            \node[vertex, below left=0.3cm of A] (C) {$C$};
            \node[vertex, below=0.5cm of A] (D) {$D$};
            \node[vertex, below=0.3cm of D] (E) {$E$};
    
            \draw[->] (A) -- (D);
            \draw[->] (C) -- (D);
            \draw[->] (B) -- (D);
            \draw[->] (D) -- (E);
        \end{tikzpicture}
        \vspace{-2mm}
        \caption{$\mathcal{G}_3$}
        \label{fig:g3}
    \end{subfigure}
    \vspace{-4mm}
    \caption{
   Three causal DAGs over the same set of nodes.
    }
    \label{fig:compatible_graphs}
    \vspace{-6mm}
\end{figure}

\vspace{1mm}
We consider a single-relation database over a schema $\mathbb{A}$.
We use upper case letters to denote a variable from $\mathbb{A}$ and bold symbols for sets of variables. 
The broad goal of causal inference is to estimate the effect of an {\em exposure variable} $T {\in} \mathbb{A}$ on an \emph{outcome variable} $O {\in} \mathbb{A}$.
We use Pearl's model for causal inference on observational data~\cite{pearl_causality_2000}.

To get an unbiased estimate for the causal effect of the exposure $T$ on the outcome $O$, one must mitigate the effect of {\em confounding variables}, i.e., variables that can affect the exposure assignment and outcome~\cite{pearl_causality_2000}.
For instance, when estimating how query execution time affects the elapsed time, one would avoid a source of \emph{confounding bias} by considering the number of columns and tables. 
Pearl's model provides ways to account for confounding variables to get an unbiased causal estimate using \emph{causal DAGs}~\cite{pearl_causality_2000}. Causal DAGs provide a simple way of representing causal relationships within a set of variables.
A causal DAG $\cg$ for the variables in $\mathbb{A}$ is a specific type of a Bayesian network and is formally defined as follows:

\noindent
{\bf Causal DAG}. 
A Bayesian network is a DAG $\cg$ in which nodes represent random variables and
edges express direct dependence between the variables. Each node $X_i$ is associated
with the conditional distribution $\pr(X_i
|\pi(X_i))$, where $\pi(X_i)$ is the set of parents of $X_i$ in $\cg$.
The joint distribution over all variables $\pr(X_1,{\dots},X_n)$,
is given by the product of all conditional distributions. That is,
\begin{equation}
	\label{eq:BN}
\setlength{\abovedisplayskip}{-8pt}
\setlength{\belowdisplayskip}{-8pt}
 \small
	\pr(X_1,\dots,X_n)= \prod_{i=1}^n \pr(X_i|\pi(X_i))
\end{equation}
A causal DAG is a Bayesian network where edges signify direct causal influence rather than statistical dependence. We say that $X$ is a potential cause of $Y$ if there is a directed path from $X$ to $Y$. 

\noindent
\subsubsection*{\bf \revb{$d$-Separation}}
\revb{$d$-separation is a criterion in a causal DAG that determines whether two sets of nodes are conditionally independent, given a third set, by checking whether all paths between the sets are ``blocked'' based on specific structural rules. If two sets of nodes are $d$-separated, by definition it means that all paths connecting them are blocked by other nodes. Formally, }
a \e{trail} $t{=}(X_1,{\dots}$, $X_n)$ is a sequence of nodes s.t. there is a a distinct edge between $X_i$ and $X_{i{+}1}$ for every $i$. That is, $(X_i{\rightarrow} X_{i+1}){\in} \edges(\cg)$ or \reva{$(X_i {\leftarrow} X_{i+1}){\in} \edges(\cg)$} for every $i$. A node $X_i$ is said to be \e{head-to-head} with respect to $t$ if $(X_{i-1}{\rightarrow} X_i){\in} \edges(\cg)$ and $(X_{i}{\leftarrow} X_{i+1}){\in} \edges(\cg)$.
A trail $t=(X_1,{\dots},X_n)$ is \e{active} given $\mb{Z}{\subseteq} \mathcal{X}$ if (1) every $X_i$ that is a head-to-head node with respect to $t$ either belongs to $\mb{Z}$ or has a descendant in $\mb{Z}$, and (2) every $X_i$ that is not a head-to-head node w.r.t. $t$ does not belong to $\mb{Z}$. If a trail $t$ is not active given $\mb{Z}$, then it is \e{blocked} given $\mb{Z}$~\cite{pearl_causality_2000}. \revb{In a DAG, two sets of nodes $\mb{X}$ and 
$\mb{Y}$ are $d$-separated by a third set of nodes $\mb{Z}$ if all trails connecting $\mb{X}$ and $\mb{Y}$ are blocked by $\mb{Z}$.}

\subsubsection*{\bf Conditional Independence}
Causal DAGs encode a set of
Conditional Independence statements (\ci) that can be read off the graph using $d$-separation~\cite{pearl_causality_2000}.
These statements describe the absence of an active trail between two sets of variables when conditioning on other variables. \revb{If two sets of nodes $\mb{X}$ and 
$\mb{Y}$ are $d$-separated by $\mb{Z}$, then $\mb{X}$ and 
$\mb{Y}$ are conditionally independent given $\mb{Z}$.}

\begin{example}
\label{ex:CIs}
   \common{ Examples of \ci\ encoded in the causal DAG depicted in Fig. \ref{fig:compatible_graphs}(a) include: ($B \indep_{d} C \mid A$), and ($D \indep_{d} A \mid BC$). \qed}
\end{example}

\subsubsection*{\bf CIs \& Missing Edges}
In causal DAGs, the information encoded by missing edges implies the set of \ci\ the DAG represents. 
Namely, removing edges can undermine the causal model as it implies \ci\ that do not necessarily hold in the distribution. On the other hand, existing edges indicate \emph{potential} causal dependence. This implies that adding edges to a causal DAG, provided acyclicity is maintained, does not necessarily compromise validity~\cite{pearl_causality_2000}.


\subsubsection*{\bf The Recursive Basis}
The \emph{Recursive Basis} (RB)~\cite{DBLP:journals/networks/GeigerVP90} for a causal DAG comprises a set of at most $n$ \ci, signifying that each node is conditionally independent of its non-descendants nodes given its parents. This succinct set of \ci\ holds significance, as it can be used for constructing the causal DAG, and all other \ci\ encoded in the DAG can be deduced from it.
Formally, given a causal DAG $\cg$, let $\langle X_1,{\dots},X_n \rangle$ denote a complete topological order over $\nodes(\cg)$.
Equation~\ref{eq:BN} implicitly encodes a set of $n$ \ci, called the RB for $\cg$, defined as follows:
\begin{equation}
	\label{eq:recursiveSet}
	\sigmarb(\cg)\eqdef\set{(X_i  \indep X_1\dots X_{i-1}{\setminus} \pi(X_i) \mid \pi(X_i)): i\in [n]}
\end{equation}
It has been shown~\cite{DBLP:journals/networks/GeigerVP90,DBLP:conf/uai/GeigerP88,DBLP:conf/uai/VermaP88} that both the semi-graphoid axioms (see Appendix~\ref {app:semigraphoid}) and $d$-separation are sound and complete for inferring \ci\ from the RB, which matches the \ci\ encoded by the causal DAG.

\begin{example}
Consider the causal DAG $\cg_1$ in Fig.~\ref{fig:compatible_graphs}(a). In the nodes' topological order, $A$ precedes $B$ and $C$, which in turn, precedes $D$. The last node is $E$. The RB of $\cg_1$ is given in Table~\ref{tab:recursive_basis}. Given the topological order over the nodes and the RB, $\cg_1$ can be fully constructed. Further, any CI statement encoded in $\cg_1$ can be implied from this RB by using the semi-graphoid axioms. \qed 
\end{example}

\subsubsection*{\bf ATE\& do-Calculus}
The $d$o-operator, a fundamental concept in causal inference, is used to denote interventions on variables in a causal model. 
It represents the intervention on a variable to observe the resulting change in an outcome variable while holding the external factors constant.
In computing the \emph{Average Treatment Effect} (ATE)~\cite{pearl_causality_2000}, a popular measure of causal estimate, the $do$-operator is applied to represent the treatment assignment for treatment and control groups. The ATE quantifies the average causal effect of a treatment $T$ on an outcome variable $O$ in a population:
\begin{equation}
    {\small ATE(T,O) = \mathbb{E}[O \mid do(T=1)] -  
    \mathbb{E}[O \mid do(T=0)]}
\label{eq:ate}
\end{equation}

To compute the causal effect of $T$ on $O$, it is crucial to identify and adjust for confounders. The backdoor criterion~\cite{pearl_causality_2000} provides a sufficient condition by identifying a set of variables $\mathbf{Z}$ that blocks all backdoor paths between $T$ and $O$, enabling confounder adjustment within the causal DAG framework. However, it is part of the $do$-calculus system, an axiomatic framework designed for reasoning about interventions and their effects within causal models. The $do$-calculus comprises three rules that facilitate the substitution of probability expressions containing the $do$-operator with standard conditional probabilities~\cite{pearl_causality_2000}. It provides a systematic method for deriving causal relationships from observational data. Due to its soundness and completeness, the framework offers a broad toolkit for causal inference. Since these concepts are not directly used in this paper, we omit a detailed review.

\section{Problem Formulation}
\label{sec:problem}

Our goal is to distill a causal DAG\footnote{For a discussion of other causal graph formats like mixed graphs, see Appendix \ref{subsec:mix_graphs}} into an interpretable summary by grouping nodes while preserving its utility for causal inference. Thus, the summary DAG should meet the following criteria:\\
\textbf{Size Constraint:} 
The summary DAG should be concise to reduce the cognitive load on analysts~\cite{bhowmick2022data}. We therefore impose a size constraint to enhance the summary DAG's intelligibility, ensuring the core complexity of the original DAG is maintained.

\noindent
\textbf{Preserving Causal Information:} The summary DAG must maintain the causal dependencies present in the original DAG: If variable $A$ has a directed causal path to $B$ in the original DAG, this relationship should be faithfully preserved in the summary DAG. The summary DAG should also preserve the CIs represented in the original DAG. If variables $A$ and $B$ are conditionally independent, this lack of dependence should be reflected in the summary DAG. Lastly, the summary DAG should not introduce any spurious conditional independencies that the original causal DAG does not imply.

\common{Our objective is to preserve the utility of the summary causal DAG for causal inference.}
As mentioned, in causal DAGs, the information encoded by missing edges implies the set of \ci\ the DAG represents. Therefore, removing edges can undermine the causal model as it implies \ci\ that do not necessarily hold in the original DAG. On the other hand, existing edges indicate \emph{potential} causal dependence. This implies that adding edges to a causal DAG, provided acyclicity is maintained, does not necessarily compromise validity. 
We, therefore, rigorously enforce conditions on the summary DAG to ensure that the directionality is faithfully preserved and assert that the summary DAG should preserve, to the greatest extent possible, a subset of the independence assumptions encoded in the original DAG. We show that, with these considerations, the summary causal DAG remains a viable tool for causal inference. 

We first formalize the concept of a summary causal DAG, then rigorously formalize the problem of causal DAG summarization.




\subsection{Summary Causal DAGs}
\label{subsec:qcg}
A \emph{summary graph} is obtained by applying \emph{node contraction} operations~\cite{pemmaraju2003computational}. The resulting graph retains the essential connectivity information of the original graph with a reduced number of nodes.

Given a graph $\cg$, the contraction of a pair of nodes $U, V {\in} \nodes(\cg)$ is the operation that produces a graph $\mathcal{H}$ in which the two nodes $U$ and $V$ are replaced with a single node $\mb C {=} \{U,V\} {\in} \nodes(\qg)$,
where $\mb C$ is now neighbors with nodes that $U$ and $V$ were originally adjacent to (edge directionality is preserved).
If $U$ and $V$ were connected by an edge, the edge is removed upon contraction.

\begin{definition}[Summary-DAG]
	\label{def:summaryDAG}
	A summary DAG of a DAG $\cg$ is a pair $(\qg,f)$, where $\qg$ is a DAG with nodes $\nodes(\qg)$, edges $\edges(\qg)$, and $f:\nodes(\cg) {\rightarrow} \nodes(\qg)$ is a function that partitions the nodes $\nodes(\cg)$ among the nodes $\nodes(\qg)$, such that: If $(U,V){\in} \edges(\cg)$, then $f(U){=}f(V)$ or $(f(U),f(V)){\in} \edges(\qg)$.
We define the inverse $f^{{-}1}{:}\nodes(\qg) {\rightarrow} 2^{\nodes(\cg)}$ as follows:
$f^{{-}1}(X) {\eqdef}\set{V{\in} \nodes(\cg): f(V){=}X}$
\end{definition}
To simplify the notations, we omit $f$ whenever possible.

\begin{figure}
\scriptsize
        \begin{subfigure}[b]{0.06\textwidth} \centering
        \small
        \begin{tikzpicture}[node distance=0.3cm,every node/.style={minimum size=0.5cm}]
            \tikzset{vertex/.style = {draw, circle}}

            \node[vertex] (A) {\scriptsize$A$};
            \node[vertex, below=0.3cm of A] (B) {\scriptsize$\mb{BC}$};
            \node[vertex, below=0.3cm of B] (D) {\scriptsize$D$};
            \node[vertex, below=0.3cm of D] (E) {\scriptsize$E$};

            \draw[->] (A) -- (B);
            \draw[->] (B) -- (D);
            \draw[->] (D) -- (E);
        \end{tikzpicture}
        \caption{$\mathcal{H}_{1}$}
     
    \end{subfigure}
        \begin{subfigure}[b]{0.1\textwidth} \centering
        \begin{tikzpicture}[node distance=0.5cm,every node/.style={minimum size=0.5cm}]
            \tikzset{vertex/.style = {draw, circle}}

            \node[vertex] (A) {\scriptsize$A$};
            \node[vertex, below right=0.3cm of A] (B) {\scriptsize$\mb{BD}$};
            \node[vertex, below left=0.3cm of A] (C) {\scriptsize$C$};
            \node[vertex, below=0.5cm of B] (E) {\scriptsize$E$};

            \draw[->] (A) -- (B);
            \draw[->] (A) -- (C);
            \draw[->] (C) -- (B);
            \draw[->] (B) -- (E);
        \end{tikzpicture}
        \caption{ $\mathcal{H}_{2}$}
     
    \end{subfigure}
     \begin{subfigure}[b]{0.06\textwidth} \centering
     \small
        \begin{tikzpicture}[node distance=0.5cm,every node/.style={minimum size=0.5cm}]
            \tikzset{vertex/.style = {draw, circle}}

            \node[vertex] (A) {\scriptsize$\mb{ABC}$};
            \node[vertex, below=0.5cm of A] (B) {\scriptsize$D$};
      
            \node[vertex, below=0.5cm of B] (E) {\scriptsize$E$};

            \draw[->] (A) -- (B);
            \draw[->] (B) -- (E);
        \end{tikzpicture}
        \caption{$\mathcal{H}_{3}$}
      
    \end{subfigure}
      \begin{subfigure}[b]{0.06\textwidth} \centering
        \begin{tikzpicture}[node distance=0.5cm,every node/.style={minimum size=0.5cm}]
            \tikzset{vertex/.style = {draw, circle}}

            \node[vertex] (A) {\scriptsize$A$};
            \node[vertex, below=0.5cm of A] (D) {\scriptsize$\mb{BC}$};
      
            \node[vertex, below=0.5cm of D] (E) {\scriptsize$\mb{DE}$};

            \draw[->] (A) -- (D);
            \draw[->] (D) -- (E);
        \end{tikzpicture}
        \caption{$\mathcal{H}_{4}$}
       
    \end{subfigure}
      \begin{subfigure}[b]{0.10\textwidth} \centering
        \begin{tikzpicture}[node distance=0.5cm,every node/.style={minimum size=0.5cm}]
            \tikzset{vertex/.style = {draw, circle}}

           \node[vertex] (A) {\scriptsize$\mathcal{H}_1$};
            \node[vertex, below=0.5cm of A] (D) {\scriptsize$\mathcal{H}_3$};
       \node[vertex, below right=0.5cm of A] (B) {\scriptsize$\mathcal{H}_4$};
            \node[vertex, below=0.5cm of D] (E) {\scriptsize$\mathcal{H}_2$};

            \draw[->] (A) -- (D);
            \draw[->] (A) -- (B);
            \draw[->] (D) -- (E);
        \end{tikzpicture}
        \caption{Order}
       
    \end{subfigure}
        \vspace{-4mm}
    \caption{
 Summary causal DAGs for $\cg_1$ and the partial order among them.
    }
    \label{fig:quotient_dags}
      \vspace{-3mm}
\end{figure}
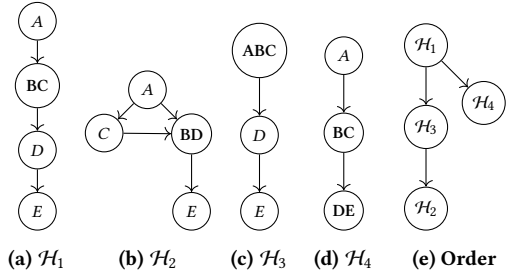




\begin{example}
Consider Fig. \ref{fig:compatible_graphs}(a) which depicts a DAG $\cg_1$. After contracting $B$ and $C$, the resulting summary DAG $\qg_1$ is displayed in Fig. \ref{fig:quotient_dags}(a). In $\qg_1$, the nodes $B$ and $C$ have been contracted into the node $\mb{BC}$. Namely, $f(B) = f(C) = \mb{BC}$, and $f^{-1}(\mb{BC}) = \set{B,C}$.\qed
\end{example}

A causal DAG $\cg$ is said to be \emph{compatible} with a summary DAG $\qg$, if, there exists a function $f$ that partitions the nodes $\nodes(\cg)$ among the nodes $\nodes(\qg)$, such that: If $(U,V){\in} \edges(\cg)$, then $f(U){=}f(V)$ or $(f(U),f(V)){\in} \edges(\qg)$. Namely, $\qg$ is a summary DAG of $\cg$. 

\begin{definition}[Compatibility]
\label{def:compatSD}
	Let $(\qg,f)$ be a summary DAG. A DAG $\cg$ is \e{compatible} with $\qg$ if $\qg$ is a summary DAG for $\cg$. We use $\comp$ to denote the set of all causal DAGs compatible with $\qg$. 
\end{definition}

We also use the term compatibility to describe the relationship between two causal DAGs sharing the same set of nodes, where the edges of one are fully contained in the set of edges of another graph.
Let $\cg$ be a causal DAG and let $\cg'$ be a causal DAG where $\nodes(\cg){=}\nodes(\cg')$. We say that $\cg'$ is a \emph{supergraph} of $\cg$ if $\edges(\cg){\subseteq} \edges(\cg')$. In this case, we also say that $\cg$ is \emph{compatible} with $\cg'$.

\begin{example}
Consider again Fig. \ref{fig:compatible_graphs}. Both $\cg_1$ and $\cg_2$ are compatible with the summary DAG $\qg_1$ shown in Fig. \ref{fig:quotient_dags}(a) (achieved by contracting $B$ and $C$).
However, $\cg_3$ is not compatible with $\qg_1$, since the edge between $D$ and $A$ is not preserved. \qed
\end{example}

We aim to find acyclic summary graphs. Thus, we prove a simple lemma characterizing node contractions that preserve acyclicity.

\begin{lemma}
	\label{lem:contractDAG}
Let $\cg$ be a DAG, and let $V,U {\in} \nodes(\cg)$. Let $\qg_{VU}$ denote the summary graph that results from $\cg$ by contracting $V$ and $U$. Then $\qg_{VU}$ contains a directed cycle if and only if $\cg$ contains a directed path $P$ from $V$ to $U$ (or $U$ to $V$), where $|P|{\geq} 2$.
\end{lemma}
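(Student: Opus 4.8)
The plan is to prove both directions of the biconditional, relying throughout on the key structural fact that contraction alters only the edges incident to the merged node $\mb{C} = \{V, U\}$: every other edge of $\qg_{VU}$ is inherited verbatim from $\cg$. Consequently, any directed cycle of $\qg_{VU}$ that avoided $\mb{C}$ would already be a directed cycle among untouched nodes of $\cg$, contradicting that $\cg$ is a DAG. Hence every directed cycle of $\qg_{VU}$ must pass through $\mb{C}$, and this is the observation I will use repeatedly.

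For the ($\Leftarrow$) direction, suppose $\cg$ has a simple directed path $P$ from $V$ to $U$ with $|P| {\geq} 2$, say $V \to W_1 \to \cdots \to W_k \to U$ with $k {\geq} 1$. Since $P$ is simple, its interior nodes $W_1, \dots, W_k$ are distinct from both $V$ and $U$, so none of the edges of $P$ is the (deleted) edge between $V$ and $U$ and each survives contraction. Identifying $V$ and $U$ with $\mb{C}$ turns $P$ into the closed directed walk $\mb{C} \to W_1 \to \cdots \to W_k \to \mb{C}$, a genuine directed cycle in $\qg_{VU}$ because the $W_i$ differ from $\mb{C}$. A path from $U$ to $V$ is handled symmetrically.

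For the ($\Rightarrow$) direction, assume $\qg_{VU}$ has a directed cycle; by the observation above it passes through $\mb{C}$, so write it as $\mb{C} \to X_1 \to \cdots \to X_m \to \mb{C}$ with the $X_i$ distinct and different from $\mb{C}$. Here $m {\geq} 1$, since a self-loop on $\mb{C}$ would have to arise from an edge between $V$ and $U$, which contraction removes. Every interior edge $X_i \to X_{i+1}$ is inherited from $\cg$; the initial edge $\mb{C} \to X_1$ lifts to $V \to X_1$ or $U \to X_1$, and the final edge $X_m \to \mb{C}$ lifts to $X_m \to V$ or $X_m \to U$. Picking a source $s$ and target $t$ from $\{V, U\}$ witnessing these lifted edges yields a directed walk $s \to X_1 \to \cdots \to X_m \to t$ in $\cg$ of length $m {+} 1 {\geq} 2$.

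The crux is guaranteeing $s {\neq} t$, so that this walk genuinely connects $V$ to $U$ rather than returning to its start; this is exactly where acyclicity of $\cg$ is indispensable. I will argue by cases: if $V \to X_1$ is present, then either $X_m \to U$ is present, giving a path from $V$ to $U$, or else only $X_m \to V$ is present, in which case $V \to X_1 \to \cdots \to X_m \to V$ is a directed cycle in $\cg$ -- impossible; the case where instead $U \to X_1$ is the available initial edge is symmetric and yields a path from $U$ to $V$. Thus every admissible configuration produces a directed path of length ${\geq} 2$ from $V$ to $U$ or from $U$ to $V$, completing the argument. The only routine point to record is that directed paths may be taken simple (or a simple sub-path extracted), which is what ensures the interior nodes are distinct from the endpoints in both directions.
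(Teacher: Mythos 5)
Your proof is correct and follows essentially the same route as the paper's: both argue that any directed cycle in $\qg_{VU}$ must pass through the contracted node, lift its edges back to $\cg$, and use acyclicity of $\cg$ to rule out the degenerate case where the lifted walk starts and ends at the same vertex of $\{V,U\}$, with the converse direction likewise mapping a length-$\geq 2$ path through the contraction. Your write-up is marginally more uniform (it absorbs the paper's $X=Y$ subcase into the general cycle $\mb{C}\to W_1\to\cdots\to W_k\to\mb{C}$ and explicitly notes why self-loops cannot occur), but there is no substantive difference in method.
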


A \emph{summary causal DAG} is a specific type of summary graph obtained through node contraction operations over a causal DAG $\cg$ and ensures acyclicity.

As mentioned, the RB of a causal DAG, as defined by Eq. (\ref{eq:recursiveSet}), comprises a set of at most $n$ \ci\ (where $n {=} |\nodes(\cg)|$), signifying that each node is
conditionally independent of its preceding nodes\footnote{According to a given full topological order of the nodes} given its parents.
This succinct set of \ci\ holds significance,  because it enables the derivation of all other \ci\ represented in the causal DAG.
The RB of a summary causal DAG is defined in a similar manner. The only difference is that in a summary causal DAG, a node may represent a subset of nodes of the original DAG. 

\begin{table}
    \centering
    \scriptsize
        \caption{The recursive bases of the summary DAGs in Figure~\ref{fig:quotient_dags}}
    \label{tab:recursive_basis}
    \vspace{-4mm}
    \begin{tabular}{cc}
         \toprule
        \textbf{Graph} & \textbf{Recursive Basis} \\
        \midrule
        $\cg_1$ & $(C \indep B | A), (D \indep A | BC), (E \indep ABC |D)$  \\
     
           $\mathcal{H}_{1}$ &  $(D \indep A | BC), (E \indep ABC |D)$ \\
     
     $\mathcal{H}_{2}$ &   $(E \indep AC |BD)$ \\
       
        $\mathcal{H}_{3}$ &   $(E \indep ABC |D)$ \\
        
              $\mathcal{H}_{4}$ &   $(DE \indep A |BC)$ \\
          \bottomrule
    \end{tabular}
\vspace{-5mm}
\end{table}

\begin{example}
\label{ex:RB_summary}
    \common{Figure \ref{fig:quotient_dags} displays four summary graphs for the causal DAG in Figure \ref{fig:compatible_graphs}(a). Table \ref{tab:recursive_basis} shows the RBs of these summary causal DAGs. In $\qg_4$, for instance, there are only three nodes and therefore the RB includes a single CI statement}. \qed
\end{example}



\subsection{The Causal DAG Summarization Problem} 
\label{sec:sum}
As mentioned, 
we aim to reduce an input causal DAG by contracting its nodes, retaining maximal causal information.
We covered the two criteria of our problem before proceeding with formalizing it.

\vspace{1mm}
\noindent
{\bf Size Constraint}
A size constraint is a key motivating constraint for graph summarization work and may be imposed on the number of nodes, storage space, minimum description length, etc.~\cite{liu2018graph}. 
We focus on a node-based size constraint as limited-size graphs are generally more accessible for inspection~\cite{bhowmick2022data, huang2009measuring}. \revb{Additionally, setting and adjusting a limit on the number of nodes is shown to be relatively straightforward for analysts~\cite{tian2008efficient,DBLP:journals/pvldb/KhanBB17}. We also observe that other summarization problems share similar hyperparameters, such as many clustering algorithms or k-nearest neighbors~\cite{kramer2013k,likas2003global}.} 



\vspace{1mm}
\noindent
{\bf Causal Information Preservation}
As mentioned, if two variables have a directed path between them in
the original DAG, then this relationship should be faithfully
preserved in the summary DAG. Indeed, this follows from the definition of a summary DAG (Def.~\ref{def:summaryDAG}).

Given two summary DAGs derived from the same causal DAG $\cg$, both adhering to the size constraint,
we prefer the one that preserves, to a larger degree, the set of \ci\ represented in $\cg$. 
To this end, we devise a measure to compare summary DAGs based on their RBs.  
When comparing two summary DAGs $\qg_1$ and $\qg_2$, we assert that $\qg_1$ is \emph{superior} to $\qg_2$ if the RB of $\qg_2$ is implied by the RB of $\qg_1$. Namely, all the \ci\ encoded by $\qg_2$ can also be deduced from $\qg_1$. We are searching for a maximal summary causal DAG, namely, that its RB is not implied by any other valid summary DAG. 

\revb{As mentioned, a summary DAG should not introduce any spurious CIs that the original causal DAG does not imply. However, it may overlook some CIs that are present in the original DAG. We refer to this property as an I-Map. Formally,} 
Let $\Omega{\eqdef} \set{X_1,{\dots},X_n}$ be a set of jointly distributed random variables with distribution $\pr$ (i.e., nodes of the original DAG). Formally,
\begin{definition}[I-Map]
\label{def:imap}
    A DAG $\cg$ is an \e{I-Map} for $\pr$ if for every disjoint sets $\mb{X},\mb{Y}$, and $\mb{Z}$ it holds that $(\mb{X}\indep_{d} \mb{Y} \mid \mb{Z})_{\cg}$ only if $(\mb{X}\indep_{\pr} \mb{Y} \mid \mb{Z})$. 
\end{definition}
Let $\cg_1$ and $\cg_2$ be two DAGs that are I-Maps for $\pr$.
We say that $\cg_2$ is superior to $\cg_1$, in notation $\cg_2 {\succ} \cg_1$, if for every $\sigma{\in} \sigmarb(\cg_1)$, it holds that $\sigmarb(\cg_2) {\implies} \sigma$.
Note that the relation $\succ$ does not necessarily form a complete order. We say that $\cg$ is \e{maximal for $\pr$} if $\cg$ is an I-Map for $\pr$, and there does not exist any $\cg'{\in} \mathcal{G}(\pr)$ such that $\cg'{\succ} \cg$. 
\revb{Our goal is to find a summary DAG that is an I-Map for $\pr$ and maximal for $\pr$, given a constraint on the number of nodes}. 

\begin{example}
Consider the causal DAG $\cg_1$ in Fig.~\ref{fig:compatible_graphs}(a). Fig.~\ref{fig:quotient_dags}(a) presents a $4$-size summary DAG $\qg_1$ for $\cg_1$. 
The RBs of both DAGs are shown in Table \ref{tab:recursive_basis}. 
Clearly, $\sigmarb(\qg_1){\subset} \sigmarb(\cg_1)$, and hence $\qg_1$ is an I-Map for $\pr$. 
Fig.~\ref{fig:quotient_dags}(b) presents $\qg_2$, another $4$-size summary DAG for $\cg_1$, where $\sigmarb(\qg_2){=}\set{(E \indep AC|BD)}$. From the semi-graphoid axioms, it holds that $(E \indep ABC|D) {\implies} (E \indep AC|BD)$. Thus, $\qg_1{\succ} \qg_2$. Hence, $\qg_1$ is a superior summary DAG. 
Similarly, Figures~\ref{fig:quotient_dags}(c) and ~\ref{fig:quotient_dags}(d) illustrate $\qg_3$ and $\qg_4$, $3$-size summary DAGs for $\cg_1$. Their RBs are given in Table~\ref{tab:recursive_basis}. The partial order among all summary DAGs is presented in Fig.~\ref{fig:quotient_dags}(e). Despite $\qg_3$ having only three nodes, it surpasses $\qg_2$. 
However, $\qg_3$ and $\qg_4$ are incomparable, i.e., neither $\sigmarb(\qg_3) {\implies} \sigmarb(\qg_4)$ nor $\sigmarb(\qg_4) {\implies} \sigmarb(\qg_3)$.\qed
\end{example}



\begin{problem}[Causal DAG Summarization]
\label{causal-graph-summarization-problem}
Given a causal DAG $\cg$ defined over a joint distribution $\pr$, and a bound $k$, find a summary causal DAG $\qg$ s.t. (i) the number of nodes in $\qg$ is $\leq k$; (ii) $\cg$ is compatible with $\qg$, \revb{is an I-Map for $\pr$} and is maximal for $\pr$. 
\end{problem}

\begin{example}
\label{ex:problem}
Consider again the causal DAG in Fig. \ref{fig:example_causal_dag}. We set $k {=} 5$. 
Fig. \ref{fig:summary_graphs_cagres} depicts an optimal summary causal DAG. 
Namely, the RB of any other summary causal DAG with 5 or fewer nodes is not superior to RB of this 5-node summary causal DAG.\qed
\end{example}


We show that the \probName\ problem is $NP$-hard via a reduction from the $k$-Max-Cut problem~\cite{hartmanis1982computers}.

\begin{theorem}
\label{theorem:problem-is-np-hard}
    Causal DAG summarization is an NP-hard problem.
\end{theorem}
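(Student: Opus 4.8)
The plan is to give a polynomial-time reduction from $k$-Max-Cut, exploiting the connection (established earlier in the paper) that contracting nodes is tantamount to adding edges, so that \emph{preserving as many \ci\ as possible} $\;\equiv\;$ \emph{adding as few edges as possible under contraction}. The guiding intuition is a dictionary: a partition of the $k$-Max-Cut graph's vertices into parts corresponds to a node-contraction of $\cg$ (each part becomes one super-node of $\qg$), and an edge that is \emph{cut} (its endpoints land in different parts) corresponds to an independence statement that \emph{survives} in $\sigmarb(\qg)$, whereas an uncut edge corresponds to a lost \ci. Thus maximizing the cut should coincide with finding the $\succ$-maximal summary DAG under the node bound, and I would work with the threshold/decision variant of \probName\ (``is there a summary causal DAG on $\le k$ nodes preserving at least $t$ of a designated family of \ci?'') so that NP-hardness of the search problem in Problem~\ref{causal-graph-summarization-problem} follows.

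For the construction, given a $k$-Max-Cut instance $(G=(V,E),k)$ I would build a causal DAG $\cg$ whose sources are a node $X_i$ for each $i\in V$ and which contains, for every edge $\{i,j\}\in E$, a collider $X_i \rightarrow c_{ij} \leftarrow X_j$. This $\cg$ is acyclic by inspection, and its recursive basis encodes exactly that the $X_i$ are mutually (marginally) independent while each $c_{ij}$ depends only on its two endpoints. The node bound would be chosen so that the intended optimum keeps every $c_{ij}$ as a singleton super-node and partitions the $X_i$ into (at most) $k$ super-nodes. Under such a summary, the edge $\{i,j\}$ is cut iff $X_i$ and $X_j$ lie in distinct super-nodes, in which case $c_{ij}$ remains a genuine collider and the marginal independence of the two super-nodes is retained; if $X_i,X_j$ are merged, that independence is destroyed. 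Hence the number of retained ``per-edge'' independencies equals the cut value.

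The two directions then read naturally. For the forward direction, a $k$-cut of value $t$ yields a summary DAG that is acyclic (the $X$-super-nodes have no internal edges and the colliders keep directionality, so Lemma~\ref{lem:contractDAG} applies), is compatible with $\cg$, is an I-Map for $\pr$, and retains $t$ of the designated independencies. For the converse, from any summary DAG meeting the bound I would read off the induced partition of $V$ and argue its cut is at least the number of independencies preserved, so that an optimum for \probName\ certifies a maximum cut.

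The main obstacle is the \textbf{normalization argument} that pins the optimum to this canonical form: I must rule out that the optimizer merges collider nodes among themselves (or merges a $c_{ij}$ into an $X$-part) in order to free up node budget and thereby over-separate the $X_i$, which would decouple the objective from the true cut. This requires a careful choice of the node bound together with small padding/anchor gadgets that make any $c$–$c$ or $c$–$X$ merge strictly lose a counted \ci, plus a lemma translating the qualitative $\succ$-maximality (``$\sigmarb(\qg_2)\implies\sigmarb(\qg_1)$'') into the additive count of surviving per-edge independencies. Getting this bookkeeping exactly right---so that ``$\succ$-maximal summary DAG of size $\le k$'' is provably equivalent to ``maximum $k$-cut''---while simultaneously maintaining acyclicity and the I-Map property throughout, is the delicate part of the proof; the remaining steps (polynomiality of the construction, acyclicity of $\cg$, membership of the designated statements in $\sigmarb(\cg)$) are routine.
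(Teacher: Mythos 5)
There is a genuine gap, and you have correctly located it yourself: the ``normalization argument'' you defer is not routine bookkeeping but the step on which your reduction lives or dies, and as stated the construction actually breaks without it. All your collider nodes $c_{ij}$ are sinks, so no directed path of length $\geq 2$ runs between any two of them; by Lemma~\ref{lem:contractDAG} the optimizer may therefore contract colliders among themselves freely without violating acyclicity. With the node budget set to $k+|E|$ (or anything in that regime), merging all $|E|$ colliders into a single super-node leaves only $|V|+1$ nodes, so whenever $|E|\geq |V|-k+1$ a feasible summary keeps \emph{every} $X_i$ a singleton and preserves \emph{all} of your designated per-edge marginal independencies, irrespective of the cut structure of $G$ --- the objective decouples from Max-Cut exactly as you feared. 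Your proposed fix (padding/anchor gadgets making every $c$--$c$ and $c$--$X$ merge strictly costly in the counted objective, plus a lemma converting the qualitative $\succ$-maximality over recursive bases into an additive count of surviving \ci) is precisely what is missing, and neither piece is supplied; without them the two directions of your equivalence do not go through.

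It is worth contrasting this with the paper's proof, which also reduces from $k$-Max-Cut but sidesteps the normalization problem entirely by having no gadget nodes at all: the Max-Cut graph $G$ itself is turned into the causal DAG $D$ by orienting its edges along a fixed total order, and the cost of a $k$-clustering is measured as the number of edges added in the \groundedDAG. Since the canonical construction completes each cluster along the topological order, the within-cluster added edges are exactly the non-edges inside clusters, $\overline{M}_G(\mathcal{S})$; a complement-graph argument (Lemma~\ref{lem:overlineMNPHard}) shows that minimizing $\overline{M}_G$ is as hard as minimizing the within-cluster edge count $M_G$, and the identity $M_G(\mathcal{S})+B_G(\mathcal{S})=|\edges(G)|$ ties that directly to maximizing the cut (Lemma~\ref{lem:MInimizeM(S)}). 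Because every node of the instance is a vertex node, there is nothing to normalize and the counting is immediate. Note also that the paper proves hardness of the quantitative formulation --- minimizing $|\edges(\cg_{\qg})|-|\edges(\cg)|$ --- rather than attempting the translation from $\succ$-maximality to an additive \ci\ count that your plan requires; adopting that same operationalization, and dropping the collider gadgets in favor of counting added edges directly, is the most direct way to repair your argument.
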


\revb{As the proof relies on the relationship between node contractions and the addition of edges, established in the next section, we will explain the intuition behind this theorem in Section \ref{subsec:canonical_DAG}.} 


\section{Node-Contraction as Edge Addition}
\label{sec:csep}
Next, we establish the connection between node contractions and the addition of edges to the input causal DAG. This connection will be used to read off, from a given summary causal DAG, all the \ci\ it encodes. It also serves as a pivotal factor in guiding our algorithm for selecting promising node pairs to merge. Additionally, in Section \ref{sec:identifiability}, we will leverage this connection to demonstrate how causal inference can be directly conducted over summary DAGs.

\revb{We note that the \groundedDAG\ is not an objective of our problem; rather, it serves as a tool to formally define causal inference over summary DAGs and to guide our algorithm in identifying node contractions that minimize information loss.}

\subsection{The Canonical Causal DAG}
\label{subsec:canonical_DAG}
Given a summary causal DAG $\qg$, we define its corresponding \groundedDAG, denoted as $\groundedQG$. In this causal DAG, cluster nodes are decomposed into distinct nodes connected by edges. 
We show that the RB of the \groundedDAG\ is \emph{equivalent} to that of $\qg$.
We first define the notion of equivalence for sets of \ci.

\begin{definition}[CI Sets Equivalence]
   Let $\mb S$ and $\mb T$ denote two sets of \ci\ over the variable-set $\set{X_1,{\dots},X_n}$.
We say that $\mb{S}{\implies} \mb{T}$ if $\mb{S} {\implies} \sigma$ for every CI $\sigma {\in} \mb{T}$. We say that $\mb{S}$ and $\mb{T}$ are \emph{equivalent}, in notation $\mb{S}{\equiv} \mb{T}$, if $\mb{S}{\implies} \mb{T}$ and $\mb{T} {\implies} \mb{S}$. 
\end{definition}

Next, we formally define the notion of the \emph{\groundedDAG} for a given summary DAG.  

\begin{definition}[Canonical Causal DAG]
	\label{def:groundedDAG}
	Let $(\qg,f)$ be a summary DAG for a causal DAG $\cg$. Let $\langle X_1,\dots,X_n \rangle$ denote a complete topological order over $\nodes(\cg)$. We define the \groundedDAG\ associated with $(\qg,f)$, denoted $\groundedQG$ as follows: $\nodes(\groundedQG)=\nodes(\cg)$, and
	\begin{align*}
(X_i,X_j)\in \edges(\groundedQG) &\text{ if and only if }&  (X_i,X_j)\in \edges(\cg) \\
		&\text{ or }& (f(X_i),f(X_j))\in \edges(\qg) \\
		&\text{ or }& f(X_i)=f(X_j) \text{ and }i<j
	\end{align*}
\end{definition}

We observe that, by definition, $\groundedQG$ is compatible with the summary DAG $(\qg,f)$.

\begin{example}
\label{ex:canonical_DAG}
\common{
    Consider Figures \ref{fig:canonical_DAG}(a) and \ref{fig:canonical_DAG}(b) that depict an input causal DAG, and a 3-node summary. Fig. \ref{fig:canonical_DAG}(c) depicts the corresponding \groundedDAG. In the topological order $A$ precedes $B$ which in turn precedes $C$. All nodes within the node $\mb{ABC}$ are connected by edges in the canonical causal DAG, 
    according to the topological order. Since $\mb{ABC}$ is the parent of $D$ in the summary DAG, in the \groundedDAG\ all $A, B$ and $C$ are parents of $D$. Note that Fig. \ref{fig:canonical_DAG}(c) contains two more edges than Fig. \ref{fig:canonical_DAG}(a), which represents conditional independence relationships which are not captured in the 3-node summary graph Figure \ref{fig:canonical_DAG}(b). }\qed
\end{example}

\begin{figure}
 \centering
 \scriptsize
  \begin{subfigure}[b]{0.12\textwidth} \centering
        \begin{tikzpicture}[node distance=0.5cm,every node/.style={minimum size=0.5cm}]
            \tikzset{vertex/.style = {draw, circle}}

            \node[vertex] (A) {$A$};
            \node[vertex, below right=0.3cm of A] (B) {$B$};
            \node[vertex, below left=0.3cm of A] (C) {$C$};
            \node[vertex, below=0.5cm of A] (D) {$D$};
            \node[vertex, below=0.3cm of D] (E) {$E$};

            \draw[->] (A) -- (B);
            \draw[->] (A) -- (C);
            \draw[->] (C) -- (D);
            \draw[->] (B) -- (D);
            \draw[->] (D) -- (E);
        \end{tikzpicture}
         \vspace{-2mm}
        \caption{\common{Causal DAG }}
        \label{fig:g1}
    \end{subfigure}
    \begin{subfigure}[b]{0.13\textwidth} \centering
        \begin{tikzpicture}[node distance=0.5cm,every node/.style={minimum size=0.5cm}]
            \tikzset{vertex/.style = {draw, circle}}

            \node[vertex] (A) {\scriptsize$\mb{ABC}$};
            \node[vertex, below=0.5cm of A] (B) {\scriptsize$D$};
      
            \node[vertex, below=0.5cm of B] (E) {\scriptsize$E$};

            \draw[->] (A) -- (B);
            \draw[->] (B) -- (E);
        \end{tikzpicture}
        \caption{\common{Summary DAG}}
    \end{subfigure}
     \begin{subfigure}[b]{0.18\textwidth} \centering
        \begin{tikzpicture}[node distance=0.5cm,every node/.style={minimum size=0.5cm}]
            \tikzset{vertex/.style = {draw, circle}}

            \node[vertex] (A) {$A$};
            \node[vertex, below right=0.3cm of A] (B) {$B$};
       \node[vertex, below left=0.3cm of A] (C) {$C$};
            \node[vertex, below=0.5cm of A] (D) {$D$};
              \node[vertex, below=0.3cm of D] (E) {$E$};

            \draw[->] (A) -- (B);
            \draw[->] (A) -- (C);
            \draw[->] (B) -- (C);
                 \draw[->] (A) -- (D);
            \draw[->] (B) -- (D);
            \draw[->] (C) -- (D);
             \draw[->] (D) -- (E);
        \end{tikzpicture}
        \caption{\common{Canonical causal DAG}}
      
    \end{subfigure}
        \vspace{-4mm}
    \caption{
\common{A causal DAG, its summary DAG, and the corresponding \groundedDAG}}.
    \label{fig:canonical_DAG}
    \vspace{-3mm}
\end{figure}
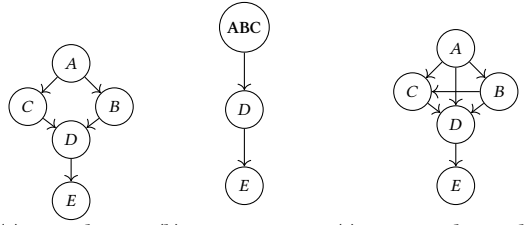

We show that the RB of the \groundedDAG\ $\groundedQG$ is equivalent to that of the summary DAG $\qg$ obtained by node contractions to a causal DAG $\cg$. In other words, node contractions can be conceptualized as the addition of edges to the input causal DAG.


\begin{theorem}
	\label{thm:contractionEdgeAddition}
Let $\qg$ be a summary causal DAG, and $\groundedQG$ is its corresponding \groundedDAG. We have: $\Sigma_{\text{RB}}(\qg)\equiv \Sigma_{\text{RB}}(\groundedQG)$.
\end{theorem}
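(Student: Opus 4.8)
The plan is to reduce the claim to a cluster-by-cluster comparison of the two recursive bases and then derive each basis from the other using only the semi-graphoid axioms (symmetry, decomposition, weak union, contraction). The first step is to fix a convenient topological order. Let $C_1,\dots,C_m$ be a topological order of the cluster nodes of $\qg$. By Definition~\ref{def:summaryDAG} every cross-cluster edge of $\cg$ induces an edge of $\qg$ in the same direction, so all such edges respect this cluster order; ordering the original nodes cluster-by-cluster (in the order $C_1,\dots,C_m$) and breaking ties inside a cluster by any topological order of $\cg$ restricted to it yields a topological order $\langle X_1,\dots,X_n\rangle$ of $\cg$ in which clusters form contiguous blocks. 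Using this order to build $\groundedQG$, each cluster induces a complete DAG (every within-cluster forward edge is added), every node of a parent cluster points to every node of a child cluster, and the original $\cg$-edges are subsumed. A short argument shows $\groundedQG$ is acyclic and that $\langle X_1,\dots,X_n\rangle$ is topological for it too: any directed cycle would project to a closed walk in the acyclic $\qg$, hence stay within one cluster, where all edges increase the index.

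Next I would write both bases explicitly and observe that they decouple over clusters. Fix a cluster $C$, write $P=\bigcup_{C'\in\pi_{\qg}(C)}C'$ for the union of its parent clusters and $S=\mathrm{Pred}(C)\setminus P$ for the earlier-cluster nodes that are not parents, where $\mathrm{Pred}(C)$ is the union of the clusters preceding $C$. The member of $\sigmarb(\qg)$ contributed by $C$ is $(C\indep S\mid P)$. For a node $X_i\in C$, reading parents off $\groundedQG$ gives $\pi_{\groundedQG}(X_i)=C^{<i}\cup P$, where $C^{<i}$ is the set of nodes of $C$ with smaller index; substituting into \eqref{eq:recursiveSet} and using contiguity (so that $\set{X_1,\dots,X_{i-1}}=\mathrm{Pred}(C)\cup C^{<i}$) collapses the ``predecessors-minus-parents'' set to exactly $S$, yielding the member $(X_i\indep S\mid C^{<i}\cup P)$. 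Thus it suffices to prove, for each cluster $C=\set{Y_1,\dots,Y_r}$ indexed increasingly, that the single statement $(C\indep S\mid P)$ is equivalent to the family $\set{(Y_t\indep S\mid \set{Y_1,\dots,Y_{t-1}}\cup P):1\le t\le r}$; taking the union over clusters then gives $\sigmarb(\qg)\equiv\sigmarb(\groundedQG)$.

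The two derivations are now routine. For $\sigmarb(\qg)\Rightarrow\sigmarb(\groundedQG)$, start from $(C\indep S\mid P)$, apply symmetry to get $(S\indep C\mid P)$, decomposition to restrict to $(S\indep\set{Y_1,\dots,Y_t}\mid P)$, and weak union to move $\set{Y_1,\dots,Y_{t-1}}$ into the conditioning set, obtaining $(S\indep Y_t\mid\set{Y_1,\dots,Y_{t-1}}\cup P)$; symmetry finishes each member. For the converse, build the joint statement by iterated contraction: from $(S\indep Y_1\mid P)$ and $(S\indep Y_2\mid\set{Y_1}\cup P)$ contraction gives $(S\indep\set{Y_1,Y_2}\mid P)$, and combining $(S\indep\set{Y_1,\dots,Y_t}\mid P)$ with $(S\indep Y_{t+1}\mid\set{Y_1,\dots,Y_t}\cup P)$ inductively yields $(S\indep C\mid P)$, i.e.\ $(C\indep S\mid P)$ after symmetry. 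Every axiom application is legitimate because $S$, $P$, $C^{<i}$ and the singletons $Y_t$ are pairwise disjoint: the $Y_t$ and $C^{<i}$ lie in $C$ while $S,P\subseteq\mathrm{Pred}(C)$, and $S\cap P=\emptyset$ by construction.

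The main obstacle is the bookkeeping that makes the two bases align term-for-term, rather than the algebra: justifying a contiguous-cluster topological order admissible for both $\cg$ and $\groundedQG$, and computing $\pi_{\groundedQG}(X_i)$ exactly as $C^{<i}\cup P$ so that the difference set collapses to the same $S$ on both sides. Once this is in place, the equivalence is exactly the statement that a joint conditional independence $(C\indep S\mid P)$ factorizes, via the chain rule, into the telescoping family of single-variable independencies, which the semi-graphoid axioms encode. I would also remark that the within-cluster orientation is the only part of $\groundedQG$ that depends on the specific order, and it is immaterial: a complete DAG on a cluster encodes no independence among its own nodes and acts as one block toward the rest, so $\sigmarb(\groundedQG)$ agrees up to $\equiv$ across admissible orders.
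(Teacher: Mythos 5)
Your proof is correct, but it takes a genuinely different route from the paper's. The paper proves the theorem one contraction at a time: for a single merge of $U,V$ into a node $X_{UV}$ it first establishes structural lemmas matching the parents, children, and non-descendants in the contracted graph $\mathcal{H}_{UV}$ with those in the edge-augmented graph $\cg_{UV}$ (Lemmas~\ref{lemma:lemma1} and~\ref{lem:contractionAddedges2}), and then runs a case analysis over the individual members of each recursive basis (whether $X_{UV}$ appears in the statement, whether $X_i$ is a child of $X_{UV}$, or $X_i=X_{UV}$), applying decomposition and contraction in each case; the general statement follows by iterating along the sequence of contractions. You instead argue globally: fix a cluster-contiguous topological order, compute both bases in closed form ($\pi_{\groundedQG}(X_i)=C^{<i}\cup P$, so the single $\qg$-statement $(C\indep S\mid P)$ sits against the telescoping family $(Y_t\indep S\mid Y_{<t}\cup P)$), and invoke the chain-rule factorization --- exactly the identity in Eq.~\eqref{eq:chainRule} --- cluster by cluster. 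Your version is shorter and more transparent: it avoids both the induction over merges and the case analysis, and it makes visible \emph{why} the equivalence holds (a joint CI factorizes into single-variable CIs along any ordering of the cluster). What the paper's incremental formulation buys in exchange is alignment with the algorithm: \algoName\ contracts one pair per step, its cost function counts exactly the edges a single merge adds to the canonical DAG, and the per-merge lemmas are reused in the surrounding proofs. One loose end you should tighten: Definition~\ref{def:groundedDAG} builds $\groundedQG$ from an \emph{arbitrary} complete topological order of $\cg$, which need not be cluster-contiguous, and for a non-contiguous order that order is in general not topological for $\groundedQG$ itself, so the basis must be read off some topological order of $\groundedQG$. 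Your closing remark that the within-cluster orientation is immaterial resolves this, but it is stated informally; it can be made precise by noting that canonical DAGs arising from different admissible orders share skeleton and v-structures --- reorienting the complete subgraph on a cluster creates no new colliders, since every node of $P$ is adjacent to every node of $C$ --- hence are Markov equivalent, so their recursive bases mutually imply one another and the equivalence you prove for the contiguous order transfers to any order.
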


Continuing with Example \ref{ex:canonical_DAG}, the RB of $\mathcal{G}_{\mathcal{H}_3}$ is $(E \indep ABC|D)$, which is identical to that of $\qg_3$ (see Table \ref{tab:recursive_basis}).

\vspace{1mm}
\noindent
\revb{{\bf Proof Intuition for Theorem~\ref{theorem:problem-is-np-hard}}.
In our proof we rely on the connection between a summary DAG and it \groundedDAG. Specifically, 
Theorem~\ref{theorem:problem-is-np-hard} establishes that finding a summary DAG $(\qg,f)$ whose canonical causal DAG $\cg_{\qg}$ results in the smallest number of added edges $|\edges(\cg_{\qg})|-|\edges(G)|$ is NP-Hard. Specifically, our proof shows that finding a summary DAG $(\qg,f)$ where $|\nodes(\qg)|=k$ and $|\edges(\cg_{\qg})|-|\edges(G)|\leq \tau$ for some threshold $\tau >0$ is an NP-complete problem. In fact, we prove the stronger claim that finding a summary DAG $(\qg,f)$ where $|\nodes(\qg)|=k$ and 
\begin{equation}
\label{eq:specialedges}
    \left| \set{(X_i,X_j)\in \edges(\cg_{\qg}){\setminus}\edges(G) : f(X_i)=f(X_j} \right|\leq \tau
\end{equation}
is NP-hard. If $|\edges(\cg_{\qg})|-|\edges(G)|\leq \tau$, then~\eqref{eq:specialedges} must hold as well. We establish that finding a summary DAG where~\eqref{eq:specialedges} holds is NP-Hard, and hence finding a summary DAG where $|\edges(\cg_{\qg})|-|\edges(G)|\leq \tau$ is NP-Hard as well. The full proof is provided in the Appendix.
}


\subsection{$s$-Separation}
\label{sec:ssep}
We introduce the notion of \emph{$s$-separation}, an extension of $d$-separation, tailored to identify \ci\ encoded by a summary DAG. Intuitively, a summary DAG represents a collection of causal DAGs that are compatible with it, meaning that it could have been obtained from any of those DAGs (similar to \emph{possible worlds}~\cite{dalvi2007efficient}). Each of these DAGs encodes a different set of \ci. The set of \ci\ encoded by a summary DAG is the intersection of \ci\ that holds in all compatible DAGs. In this way, we can ensure we restrict ourselves only to \ci\ that are certainly present in a particular context and can be reliably used for inference. 
\revb{$s$-separation extends $d$-separation, which allows the identification of valid CIs within a \emph{summary} DAG. We also introduce a sound and complete $s$-separation algorithm that leverages the standard $d$-separation algorithm.}


The validity of a CI statement, as derived from summary DAG $\qg$, is given by the following definition:

\begin{definition}[Validity of a CI in a summary DAG]
A CI statement is deemed \emph{valid} in a summary causal DAG $\qg$ if and only if it is implied by all causal DAGs within $\comp$.
\end{definition}

\noindent
$s$-separation captures all certain \ci\ that hold across all DAGs in $\comp$. 
We propose the following criterion for $s$-separation to encapsulate this notion of validity. 
\begin{definition}[$s$-separation]
Given a summary DAG $(\qg,f)$ and disjoint subsets $\mathbf{X}, \mathbf{Y}, \mathbf{Z} {\subseteq} \nodes(\qg)$, we say that $\mathbf{X}$ and $\mathbf{Y}$ are $s$-separated in $\qg$ by $\mathbf{Z}$, denoted by $(\mathbf{X} \indep_S \mathbf{Y} \mid \mathbf{Z})_{\qg}$, iff $f^{-1}(\mathbf{X})$ and $f^{-1}(\mathbf{Y})$ are $d$-separated by $f^{-1}(\mathbf{Z})$ in every causal DAG within $\comp$.
\end{definition}

We say that $\mathbf{X}$ and $\mathbf{Y}$ are $s$-connected in $(\qg,f)$ by $\mathbf{Z}$, if there exists a causal DAG $\cg \in \comp$, such that $f^{-1}(\mathbf{X})$ and $f^{-1}(\mathbf{Y})$ are $d$-connected in $\cg$ by $f^{-1}(\mathbf{Z})$.

\subsubsection{\bf $s$-separation Algorithm}
Given a summary causal DAG $\qg$, we aim to derive the set of \ci\ it encodes.
A naive approach would be to employ $d$-separation algorithms~\cite{pearl_causality_2000}. However, $\qg$ can potentially encompass more \ci\ than those discerned through $d$-separation alone, as demonstrated in the following example. 

\begin{example}
\label{ex:without-csep}
Referring back to Fig. \ref{fig:compatible_graphs}, $(B \indep_{d} E \mid D)$, $(C \indep_{d} E $ | $B,D)$, and $(B, C \indep_{d} E \mid D)$ all hold in $\cg_1$ and $\cg_2$.
Likewise, $(\mb{BC} \indep_{d} E \mid D)$ holds in $\qg_1$ (Fig. \ref{fig:quotient_dags}(a)).
However, since $\qg_1$ does not contain $B$ or $C$ as separate nodes, we cannot establish $(B \indep_{d} E \mid D)$ or $(C \indep_{d} E \mid B,D)$ from $\qg_1$ using $d$-separation. \qed
\end{example}
To address this, a simple solution is to find the set of \ci\ shared across all DAGs compatible with $\qg$. 
However, this approach is costly. 
We, therefore, present a simple algorithm for $s$-separation that leverages the connection between a summary DAG and its canonical causal DAG.
This algorithm operates as follows: Given a summary DAG $\qg$, establish a topological order for its nodes.\footnote{The order of nodes within a cluster is considered arbitrary, or it may be determined based on the topological order of the input causal DAG if such information is preserved.} Using this order, construct the \groundedDAG\ $\groundedQG$. Next, apply $d$-separation over $\groundedQG$ and return the resulting CI set.  
We demonstrate that this algorithm is sound and complete.

\eat{
\begin{theorem}[Soundness and Completeness of $d$-separation in supergraphs]
	\label{thm:dSep}
	In \sout{an ADMG} a causal DAG $G'$, let $\mb{X},\mb{Y},\mb{Z}\subseteq \nodes(G')$ be disjoint sets of variables. If $\mb{X}$ and $\mb{Y}$ are $d$-separated by $\mb{Z}$ in $G'$, then in any \sout{ADMG} causal DAG $G$ compatible with $G'$, $\mb{X}$ and $\mb{Y}$ are $d$-separated by $\mb{Z}$ in $G$. That is:
	\begin{align}
		(\mb{X}\indep_d\mb{Y}|\mb{Z})_{G'} \implies (\mb{X}\indep_d\mb{Y}|\mb{Z})_{G}
	\end{align}
	If $\mb{X}$ and $\mb{Y}$ are $d$-connected by $\mb{Z}$ in $G'$, then there exists a compatible \sout{ADMG} causal-DAG $G$, compatible with $G'$, where $X$ and $Y$ are $d$-connected by $Z$ in $G$.
\end{theorem}
}
\begin{theorem}[Soundness and Completeness of $s$-separation]
\label{cor:dsep}
		In a summary DAG $(\qg,f)$, let $\mb{X},\mb{Y},\mb{Z}{\subseteq} \nodes(\qg)$ be disjoint sets of nodes. If $\mb{X}$ and $\mb{Y}$ are $d$-separated by $\mb{Z}$ in $\qg$, then in any causal DAG $\cg {\in} \comp$, $f^{{-}1}(\mb{X})$ and $f^{{-}1}(\mb{Y})$ are $d$-separated by $f^{{-}1}(\mb{Z})$. That is:
  \small{
	\begin{align*} 
		(\mb{X}\indep_d \mb{Y}|\mb{Z})_{\qg} &{\implies} (f^{{-}1}(\mb{X})\indep_d f^{{-}1}(\mb{Y})|f^{{-}1}(\mb{Z}))_{\cg}
  {\implies} (\mb{X}\indep_s \mb{Y}|\mb{Z})_{\qg}
	\end{align*}}
	If $\mb{X}$ and $\mb{Y}$ are $d$-connected by $\mb{Z}$ in $\qg$, then there exists a DAG $\cg {\in} \comp$, s.t $f^{{-}1}(\mb{X})$ and $f^{{-}1}(\mb{Y})$ are $d$-connected by $f^{{-}1}(\mb{Z})$ in $\cg$.
\end{theorem}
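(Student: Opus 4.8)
The plan is to split Theorem~\ref{cor:dsep} into its soundness half (the two displayed implications) and its completeness half (the final sentence), and to observe first that the second displayed implication is nothing but the definition of $s$-separation: once $(f^{-1}(\mb{X}) \indep_d f^{-1}(\mb{Y}) \mid f^{-1}(\mb{Z}))_{\cg}$ is established for every $\cg \in \comp$, $(\mb{X} \indep_s \mb{Y} \mid \mb{Z})_{\qg}$ holds by definition. So the real content is (I) for an arbitrary $\cg \in \comp$, $(\mb{X} \indep_d \mb{Y} \mid \mb{Z})_{\qg}$ implies $d$-separation of the preimages in $\cg$, and (II) the completeness direction.

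I would build the argument on two lemmas. The first, call it the \emph{transfer lemma}, asserts that $d$-separation over the cluster graph $\qg$ coincides with $d$-separation over its canonical causal DAG after clusters are expanded, i.e.\ $(\mb{X} \indep_d \mb{Y} \mid \mb{Z})_{\qg}$ holds iff $(f^{-1}(\mb{X}) \indep_d f^{-1}(\mb{Y}) \mid f^{-1}(\mb{Z}))_{\groundedQG}$ holds. I would derive it from Theorem~\ref{thm:contractionEdgeAddition}: since $\sigmarb(\qg) \equiv \sigmarb(\groundedQG)$ and $d$-separation is sound and complete for the recursive basis, $\qg$ and $\groundedQG$ encode exactly the same conditional independencies, provided each CI over cluster nodes is read as the CI over the union of their member variables. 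The second is the standard \emph{supergraph lemma}: if $\edges(\cg) \subseteq \edges(\cg')$ then every $d$-separation holding in $\cg'$ also holds in $\cg$. I would prove this by the contrapositive on trails, since an active trail in $\cg$ is still a trail in $\cg'$ with identical collider pattern, its non-colliders remain outside the conditioning set, and its colliders keep a descendant in the conditioning set because descendant sets only grow when edges are added.

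With these in hand, soundness (I) is a two-step chain. First I would fix $\cg \in \comp$ and instantiate the canonical DAG $\groundedQG$ of Definition~\ref{def:groundedDAG} using a topological order of $\cg$; every edge of $\cg$ is then either within a cluster (hence forward along that order) or across clusters (hence, by compatibility in Definition~\ref{def:compatSD}, matched by a $\qg$-edge), so Definition~\ref{def:groundedDAG} guarantees $\edges(\cg) \subseteq \edges(\groundedQG)$. Then $(\mb{X} \indep_d \mb{Y} \mid \mb{Z})_{\qg}$ gives $d$-separation of the preimages in $\groundedQG$ by the transfer lemma, and the supergraph lemma pushes it down to $\cg$; since $\cg$ was arbitrary this yields $s$-separation. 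For completeness (II) I would use $\groundedQG$ itself as the witness: it belongs to $\comp$ (noted right after Definition~\ref{def:groundedDAG}), and the contrapositive of the transfer lemma turns $d$-connection of $\mb{X}$ and $\mb{Y}$ given $\mb{Z}$ in $\qg$ into $d$-connection of $f^{-1}(\mb{X})$ and $f^{-1}(\mb{Y})$ given $f^{-1}(\mb{Z})$ in $\groundedQG$.

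The hard part will be the transfer lemma. Theorem~\ref{thm:contractionEdgeAddition} hands us the recursive-basis equivalence, but converting it into a statement about $d$-separation requires care in identifying a CI stated over cluster nodes of $\qg$ with the CI over the expanded member variables in $\groundedQG$, and in checking that the soundness and completeness of $d$-separation for the recursive basis respects this grouping of variables into clusters. Should a self-contained route be preferred, this step can be argued graphically, by projecting active trails of $\groundedQG$ onto clusters (collapsing within-cluster moves) and lifting active trails of $\qg$ via cluster representatives; the fiddly point there is matching collider activation across the expansion, which is precisely why routing through Theorem~\ref{thm:contractionEdgeAddition} is the more economical path.
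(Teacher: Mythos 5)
Your proposal is correct and follows essentially the same route as the paper's proof: the paper likewise converts Theorem~\ref{thm:contractionEdgeAddition}'s recursive-basis equivalence into the $d$-separation equivalence between $\qg$ and $\groundedQG$ (your ``transfer lemma''), then pushes separation down to every compatible $\cg$ via the supergraph lemma (its Lemma~\ref{lem:dSepsupergraph}, proved by exactly your contrapositive trail argument), with $\groundedQG$ serving as the witness for the completeness direction. If anything, you are more careful than the paper, which asserts $\edges(\cg)\subseteq\edges(\groundedQG)$ for arbitrary compatible $\cg$ without spelling out, as you do, that the canonical DAG should be instantiated with a topological order of $\cg$.
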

\section{The \algoName\ Algorithm}
\label{sec:algorithm}

\reva{As demonstrated in Theorem \ref{theorem:problem-is-np-hard}, the \probName\ problem is NP-hard and therefore it is not trivial to devise an efficient algorithm with theoretical guarantees. We, therefore, introduce a heuristic algorithm, named \algoName\, for the \probName\ problem. Although lacking theoretical guarantees, \algoName\ effectively meets the size constraint and produces summary causal DAGs that can be directly used for sound causal inference.}
A brute force approach explores all summary DAGs with up to $k$ nodes.
It finds the optimal summary DAG, but runs in exponential time 
due to the exponential number of potential graphs. \algoName\ addresses this by estimating the merging effect on the \groundedDAG\ rather than iterating over all possible summary DAGs.

\vspace{1mm}
\noindent
{\bf Overview} \revb{The \algoName\ algorithm 
follows a previous line of work ~\cite{tian2008efficient,geisberger2012exact}, where a bottom-up greedy approach is used to identify promising node pairs for contraction. Its main contribution lies in \emph{how} it estimates merge costs, to preserve the causal interpretation of the graph}: It counts the number of edges to be added in the \groundedDAG\ for each node pair (a proxy for the RB's effect, as discussed in Section \ref{sec:csep}). In each iteration, the algorithm contracts the node pair resulting in the minimal number of additional edges. We also introduce optimizations for runtime efficiency, such as semantic constraint, fast low-cost merges, and caching mechanisms.

\vspace{1mm}
The \algoName\ algorithm is given in Algorithm \ref{algo:greedy_search}.
Given a bound $k$ and an input causal DAG, this algorithm iteratively seeks the next-best pair of nodes to be merged, until the size constraint is met (lines 4-15). 
The next-best pair of nodes to merge is the node pair whose contraction has the lowest cost (lines 10-12). The algorithm randomly breaks ties (lines 13-14). 
The GetCost procedure is shown in Algorithm \ref{algo:cost}. The cost of merging two (clusters of) nodes $\mb{U}$ and $\mb{V}$ is equal to the number of edges to be added in the corresponding \groundedDAG: (1) edges to be added between the nodes within the combined cluster $\mb{U} \bigcup \mb{V}$ (lines 3-4), (2) new parents for the nodes in $\mb{U}$ or $\mb{V}$ post-merge (lines 6-11), and (3) new children for the nodes in $\mb{U}$ or $\mb{V}$ after the merge (lines 13-18). 

\setlength{\textfloatsep}{2px}
\SetInd{1.0ex}{1.0ex}
\begin{algorithm}[t]
  \small
  \DontPrintSemicolon
  \SetKwInOut{Input}{input}\SetKwInOut{Output}{output}
  \LinesNumbered
  \Input{A causal DAG $\cg$ and a number $k$. }
  \Output{A summary causal DAG $\mathcal{H}$ with $k$ nodes.} \BlankLine
  \SetKwFunction{SimpleMerges}{\textsc{LowCostMerges}}
  \SetKwFunction{Merge}{\textsc{Merge}}
  \SetKwFunction{IsValidPair}{\textsc{IsValidPair}}
  \SetKwFunction{GetCost}{\textsc{GetCost}}

  $\mathcal{H}  \gets \cg$\\
  $\mathcal{H}  \gets \SimpleMerges(\mathcal{H} )$\\

  \While{ $\text{size}(\mathcal{H}.nodes )>k$ }
  {
    $min\_cost  \gets  \infty$ \\
    $(\boldsymbol{X},\boldsymbol{Y}) \gets Null$\\

    \For{$(\boldsymbol{U}, \boldsymbol{V}) \in \mathcal{H}.\text{nodes}$}
    {
      \If{\IsValidPair($\boldsymbol{U}, \boldsymbol{V}, \mathcal{H}$)}{

        $\text{cost}_{UV} \gets \GetCost(\boldsymbol{U}, \boldsymbol{V}, \mathcal{H})$\\
        \If{$\text{cost}_{UV} < \text{min\_cost}$}{
          min\_cost $ \gets \text{cost}_{UV}$\\
          $(\boldsymbol{X},\boldsymbol{Y}) \gets (\boldsymbol{U},\boldsymbol{V})$
        }

        \If{$\text{cost}_{UV} ==$ min\_cost}{
          Randomly decide if to replace $\boldsymbol{X}$ and $\boldsymbol{Y}$ with $\boldsymbol{U}$ and $\boldsymbol{V}$\\
        }
      }
    }
    \reva{ $\mathcal{H}.\text{Merge}(\boldsymbol{X}, \boldsymbol{Y})$}\\
  }
  \Return $\mathcal{H}$

  \caption{The \algoName\ Algorithm}\label{algo:greedy_search}
\end{algorithm}

\setlength{\textfloatsep}{2px}
\SetInd{1.0ex}{1.0ex}
\begin{algorithm}[t]
  \small
  \DontPrintSemicolon
  \SetKwInOut{Input}{input}\SetKwInOut{Output}{output}
  \LinesNumbered
  \Input{A summary causal DAG $\qg$ and a pair of nodes $\boldsymbol{U}$ and $\boldsymbol{V}$. }
  \Output{The cost of contracting  $\boldsymbol{U}$ and $\boldsymbol{V}$.} \BlankLine
  \SetKwFunction{RemoveSharedParents}{\textsc{RemoveSharedParents}}
  \SetKwFunction{RemoveSharedChildren}{\textsc{RemoveSharedChildren}}
  \SetKwFunction{Merge}{\textsc{Merge}}
  \SetKwFunction{IsValidPair}{\textsc{IsValidPair}}
  \SetKwFunction{HasEdge}{\textsc{HasEdge}}
  \SetKwFunction{GetPredecessors}{\textsc{GetPredecessors}}
  \SetKwFunction{GetSuccessors}{\textsc{GetSuccessors}}

  $\text{cost} \gets 0$\\
  \tcc{\textcolor{blue}{New edges among the nodes in the cluster}}
  \If{$\qg.\HasEdge(\boldsymbol{U},\boldsymbol{V})$ == False}{cost $\gets$ cost $+ size(\boldsymbol{U})\cdot size(\boldsymbol{V})$\\}

  \tcc{\textcolor{blue}{New parents}}
  \revc{
    $\text{parents}_U \gets \qg.\text{\GetPredecessors}(\boldsymbol{U})$\\
    $\text{parents}_V \gets \qg.\text{\GetPredecessors}(\boldsymbol{V})$\\
    $\text{parentsOnlyU} \gets \text{parents}_U \setminus \text{parents}_V$\\
    cost $\gets$ cost $+$ $\text{size}(\text{parentsOnlyU}) {\cdot} \text{size}(\boldsymbol{V})$\\

    $\text{parentsOnlyV} \gets \text{parents}_V \setminus \text{parents}_U$\\
    cost $\gets$ cost $+$ $\text{size}(\text{parentsOnlyV}) {\cdot} \text{size}(\boldsymbol{U})$\\
  }


  \tcc{\textcolor{blue}{New children}}

  \revc{
    $\text{children}_U \gets \qg.\text{\GetSuccessors}(\boldsymbol{U})$\\
    $\text{children}_V \gets \qg.\text{\GetSuccessors}(\boldsymbol{V})$\\
    $\text{childrenOnlyU} \gets \text{children}_U \setminus \text{children}_V$  \\
    cost $\gets$ cost $+$ $\text{size}(\text{childrenOnlyU}) \cdot \text{size}(\boldsymbol{V})$\\
    $\text{childrenOnlyV} \gets \text{children}_V \setminus \text{children}_U$  \\
    cost $\gets$ cost $+$ $\text{size}(\text{childrenOnlyV}) \cdot \text{size}(\boldsymbol{U})$
  }\\


  \Return cost

  \caption{The GetCost Procedure}\label{algo:cost}
\end{algorithm}

\vspace{1mm}
We next propose three optimizations to improve runtime.

\vspace{1mm}
\noindent
{\bf Semantic Constraint}:
We can reduce the search space and ensure that only semantically related variables are merged, thereby supporting semantic coherence in the summary DAG. To achieve this, the user may specify which node pairs are allowed to be merged by providing a semantic similarity matrix and a threshold that indicates the maximum distance between two nodes within a cluster. 
The user can assess the semantic similarity using previous work on semantic similarity~\cite{Harispe2015SemanticSF,mikolov2013efficient} or large language models~\cite{chatgpt}.

Assume a semantic similarity measure 
$sim(\cdot,\cdot)$ that assigns a value between 0 and 1 to a pair of variables.
For a summary DAG 
$\qg$ and a threshold $\tau$, we say that 
$\qg$ satisfies the semantic constraint if, for every cluster $\mb{C} \in \nodes(\qg)$, $sim(V_i, V_j) \geq \tau$ for every $V_i, V_j \in \mb{C}$. This condition is checked in line 8 of the \algoName\ algorithm when validating whether a pair of nodes is suitable for contraction.

\vspace{1mm}
\noindent
{\bf Caching}:
We use two caching mechanisms: one for storing invalid node pairs and another for cost scores. \revc{We demonstrated in Section \ref{subsec:ablation} that these caching mechanisms are effective in reducing runtime.} 

We initialize the invalid pairs cache during a preprocessing phase. An invalid pair is a node pair with semantic similarity \reva{below} the threshold or connected by a directed path of length above $2$ (Lemma \ref{lem:contractDAG}). During \algoName's run, invalid pairs are cached, and each iteration checks the validity of node pairs before computing costs.

The cost of a node pair $\mb{U}, \mb{V}$ remains unchanged after merging another node pair $\mb{X}, \mb{Y}$ if neither $\mb{U}$ nor $\mb{V}$ are neighbors of $\mb{X}$ or $\mb{Y}$. Following the merge of $\mb{X}$ and $\mb{Y}$, we update the cost cache by removing the cost scores of all node pairs involving one of their neighbors.
When calculating the cost for a node pair, we check if the score is in the cache. If not, we compute and add it, ensuring the cache reflects node pair mergers' impact on neighboring pairs.

\vspace{1mm}
\noindent
{\bf Low Cost Merges}:
As pre-processing, we contract node pairs with low costs (line~4). This involves merging nodes that share identical children and parents. Additionally, we merge nodes linked along non-branching paths, each having at most one parent and one child. \revc{In Section \ref{subsec:ablation}, we experimentally show that this optimization benefits small or low-density causal DAGs.}

\vspace{1mm}
\noindent
{\bf Time Complexity}
A single cost computation with $n {=} |\nodes(\cg)|$ takes $O(n)$ due to the maximum number of neighbors a node can have. 
The algorithm undergoes $n{-}k$ iterations, evaluating all node pairs (O($n^2$) such pairs) in the current summary DAG with no more than $n$ neighbors. Thus, the overall time complexity is $O((n{-}k) {\cdot} n^3)$.

\section{Do-Calculus in Summary Causal DAGs}
\label{sec:identifiability}


Next, we show that the rules of $do$-calculus are sound and complete in summary causal DAGs. This is vital to ensure that the summary causal DAGs are effective formats that support causal inference by enabling direct causal inference on the summary DAGs. 
Our proof relies on the equivalence between the RB of a summary DAG and its \groundedDAG\ (Theorem~\ref{thm:contractionEdgeAddition}). This result is not surprising because the \groundedDAG\ is a supergraph of the input causal DAG. Pearl already observed in~\cite{pearl_causality_2000} that: \emph{``The addition of arcs to a causal diagram can impede, but never assist, the identification of causal effects in nonparametric models. This is because such addition reduces the set of $d$-separation conditions carried by the diagram; hence, if causal effect derivation fails in the original diagram, it is bound to fail in the augmented diagram''}.


\vspace{1mm}
Given a causal DAG $\cg$, for a set of nodes $\mb{X} \subseteq \nodes(\cg)$, let $\cg_{\overline{\mb X}}$ denote the graph that results from $\cg$ by removing all incoming edges to nodes in $\mb X$, by $\cg_{\underline{\mb X}}$ the graph that results from $\cg$ by removing all outgoing edges from the nodes in $\mb X$. For a set of nodes $\mb{X}\subseteq \nodes(G){\setminus}\mb{Z}$, we denote by $\cg_{\overline{\mb X}\underline{\mb Z}}$ the graph that results from $\cg$ by removing all incoming edges into $\mb X$ and all outgoing edges from $\mb Z$.

\begin{theorem}[Soundness of Do-Calculus in summary causal DAGs]\label{thr:soundeness-do-calculus-summaryDAGs}
	Let $\cg$ be a causal DAG encoding an interventional distribution $P(\cdot \mid do(\cdot))$, compatible with the summary causal DAG $(\qg,f)$. For any disjoint subsets $\mb{X,Y,Z,W} {\subseteq} \nodes(\qg)$, the following rules hold:%
	{	\small
	\begin{align*}
		{\bf R}_1: & (\mb{Y}\indep \mb{Z}|\mb{X,W})_{\qg_{\overline{\mb{X}}}} \implies  P(\bY \mid do(\bX),\bZ,\bW)=P(\bY\mid do(\bX),\bW)   \\
		{\bf R}_2: & (\bY\indep \bZ|\bX,\bW)_{\qg_{\overline{\bX}\underline{\bZ}}} {\implies} P(\bY | do(\bX),do(\bZ),\bW){=}P(\bY | do(\bX),\bZ, \bW) \\
		{\bf R}_3: & (\bY\indep\bZ|\bX,\bW)_{\qg_{\overline{\bX}\overline{\bZ(\bW)}}} {\implies} P(\bY | do(\bX),do(\bZ),\bW){=}P(\bY| do(\bX), \bW)
	\end{align*}
}%
where, $\bU\eqdef(U)$ for every $U\in \nodes(\qg)$, and $Z(W)$ is the set of nodes in $Z$ that are not ancestors of any node in $W$. 

\end{theorem}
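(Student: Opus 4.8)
The plan is to reduce each summary-level rule to the corresponding classical do-calculus rule of Pearl~\cite{pearl_causality_2000}, applied to the \emph{true} causal DAG $\cg$ that encodes $P$, using soundness of $s$-separation (Theorem~\ref{cor:dsep}) as the bridge between $d$-separation in $\qg$ and $d$-separation in $\cg$. Write $\widetilde{\bX}{\eqdef} f^{-1}(\bX)$, and likewise $\widetilde{\bY},\widetilde{\bZ},\widetilde{\bW}$, and interpret each bold summary-node set in the probability expressions as the union of its member variables (this is the convention $\bU{\eqdef}(U)$). Under this reading the three target consequents are \emph{verbatim} the classical consequents over $\cg$: for $\mathbf{R}_1$, $P(\widetilde{\bY}\mid do(\widetilde{\bX}),\widetilde{\bZ},\widetilde{\bW}){=}P(\widetilde{\bY}\mid do(\widetilde{\bX}),\widetilde{\bW})$, and analogously for $\mathbf{R}_2,\mathbf{R}_3$. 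Since classical do-calculus is sound for $\cg$ with respect to $P$, it suffices to transport each graphical \emph{antecedent} from the mutilated summary DAG to the matching mutilated $\cg$.

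The engine of this transport is that graph surgery preserves compatibility under the \emph{same} partition $f$: deleting edges leaves both acyclicity and $f$ intact, so I only need to check that every matching pair of mutilations keeps $\cg$ compatible with $\qg$ in the sense of Definition~\ref{def:compatSD}. Concretely I would verify (i) $\cg_{\overline{\widetilde{\bX}}}$ is compatible with $\qg_{\overline{\bX}}$; (ii) $\cg_{\overline{\widetilde{\bX}}\underline{\widetilde{\bZ}}}$ with $\qg_{\overline{\bX}\underline{\bZ}}$; and (iii) the corresponding statement for $\mathbf{R}_3$. Each check is the same short edge-chase: an edge $(u,v)$ that survives a base mutilation has its head $v$ outside every ``incoming-deleted'' preimage set and its tail $u$ outside every ``outgoing-deleted'' preimage set, so, using $\widetilde{\bX}{=}\{v: f(v){\in}\bX\}$, its image $(f(u),f(v))$ survives the matching summary mutilation, or else $f(u){=}f(v)$ is an intra-cluster edge that compatibility already permits. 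Granting (i)--(iii), Theorem~\ref{cor:dsep} applied to each mutilated summary DAG yields the mutilated-$\cg$ antecedent, e.g.\ $(\bY\indep\bZ\mid\bX,\bW)_{\qg_{\overline{\bX}}}$ gives $(\widetilde{\bY}\indep\widetilde{\bZ}\mid\widetilde{\bX}{\cup}\widetilde{\bW})_{\cg_{\overline{\widetilde{\bX}}}}$, and Pearl's $\mathbf{R}_1$ then delivers the equality; $\mathbf{R}_2$ is identical with the extra $\underline{\bZ}$/$\underline{\widetilde{\bZ}}$ surgery.

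I expect $\mathbf{R}_3$ to be the main obstacle, since its mutilation deletes incoming edges only to $\bZ(\bW)$, the $\bZ$-nodes that are \emph{not} ancestors of $\bW$, and this ancestor set must be reconciled between $\qg_{\overline{\bX}}$ and $\cg_{\overline{\widetilde{\bX}}}$. The resolution is a one-directional containment, $f^{-1}(\bZ(\bW))\subseteq \widetilde{\bZ}(\widetilde{\bW})$, where $\widetilde{\bZ}(\widetilde{\bW})$ denotes the $\widetilde{\bZ}$-nodes that are not ancestors of any $\widetilde{\bW}$-node in $\cg_{\overline{\widetilde{\bX}}}$: if some $v{\in}\widetilde{\bZ}$ with $f(v){\in}\bZ(\bW)$ were an ancestor of a $\widetilde{\bW}$-node $w$ in $\cg_{\overline{\widetilde{\bX}}}$, then projecting that directed path through $f$ (each edge maps to a summary edge or collapses within a cluster, by the compatibility established above) exhibits a directed walk from $f(v){\in}\bZ$ to $f(w){\in}\bW$ in $\qg_{\overline{\bX}}$, contradicting $f(v){\in}\bZ(\bW)$. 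This containment is exactly what makes the edge-chase of (iii) go through (removing \emph{more} incoming edges on the $\cg$ side only sparsifies it, which neither breaks compatibility nor impedes $d$-separation), after which $s$-separation and Pearl's $\mathbf{R}_3$ close the case. Finally I would note that every mutilated graph remains a DAG, so all invoked results apply, and that this is precisely the ``adding arcs can only shrink the set of $d$-separations'' phenomenon of Section~\ref{sec:identifiability}: soundness transfers without loss, in contrast to identifiability, which can only be lost.
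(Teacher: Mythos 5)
Your proof is correct, and it reaches Pearl's rules on $\cg$ by a bridge that genuinely differs from the paper's. The paper transports each antecedent through the \groundedDAG: Corollary~\ref{corr:mutilatedSummaryDAG} first converts $d$-separation in the mutilated summary $\qg_{\overline{\bX}\underline{\bZ}}$ into $d$-separation in the corresponding mutilated canonical DAG via the RB equivalence of Theorem~\ref{thm:contractionEdgeAddition}, and then descends to $\cg$ using the fact that mutilation preserves the supergraph relation (Lemma~\ref{lem:mutilatedGraphInclusion}, resting on Lemma~\ref{lem:dSepsupergraph}). You instead prove a commutation fact the paper never states --- that $\cg_{\overline{f^{-1}(\bX)}}$ (and its $\overline{\bX}\underline{\bZ}$ and $\overline{\bX}\overline{\bZ(\bW)}$ variants) remains compatible with the correspondingly mutilated $\qg$ under the \emph{same} partition $f$ --- and then invoke Theorem~\ref{cor:dsep} as a black box on each mutilated pair; your edge-chase for this is sound, including the intra-cluster case. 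Since Theorem~\ref{cor:dsep} is itself proved via the canonical DAG and the supergraph lemma, the two routes are cousins, but yours keeps all graph-surgery reasoning at the compatibility level and needs no mutilation-specific corollary, at the price of re-verifying compatibility three times; the paper localizes the surgery reasoning in a single edge-containment lemma that covers all mutilations at once. Your handling of ${\bf R}_3$ is in fact more explicit than the published proof: the paper silently identifies $\bZ(\bW)$ computed in the summary with the set Pearl's rule needs over $\cg$ (absorbing the discrepancy by defining $Z(W)$ relative to the supergraph in Theorem~\ref{thr:do-calculus-supergraphs}), whereas you prove the containment $f^{-1}(\bZ(\bW))\subseteq f^{-1}(\bZ)\setminus \ancs_{\cg_{\overline{f^{-1}(\bX)}}}(f^{-1}(\bW))$ by projecting directed paths through $f$, and close the gap by monotonicity of $d$-separation under edge deletion. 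One small caution there: state the direction explicitly --- deleting incoming edges to the larger set yields a subgraph, and $d$-separation in the denser graph implies it in the sparser one (Lemma~\ref{lem:dSepsupergraph}); the converse fails, so the phrase ``does not impede $d$-separation'' should be sharpened to exactly this implication.
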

	


\begin{theorem}[Completeness of Do-Calculus in summary causal DAGs]
\label{thr:completness-do-calculus}
	Let $(\qg,f)$ be a summary causal DAG for $\cg$, and let $\bX,\bY,\bW,\bZ {\subseteq} \nodes(\qg)$ be disjoint sets of variables. If $\bY$ is $d$-connected to $\bZ$ in $\qg_{\overline{\bX}}$ w.r.t. $\bX\cup \bW$, then there exists a causal DAG $\cg'$ compatible with $\qg$, such that $f(\bY)$ is $d$-connected to $f(\bZ)$ in $\cg'_{\overline{f(\bX)}}$ w.r.t. $f(\bX\cup \bW)$.
\end{theorem}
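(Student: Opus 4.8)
The plan is to reduce the statement to the completeness half of $s$-separation (Theorem~\ref{cor:dsep}), applied not to $\qg$ itself but to the mutilated summary DAG $\qg_{\overline{\bX}}$. Throughout I read the pre-image $f^{-1}(\cdot)$ wherever the statement writes $f(\cdot)$ on a set of cluster nodes, consistent with the $s$-separation definition. First I would observe that $\qg_{\overline{\bX}}$ is again a summary DAG: deleting edges from the DAG $\qg$ leaves a DAG, and its edge set is contained in $\edges(\qg)$, so $(\qg_{\overline{\bX}},f)$ is a legitimate summary DAG in its own right. Hence the hypothesis that $\bY$ and $\bZ$ are $d$-connected given $\bX\cup\bW$ in $\qg_{\overline{\bX}}$ lets me invoke Theorem~\ref{cor:dsep} on $\qg_{\overline{\bX}}$ to obtain a causal DAG $\cg''$ compatible with $\qg_{\overline{\bX}}$ in which $f^{-1}(\bY)$ and $f^{-1}(\bZ)$ are $d$-connected given $f^{-1}(\bX\cup\bW)$.

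The one gap between $\cg''$ and the object I want is that $\cg''$ may still carry edges internal to the clusters of $\bX$, whereas the target graph $\cg'_{\overline{f^{-1}(\bX)}}$ must have \emph{no} incoming edges into $f^{-1}(\bX)$ at all. Here I would use the following structural fact: since $\qg_{\overline{\bX}}$ has no edge entering any cluster of $\bX$, compatibility forces $\cg''$ to have no edge from outside $f^{-1}(\bX)$ into $f^{-1}(\bX)$, so every node of $f^{-1}(\bX)$ is a source with respect to the rest of the graph. I then claim that any active trail in $\cg''$ between $f^{-1}(\bY)$ and $f^{-1}(\bZ)$ given $f^{-1}(\bX\cup\bW)$ avoids $f^{-1}(\bX)$ entirely: if $u\in f^{-1}(\bX)$ were the first cluster node met along such a trail, its neighbor outside the cluster would have to be reached by an \emph{outgoing} edge of $u$ (no external edge enters $u$), making $u$ a non-collider on the trail; but $u$ lies in the conditioning set $f^{-1}(\bX)$, so the trail is blocked there, a contradiction. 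Consequently the witnessing trail uses no edge incident to $f^{-1}(\bX)$, in particular no internal $\bX$-cluster edge; moreover, since no node outside $f^{-1}(\bX)$ can reach $f^{-1}(\bX)$, deleting these edges changes no descendant set relevant to the colliders lying on the trail, so collider activations along it are preserved.

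With this fact in hand I would set $\cg' := \cg''_{\overline{f^{-1}(\bX)}}$, i.e. delete all incoming edges of $f^{-1}(\bX)$ (which are only the internal cluster edges). The previous paragraph shows the trail remains active, so $f^{-1}(\bY)$ and $f^{-1}(\bZ)$ are still $d$-connected given $f^{-1}(\bX\cup\bW)$ in $\cg'$; and since $\cg'$ now has no incoming edges into $f^{-1}(\bX)$, we have $\cg'_{\overline{f^{-1}(\bX)}}=\cg'$, giving exactly the desired $d$-connection in the mutilated graph. Finally $\cg'$ is compatible with $\qg$: its edge set is contained in that of $\cg''$, whose edges already respect $\qg_{\overline{\bX}}$ and hence respect the larger graph $\qg$, so $\cg'\in\comp$. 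The same argument, run over the mutilations $\qg_{\overline{\bX}\underline{\bZ}}$ and $\qg_{\overline{\bX}\,\overline{\bZ(\bW)}}$, yields the analogous witnesses for rules ${\bf R}_2$ and ${\bf R}_3$, which together with soundness (Theorem~\ref{thr:soundeness-do-calculus-summaryDAGs}) gives completeness of do-calculus over summary DAGs.

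I expect the main obstacle to be exactly the middle step: justifying that the $s$-separation witness $\cg''$ can be stripped of its internal $\bX$-cluster edges without destroying the $d$-connection. The subtlety is twofold: one must argue both that the active trail never enters $f^{-1}(\bX)$ (using that these nodes are sources in $\cg''$ and are all conditioned on) and that removing those edges cannot silence a collider elsewhere on the trail (using that nodes outside $f^{-1}(\bX)$ have no descendants inside it). Everything else is bookkeeping about edge-set containment and the fact that mutilating a DAG returns a DAG.
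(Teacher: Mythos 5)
Your proof is correct, and at bottom it selects the same witness as the paper: the paper's own (three-line) proof takes the \groundedDAG{} of the mutilated summary and asserts, ``by Definition~\ref{def:groundedDAG},'' that the active trail lifts; your $\cg''$, obtained by invoking the completeness half of Theorem~\ref{cor:dsep} on $(\qg_{\overline{\bX}},f)$ viewed as a summary DAG in its own right, is (unwinding the paper's proof of that theorem) exactly that canonical DAG, so the two routes coincide at that point. Where your version genuinely differs is that it closes a gap the paper's proof silently elides: the theorem demands $d$-connection in $\cg'_{\overline{f^{-1}(\bX)}}$, a graph with \emph{no} incoming edges into $f^{-1}(\bX)$ at all, whereas the graph delivered by the canonical-DAG step still carries the tournament of internal edges inside each $\bX$-cluster, so these two graphs are not the same and the trail's survival under the extra deletions needs an argument. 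Your middle step supplies precisely that argument, and it is sound: compatibility with $\qg_{\overline{\bX}}$ indeed forbids any edge from outside $f^{-1}(\bX)$ into it; the trail's endpoints lie outside $f^{-1}(\bX)$ by disjointness, so your first-entry argument shows any node of $f^{-1}(\bX)$ on the trail would be a non-collider inside the conditioning set, blocking the trail; and since no node outside $f^{-1}(\bX)$ has descendants inside it, deleting the internal cluster edges can neither remove a trail edge nor deactivate a collider. The remaining bookkeeping ($\edges(\cg')\subseteq\edges(\cg'')$, $\edges(\qg_{\overline{\bX}})\subseteq\edges(\qg)$, hence $\cg'\in\comp$, and $\cg'_{\overline{f^{-1}(\bX)}}=\cg'$) is also right, as is your reading of $f(\bY)$ as the pre-image $f^{-1}(\bY)$. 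Net assessment: same skeleton as the paper, but your write-up makes explicit the trail-avoidance and collider-preservation facts on which the paper's terse ``by definition'' step actually rests, so it is the more self-contained of the two proofs.
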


\noindent
\textbf{ATE Computation over Summary DAGs}:
We outline how to compute ATE (see Section \ref{sec:background}) directly on the summary DAG. If the treatment or outcome is in a cluster node of $\qg$, we estimate $ATE(U,V)$ over the \groundedDAG\ $\groundedQG$. To minimize the adjustment set, $U$ is ordered before all nodes in its cluster in $\groundedQG$. Alternatively, an upper and lower bound can be derived by considering all subsets in $U$'s cluster in $\qg$.

\section{Robustness against
DAG quality}
\label{sec:robustness}

We evaluate the effectiveness of summary DAGs in providing robustness against a flawed input causal DAG. In a case study, we demonstrate that the summary DAG facilitates the handling of errors in the input DAG more effectively than directly examining the causal DAG (which may be overwhelming to the user). This study emphasizes that causal DAG summarization helps address quality issues and increases robustness against misspecifications.

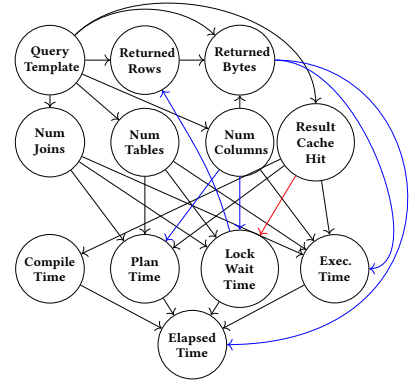
\begin{figure}
\centering

 \scriptsize
 \resizebox{0.7\linewidth}{!}{
       \begin{tikzpicture}[
    every node/.style={
        minimum size=0.5cm, 
        font=\bfseries, 
        align=center, 
         text width=0.8cm 
    },
    vertex/.style={draw, circle}
]
\clip (-1,1) rectangle (6, -5.2);

\node[vertex] at (0,0) (Temp) {\scriptsize Query\\Template};
\node[vertex] at (1.5,0) (Rows) {\scriptsize Returned\\Rows};
\node[vertex] at (3,0) (Bytes) {\scriptsize Returned\\Bytes};

\node[vertex] at (0,-1.3) (Joins) {\scriptsize Num\\Joins};
\node[vertex] at (1.5,-1.3) (Tables) {\scriptsize Num\\Tables};
\node[vertex] at (3,-1.3) (Columns) {\scriptsize Num\\Columns};
\node[vertex] at (4.2,-1.3) (Columns) (Cache) {\scriptsize Result\\Cache\\Hit};

\node[vertex] at (0,-3.3) (Compile) {\scriptsize Compile\\Time};
\node[vertex] at (1.5,-3.3)(Plan) {\scriptsize Plan\\Time};
\node[vertex] at (3,-3.3)(Lock) {\scriptsize Lock\\Wait\\Time};
\node[vertex] at (4.5,-3.3) (Execute) {\scriptsize Exec.\\Time};

\node[vertex] at (2.25,-4.5) (Elapsed) {\scriptsize Elapsed\\Time};

\draw[->] (Temp) -- (Rows);
\draw[->] (Temp) to[out=45, in=135, looseness=0.9] (Bytes);
\draw[->] (Rows) -- (Bytes);
\draw[->] (Temp) -- (Tables);
\draw[->] (Temp) -- (Columns);
\draw[->] (Temp) -- (Joins);

\draw[->] (Tables) -- (Plan);
\draw[->] (Joins) -- (Plan);
\draw[->] (Tables) -- (Lock);
\draw[->] (Joins) -- (Lock);
\draw[->] (Tables) -- (Execute);
\draw[->] (Joins) -- (Execute);
\draw[->] (Columns) -- (Execute);
\draw[->] (Columns) -- (Bytes);

\draw[->] (Temp) to[out=45, in=90, looseness=0.95]  (Cache);
\draw[->] (Cache) -- (Compile);
\draw[->] (Cache) -- (Plan);
\draw[->] (Cache) -- (Execute);
\draw[->, red] (Cache) -- (Lock);

\draw[->] (Compile) -- (Elapsed);
\draw[->] (Plan) -- (Elapsed);
\draw[->] (Execute) -- (Elapsed);
\draw[->] (Lock) -- (Elapsed);

\draw[->, blue] (Lock) to[out=105, in=-60] (Rows);
\draw[->, blue] (Columns) -- (Lock);
\draw[->, blue] (Columns) -- (Plan);
\draw[->, blue] (Bytes) to[out=0, in=0, in looseness=0.7] (Execute);
\draw[->, blue] (Bytes) to[out=0, in=0, out looseness=1.5, in looseness=2.2] (Elapsed);

\end{tikzpicture}}
\vspace{-5mm}
\caption{Modifications to the \redshift{} DAG (Fig. \ref{fig:example_causal_dag}).} \label{fig:editted_redshift_dag}
\end{figure}

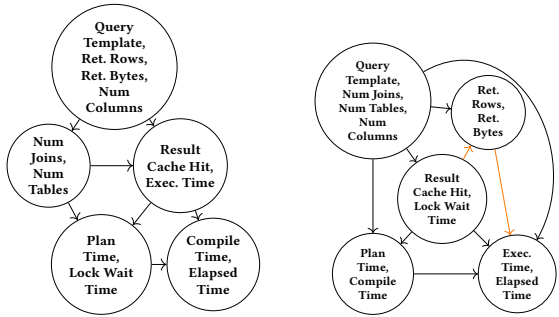
\begin{figure}[t] \scriptsize \centering
    \begin{subfigure}[b]{0.24\textwidth} \centering
    \resizebox{0.8\linewidth}{!}{
         \begin{tikzpicture}[
            node distance=0.5cm,
            every node/.style={
                minimum size=0.7cm, 
                font=\bfseries, 
                align=center, 
            },
            vertex/.style={draw, circle}
        ]

            \node[vertex] at (1,0) (Temp) {\scriptsize Query\\Template,\\Ret. Rows,\\Ret. Bytes,\\ Num\\Columns};
            \node[vertex] at (0, -1.5) (Joins) {\scriptsize Num\\Joins,\\Num\\Tables};
            \node[vertex] at (2, -1.5)(Cache) {\scriptsize Result\\Cache Hit,\\Exec. Time};
            \node[vertex] at (0.8, -3) (Plan) {\scriptsize Plan\\Time,\\Lock Wait\\Time};
            \node[vertex] at (2.5,-3) (Comp) {\scriptsize Compile\\Time,\\Elapsed\\Time};
            
            \draw[->] (Temp) -- (Joins);
            \draw[->] (Temp) -- (Cache);
            \draw[->] (Joins) -- (Cache);
            \draw[->] (Joins) -- (Plan);
            \draw[->] (Cache) -- (Plan);
            \draw[->] (Cache) -- (Comp);
            \draw[->] (Plan) -- (Comp);
        \end{tikzpicture}}
        \caption{Summary After Deletion}
        \label{fig:editted_redshift_dag_summary_deletion}
    \end{subfigure}
    \begin{subfigure}[b]{0.23\textwidth} \centering
    \resizebox{0.9\linewidth}{!}{
       \begin{tikzpicture}[
            node distance=0.5cm,
            every node/.style={
                minimum size=0.7cm, 
                font=\bfseries, 
                align=center, 
            },
            vertex/.style={draw, circle}
        ]

            \node[vertex] at (0,0) (Temp) {\scriptsize Query\\Template,\\ Num Joins,\\Num Tables,\\ Num\\Columns};
            \node[vertex] at (2, -0.2) (Returned) {\scriptsize Ret.\\Rows,\\Ret.\\Bytes};
            \node[vertex] at (1.2, -1.7)(Cache) {\scriptsize Result\\Cache Hit,\\Lock Wait\\Time};
            \node[vertex] at (0, -3) (Plan) {\scriptsize Plan\\Time,\\Compile\\Time};
            \node[vertex] at (2.5,-3) (Exec) {\scriptsize Exec.\\Time,\\Elapsed\\Time};
            
            \draw[->] (Temp) -- (Returned);
            \draw[->] (Temp) -- (Cache);
            \draw[->] (Temp) -- (Plan);
            \draw[->] (Temp) to[out=30, in=60, looseness=1.3] (Exec);
            \draw[->] (Cache) -- (Plan);
            \draw[->] (Cache) -- (Exec);
            \draw[->, orange] (Cache) -- (Returned);
            \draw[->] (Plan) -- (Exec);
            \draw[->, orange] (Returned) -- (Exec);
        \end{tikzpicture}}
        \caption{Summary After Additions}
        \label{fig:editted_redshift_dag_summary_addition}
    \end{subfigure}
    \vspace{-3mm}
    \caption{$5$-node summary DAGs after DAG modifications. }
    \label{fig:editted_redshift_dag_summary}
    
\end{figure}

We revisit the \redshift{} causal DAG (Fig. \ref{fig:example_causal_dag}). For each variable pair, we consulted GPT-4~\cite{openai2023gpt4} about the edge presence and direction, resulting in 55 detected edges. GPT-4 correctly identified 21 of the 23 original edges, inverted 1, and missed 1. It also generated 33 additional edges not in the original DAG. We will demonstrate how causal DAG summarization can reduce the impact of these errors.

\noindent
\textbf{Missing Edges}: Starting from the \redshift{} DAG (Fig.~\ref{fig:example_causal_dag}), we remove the edge GPT-4 failed to detect (\verb|Result Cache Hit| $\rightarrow$ \verb|Lock Wait Time|), marked in red in Fig.~\ref{fig:editted_redshift_dag}. 
As evident in Fig.~\ref{fig:editted_redshift_dag_summary_deletion}, \algoName{} produces the same summary DAG as in Fig.~\ref{fig:summary_graphs_cagres}. The information of which node in the cluster \verb|{Results Cache Hit|, \verb|Exec. Time}| has a directed edge to one of the nodes in the cluster \verb|{Plan Time|, \verb|Lock Time}| is lost upon summarization. Any causal estimation performed over the summary DAG considers all possible causal DAGs compatible with this summary DAG, including once where the edge is included. Thus, the impact of this error is reduced.

\noindent
\textbf{Extraneous Edges}: Starting again from the \redshift{} DAG, we add 5 random edges from the set of redundant edges produced by GPT-4, marked in blue in Fig.~\ref{fig:editted_redshift_dag}. These additional edges reduce the number of \ci\ entailed by the DAG, which can hurt causal inference accuracy. However, manually pruning extraneous edges would require having the user check each of the (now 28) edges in the DAG for correctness. If we instead summarize the DAG using \algoName{} with $k=5$, the user is faced with the simpler, 9-edge summary DAG shown in Fig.~\ref{fig:editted_redshift_dag_summary_addition}. It is sufficient for the user to detect the 2 suspicious orange edges among these 9 to discover 3 of the 5 extraneous edges. The remaining 2 extraneous edges (from \verb|Num Columns| to \verb|Plan Time| and \verb|Lock Wait Time|) are subsumed grouping \verb|Num Columns| together with other, highly semantically similar, query-related features. As such, graph summarization effectively helps address extraneous edges by facilitating their detection.

\section{Experimental Evaluation}
\label{sec:exp}


We empirically demonstrate the following claims:
(\textbf{C1}) Our summary DAGs support reliable causal inference. (\textbf{C2}) Our objective evaluation method effectively determines superior summary DAGs. (\textbf{C3}) \algoName\ outperforms other methods and achieves efficient performance. \revc{(\textbf{C4}) Our proposed optimizations help improve the runtime of \algoName\ without compromising quality.}

\begin{table}[t]
\centering
\footnotesize
\caption{Datasets}
\label{tab:data}
\vspace{-5mm}
\begin{tabular}{lccc}
\toprule
\textbf{Dataset} & \textbf{\# Nodes (Variables)} & \textbf{\# Edges}&\textbf{\# Tuples} \\
\midrule
\redshift& 12&23&9900\\
\flights & 11 & 15 &1M \\
\adult & 13& 48 &32.5K\\
\german & 21 & 43 &1000\\
\accidents & 41& 368 &2.8M\\
\revc{\urls} &60&310&1.7M\\
\bottomrule
\end{tabular}
\end{table}

\subsection{Experimental Setting}\label{sec:exp-setting}
All algorithms are implemented in Python 3.7. Causal effect computation was performed using the DoWhy library~\cite{dowhypaper}. 
The experiments were
executed on a PC with a
$4.8$GHz CPU, and $16$GB memory. Our code and datasets are available at \cite{full}.

\vspace{1mm}
\noindent
{\bf Datasets}.
We examine six datasets, as shown in Table \ref{tab:data}. Five of the datasets are publicly available, while \redshift was collected by running a publicly available benchmark on publicly available cloud resources. We use the DAG in Fig.~\ref{fig:example_causal_dag} for \redshift\ and build the causal DAGs using~\cite{youngmann2023causal} for the remaining datasets.
\textbf{\redshift}: A dataset collected by running queries from the TPC-DS benchmark~\cite{poess2002tpc} on Amazon Redshift Serverless~\cite{redshift-serverless}.
We execute 100 queries from the query benchmark and retrieve the associated entries in the monitoring view~\cite{sysqueryhistory}. 
\textbf{\flights}~\cite{flights}: a dataset describing domestic flight statistics in the US. We enriched it with attributes describing the weather, population, and the airline carriers.
\textbf{\adult}~\cite{adult}: a dataset comprises demographic information of individuals including their education, age, and income.
\textbf{\german}~\cite{asuncion2007uci}: a dataset that contains details of bank account holders, including demographic and financial information.
\textbf{\accidents}~\cite{moosavi2019countrywide}: This dataset includes key factors influencing car accident severity, such as weather and traffic signs.
\textbf{\urls}~\cite{urls}: a dataset containing descriptions of malicious and non-malicious URLs. It encompasses properties such as URL length, the number of digits, and the occurrence of sensitive words.

We also created \textbf{synthetic data} using the DoWhy package~\cite{dowhypaper}, enabling manipulation of node count, edge count, and data size.

\vspace{1mm}
\noindent
{\bf Baselines}.
We examine the following baselines:
\textcolor{red}{\textbf{\brutef}}: This algorithm implements an exhaustive search over all possible summary DAGs that satisfy the constraint yielding the optimal solution.
\textcolor{darkgreen}{\textbf{\ksnap}}~\cite{tian2008efficient}: A general-purpose graph summarization algorithm that employs bottom-up node contractions (akin to \algoName). The primary distinction lies in the objective function: \ksnap\ focuses on ensuring homogeneity among nodes within a cluster.
We have enhanced \ksnap\ to address acyclicity.
\textcolor{darkyellow}{\textbf{\tc}} 
In \cite{tikka2021clustering}, the authors proposed Transit Clusters as a specific type of summary causal DAG that maintains identifiability properties under certain conditions. They introduced an algorithm to identify all transit clusters for a graph. For a fair comparison, we consider the transit cluster that meets the constraints and has the maximal RB.
\textcolor{gray}{\textbf{\cic}}~\cite{niulearning} The authors of \cite{niulearning} proposed a Clustering Information Criterion (CIC) that represents various complex interactions
among variables in a causal DAG. Based on this criterion, they developed a greedy-based approach to learn clustered causal DAGs directly from the data. 
\textcolor{darkpurple}{\textbf{\rand}}: 
As a sanity check, this algorithm generates a random summary DAG that adheres to the size constraint.

\vspace{1mm}
\noindent
{\bf Metrics of evaluation}. 
In some cases, summary DAGs are incomparable, meaning their RBs do not strictly imply one another. To assess quality, we count additional edges in the \groundedDAG\ absent from the original DAG—fewer edges indicate a sparser summary DAG encoding more \ci.




\vspace{1mm}
As a default configuration, we set the size constraint $k$ to $\frac{n}{2}$, where $n$ is the number of nodes in the input causal DAG. The runtime cutoff was set at $1$ hour.

\subsection{Usability Evaluation (C1)}
\label{subsec:usability}

\subsubsection{The utility of the summary causal DAGs for causal inference}
\label{subsec:utility}
We assess the utility of the summary causal DAGs for causal inference. 
To this end, we compare the causal effects estimated within the original DAG with those computed within the summary causal DAGs.
Each causal effect estimation yields an interval (of 95\% confidence). We compare the intervals derived from the input DAG (the ground truth) with those obtained by the baselines. Given that the adjustment sets in the summary DAGs may differ from those in the original DAG, we anticipate getting different intervals.


\noindent
\textbf{Average Percentage Overlap}:
We report the average percentage of overlap of the causal interval across all node pairs connected by a causal path in the input DAG. A higher percentage overlap indicates greater robustness in causal inference. The results for \flights\ and \german\ are shown in Fig. \ref{fig:percentage_ovelap} (similar trends were observed for the other datasets).
\algoName's average percentage overlap is close to that of \brutef, suggesting a high degree of similarity between the two summary DAGs. \algoName\ surpasses all other competitors. This underscores the superior suitability of \algoName\ for causal inference compared to the baselines.  


\begin{figure}
  \centering
   \resizebox{0.44\linewidth}{!}{
  \begin{subfigure}[b]{0.22\textwidth}
    \centering
    \includegraphics[scale=0.21]{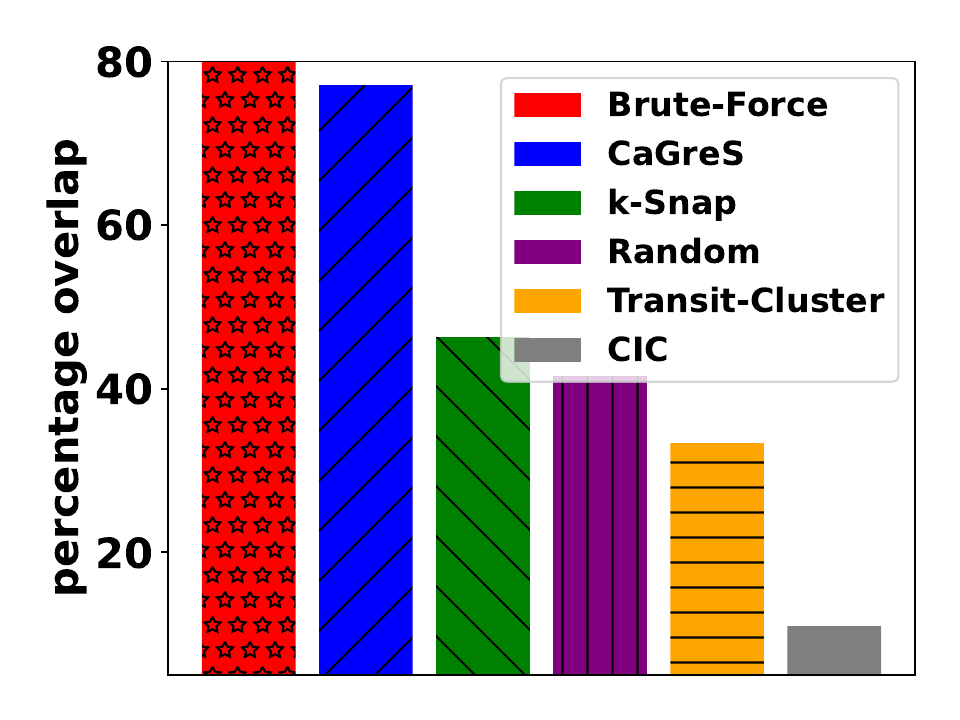}
    \vspace{-3mm}
    \caption{\flights}
    \label{fig:subfig1}
  \end{subfigure}}
 \resizebox{0.44\linewidth}{!}{
  \begin{subfigure}[b]{0.22\textwidth}
    \centering
    \includegraphics[scale=0.21]{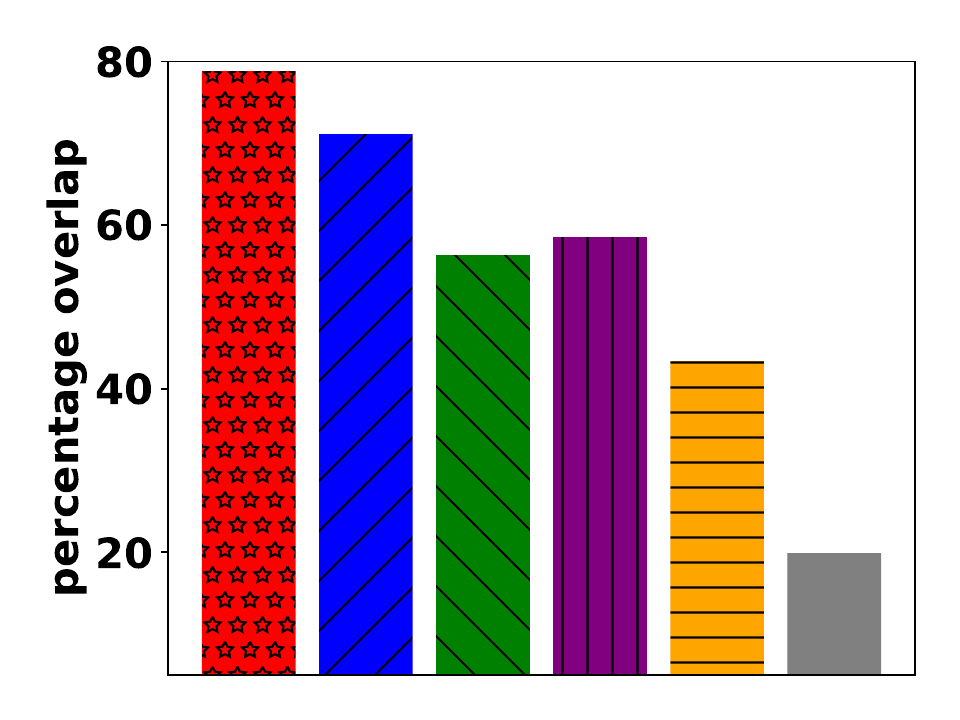}
    \vspace{-3mm}
    \caption{\german}
    \label{fig:subfig2}
  \end{subfigure}}
  \vspace{-4mm}
  \caption{Average percentage overlap with ground truth.  }
  \label{fig:percentage_ovelap}
\end{figure}

In what comes next, we use synthetic data, allowing us to manage the number of nodes in the input DAG and database tuples. We omit from presentation the \brutef, \tc, and \cic\ baselines as they exceeded our time limit cutoff.

\noindent
\textbf{\# of attributes}:
We examine how the number of nodes in the input causal DAG affects the performance. With a larger number of nodes, the task of finding the optimal summary DAG becomes harder. Here, the number of data tuples is fixed at $10$K. 
The results are depicted in Fig. \ref{fig:percentage_ovelap_2}(a). For all baselines, with more data attributes, their alignment with the input causal DAG diminishes. Nevertheless, \algoName\ consistently outperforms the competing methods. 

\begin{figure}
  \centering
   \resizebox{0.43\linewidth}{!}{
  \begin{subfigure}[b]{0.23\textwidth}
    \centering
    \includegraphics[scale=0.23]{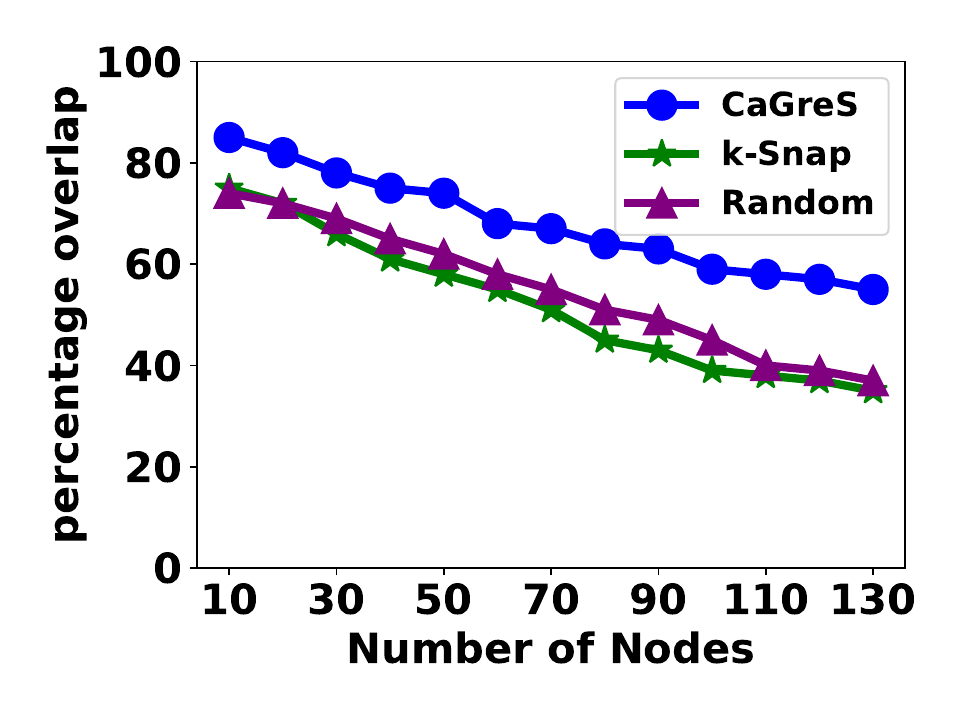}
    \vspace{-4mm}
    \caption{}
    \label{fig:subfig2}
  \end{subfigure}}
       \resizebox{0.43\linewidth}{!}{
  \begin{subfigure}[b]{0.23\textwidth}
    \centering
    \includegraphics[scale=0.23]{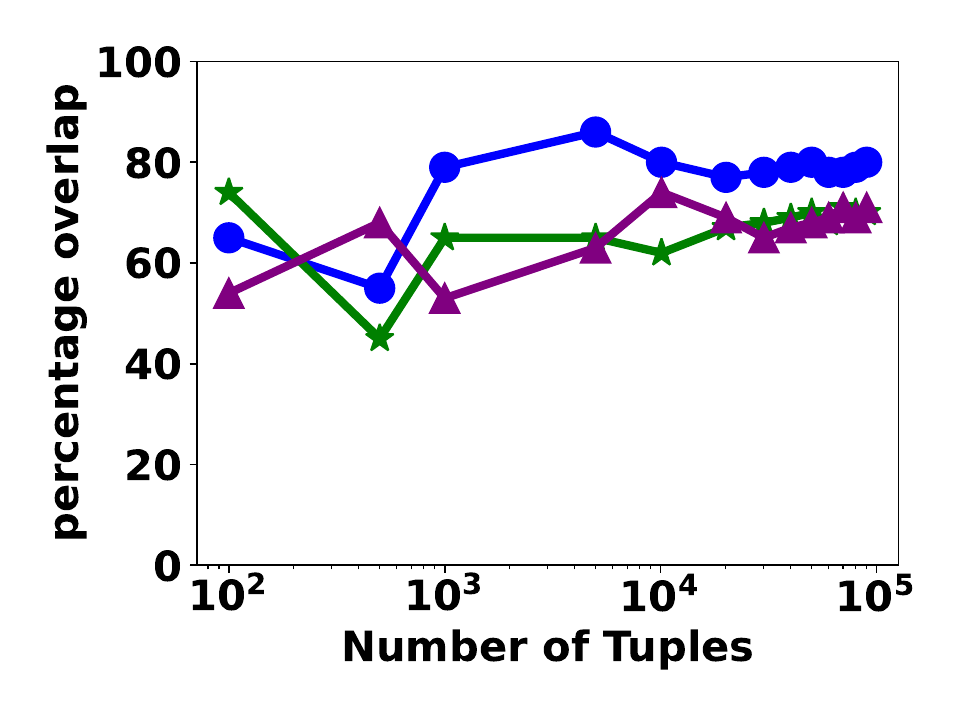}
        \vspace{-4mm}
    \caption{}
    \label{fig:subfig1}
  \end{subfigure}}
  \vspace{-4mm}
  \caption{Average percentage overlap vs. data properties.}
  \label{fig:percentage_ovelap_2}
\end{figure}

\noindent
\textbf{\# of tuples}: 
We analyze the impact of data size on performance, fixing the input causal DAG at 30 nodes. Since causal effects are sensitive to sample size, we expect larger datasets to yield effects on summary DAGs closer to those on the input DAG.  
As shown in Fig. \ref{fig:percentage_ovelap_2}(b), small data sizes produce noisy results, while larger sizes stabilize them.  
\emph{Again, \algoName\ outperforms its competitors}.



\subsection{Quality Evaluation (C2)}
\label{subsec:ci_eval}

\begin{figure}
  \centering
   \resizebox{0.44\linewidth}{!}{
    \begin{subfigure}[b]{0.23\textwidth}
    \centering
    \includegraphics[scale=0.23]{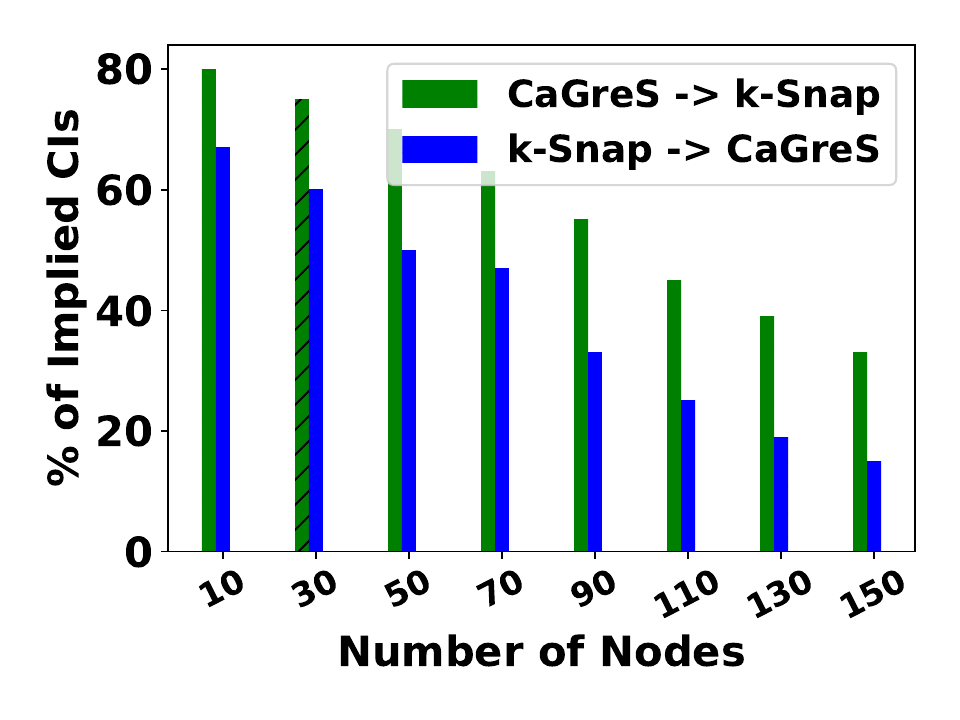}
      \vspace{-4mm}
    \caption{Higher bars are better}
    \label{fig:subfig1}
  \end{subfigure}}
     \resizebox{0.44\linewidth}{!}{
  \begin{subfigure}[b]{0.23\textwidth}
    \centering
    \includegraphics[scale=0.23]{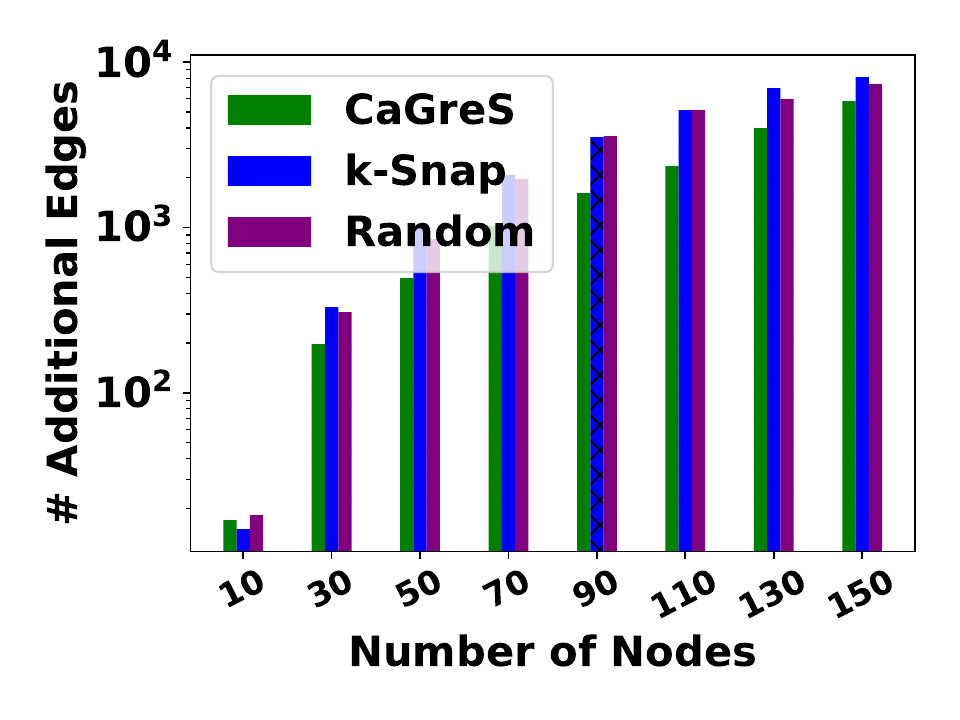}
      \vspace{-4mm}
    \caption{Lower bars are better}
    \label{fig:subfig1}
  \end{subfigure}}

  \vspace{-4mm}
  \caption{Quality metrics vs. the number of nodes.}
  \label{fig:metric}
\end{figure}

When multiple summary DAGs achieve maximal RBs, we use three metrics to identify a superior summary DAG: (1) the percentage of \ci\ in one summary DAG’s RB implied by another; (2) number of additional edges in the \groundedDAG; and (3) the size of adjustment sets in causal estimation, with smaller sets enhancing accuracy. As we show, these metrics are highly correlated.

We generated random causal DAGs with various number of nodes (five DAGs for each node count), while keeping all other parameters fixed.
We omit from presentation the \brutef, \tc, and \cic\ baselines as they exceeded our time cutoff. 
The results are depicted in Fig. \ref{fig:metric}. Fig. \ref{fig:metric}(a) depicts the percentage of \ci\ in the RB of \ksnap\ that are implied by that of \algoName\ and vice versa. Similar trends were observed for \rand. A higher percentage of \ksnap's \ci\ are implied by \algoName\ compared to the percentage of \algoName's \ci\ that are implied by \ksnap.
Hence, while no RB entirely implies the other, we can still conclude that the summary DAG of \algoName\ is superior to that of \ksnap. Fig. \ref{fig:metric}(b) depicts the number of additional edges in the \groundedDAG. \algoName\ consistently yields summary DAGs with fewer edges. 
We also considered the average size of the adjustment sets in the computation of causal estimations (omitted from the presentation).
We report that \algoName\ outperforms the competitors, consistently yielding smaller adjustment sets. 
\emph{Since these metrics are closely interrelated, we deduce that it is appropriate to use the count of additional edges for comparing quality.}


\subsection{Effectiveness Evaluation (C3)}
\label{subsec:exp_effectiveness}
We assess \algoName\ based on quality and runtime performance.

\vspace{1mm}
\noindent
{\bf Case Study: \flights}
We present the pairwise percentage of the \ci\ in the RB implied by all baseline pairs. The results are shown in Table \ref{tab:results_flights}. The summary DAGs obtained by \algoName\ and \ksnap\ are given in Fig. \ref{fig:flights_results} (The optimal summary DAG by \brutef\ is omitted from presentation). \brutef\ yields the most effective summary DAG, as it implies the highest percentage of \ci\ of any other baseline. While 60\% of the \ci\ of \ksnap\ are implied by the RB of \algoName, only 16\% of the \ci\ of \algoName\ are implied by the RB of \ksnap. This superiority of \algoName\ over \ksnap\ is further supported by a lower number of additional edges (7 for \algoName, 13 for \brutef, and 14 for \ksnap). 
Intuitively, this stems from \ksnap's decision to form two $3$-size clusters, connected by an edge. In the resulting \groundedDAG, every pair of nodes within and between the clusters is connected by an edge.

\begin{table}
\scriptsize
  \centering
 
  \caption{Pair-wise percentage of the RB's \ci\ implied.}
  \label{tab:results_flights}
   \vspace{-4mm}
  \begin{tabular}{lcccccc}
    \toprule
    & \brutef & \textcolor{blue}{\algoName} & \ksnap & \rand & \textcolor{darkyellow}{TC} & \cic \\
    \midrule
    \brutef & - & 83.3\% & 50\% & 50\% & 16.6\% & 16.6\% \\
   \textcolor{blue}{\algoName} & 50\%& - & 60\% & 16.6\% & 0\% & 16\% \\
    \ksnap &0\% &16.6\% & - & 50\% & 16.6\% & 0\% \\
    \rand &16.6\% & 0\%&50\% & - & 0\% & 16.6\% \\
    \textcolor{darkyellow}{TC} & 0\%&0\% &16.6\% & 16.6\%& - & 50\% \\
    \cic &0\% &0\% &0\% &16.6\% &0\% & - \\
    \bottomrule
  \end{tabular}
  \vspace{-1mm}
\end{table}

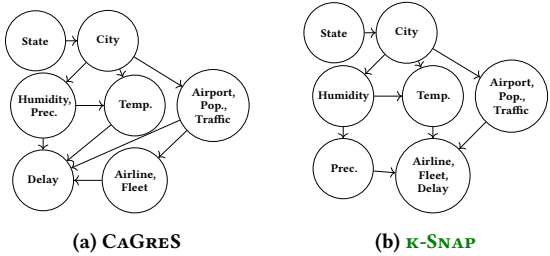
\begin{figure}[t] \scriptsize \centering






    \begin{subfigure}[b]{0.22\textwidth} \centering
 \resizebox{0.85\linewidth}{!}{
             \begin{tikzpicture}[
    node distance=0.5cm,
    every node/.style={
        minimum size=0.8cm, 
        font=\bfseries, 
        align=center, 
         text width=0.9cm 
    },
    vertex/.style={draw, circle}
]

\node[vertex] (State) {\scriptsize State};
\node[vertex, right=0.2cm of State] (City) {\scriptsize City};

\node[vertex, below left=0.5cm of City] (Humidity) {\scriptsize Humidity, Prec.};
\node[vertex, right=0.5cm of Humidity] (Temp) {\scriptsize Temp.};
\node[vertex, below=0.2cm of Humidity] (Prec) {\scriptsize Delay};
\node[vertex, right=0.2cm of Temp] (Airport) {\scriptsize Airport, Pop., Traffic};
\node[vertex, below=0.2cm of Temp] (Airline) {\scriptsize Airline, Fleet};

 \draw[->] (State) -- (City);
   \draw[->] (City) -- (Airport);
    \draw[->] (City) -- (Humidity);
     \draw[->] (City) -- (Temp);
        \draw[->] (Humidity) -- (Temp);
           \draw[->] (Humidity) -- (Prec);

            \draw[->] (Airline) -- (Prec);
                 \draw[->] (Airport) -- (Airline);
          \draw[->] (Temp) -- (Prec);
            \draw[->] (Airport) -- (Prec);
\end{tikzpicture}  }
        \caption{\algoName}
        \label{fig:ci_optimal_graph}
    \end{subfigure}
    \begin{subfigure}[b]{0.22\textwidth} \centering
 \resizebox{0.85\linewidth}{!}{
             \begin{tikzpicture}[
    node distance=0.5cm,
    every node/.style={
        minimum size=0.8cm, 
        font=\bfseries, 
        align=center, 
         text width=0.9cm 
    },
    vertex/.style={draw, circle}
]

\node[vertex] (State) {\scriptsize State};
\node[vertex, right=0.2cm of State] (City) {\scriptsize City};

\node[vertex, below left=0.5cm of City] (Humidity) {\scriptsize Humidity};
\node[vertex, right=0.5cm of Humidity] (Temp) {\scriptsize Temp.};
\node[vertex, below=0.2cm of Humidity] (Prec) {\scriptsize Prec.};
\node[vertex, right=0.2cm of Temp] (Airport) {\scriptsize Airport, Pop., Traffic};
\node[vertex, below=0.2cm of Temp] (Airline) {\scriptsize Airline, Fleet, Delay};

 \draw[->] (State) -- (City);
   \draw[->] (City) -- (Airport);
    \draw[->] (City) -- (Humidity);
     \draw[->] (City) -- (Temp);
        \draw[->] (Humidity) -- (Temp);
           \draw[->] (Humidity) -- (Prec);

            \draw[->] (Prec) -- (Airline);
                 \draw[->] (Airport) -- (Airline);
          \draw[->] (Temp) -- (Airline);
\end{tikzpicture}  }
        \caption{\ksnap}
        \label{fig:ci_optimal_graph}
    \end{subfigure}
    \vspace{-4mm}
    \caption{Summary causal DAGs for the \flights\ dataset.}
    \label{fig:flights_results}

\end{figure}


\begin{figure*}
 \centering
    \begin{subfigure}[b]{1\textwidth}
    \centering
    \includegraphics[width=\textwidth]{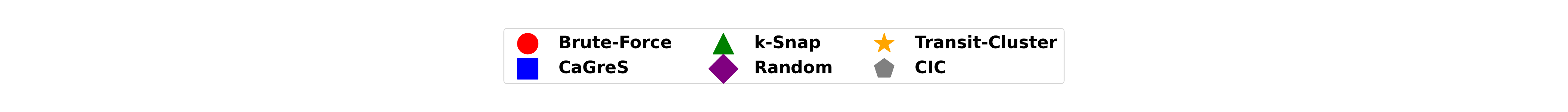}
       \vspace{-7mm}
  \end{subfigure}

  \centering
    \begin{subfigure}[b]{0.2\textwidth}
    \centering
    \includegraphics[width=\textwidth]{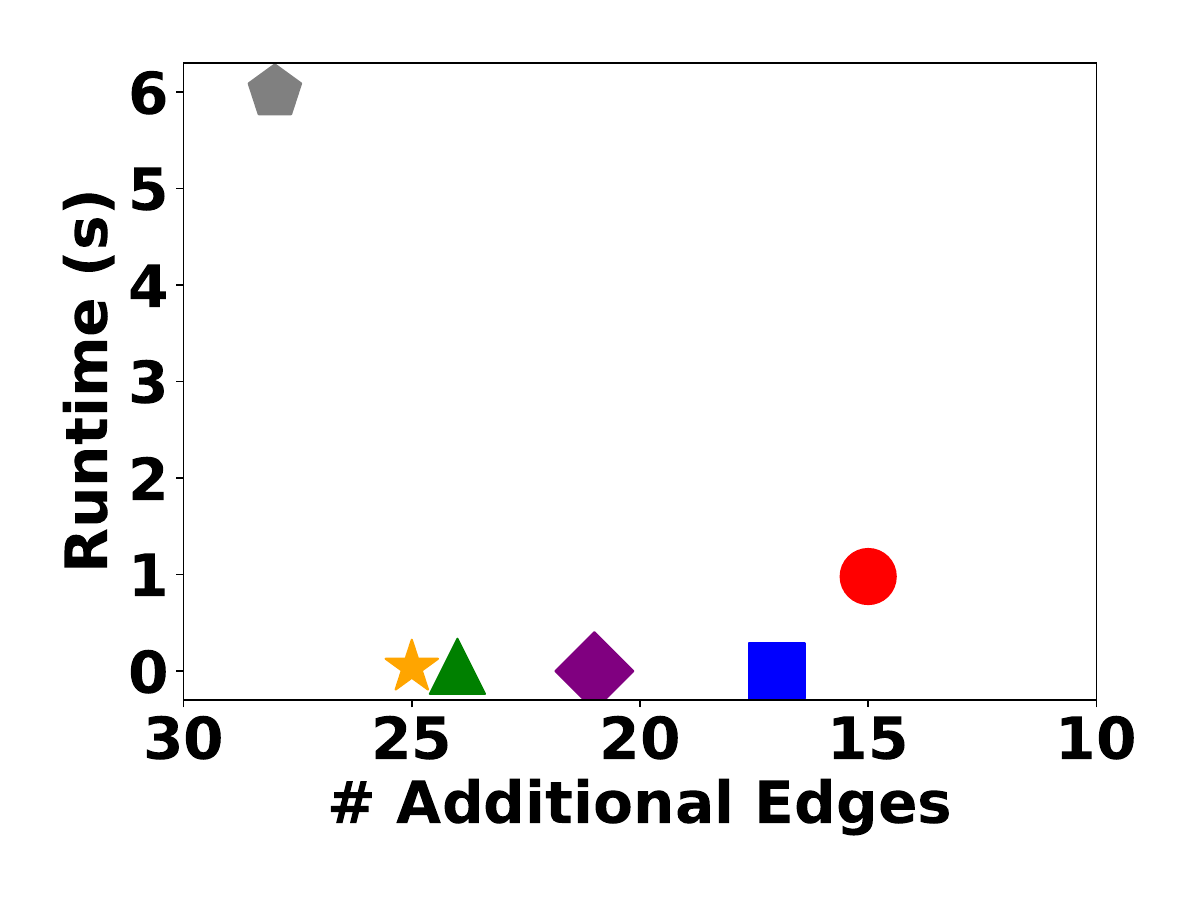}
      \vspace{-8mm}
    \caption{\redshift}
    \label{fig:subfig1}
  \end{subfigure}
    \begin{subfigure}[b]{0.2\textwidth}
    \centering
    \includegraphics[width=\textwidth]{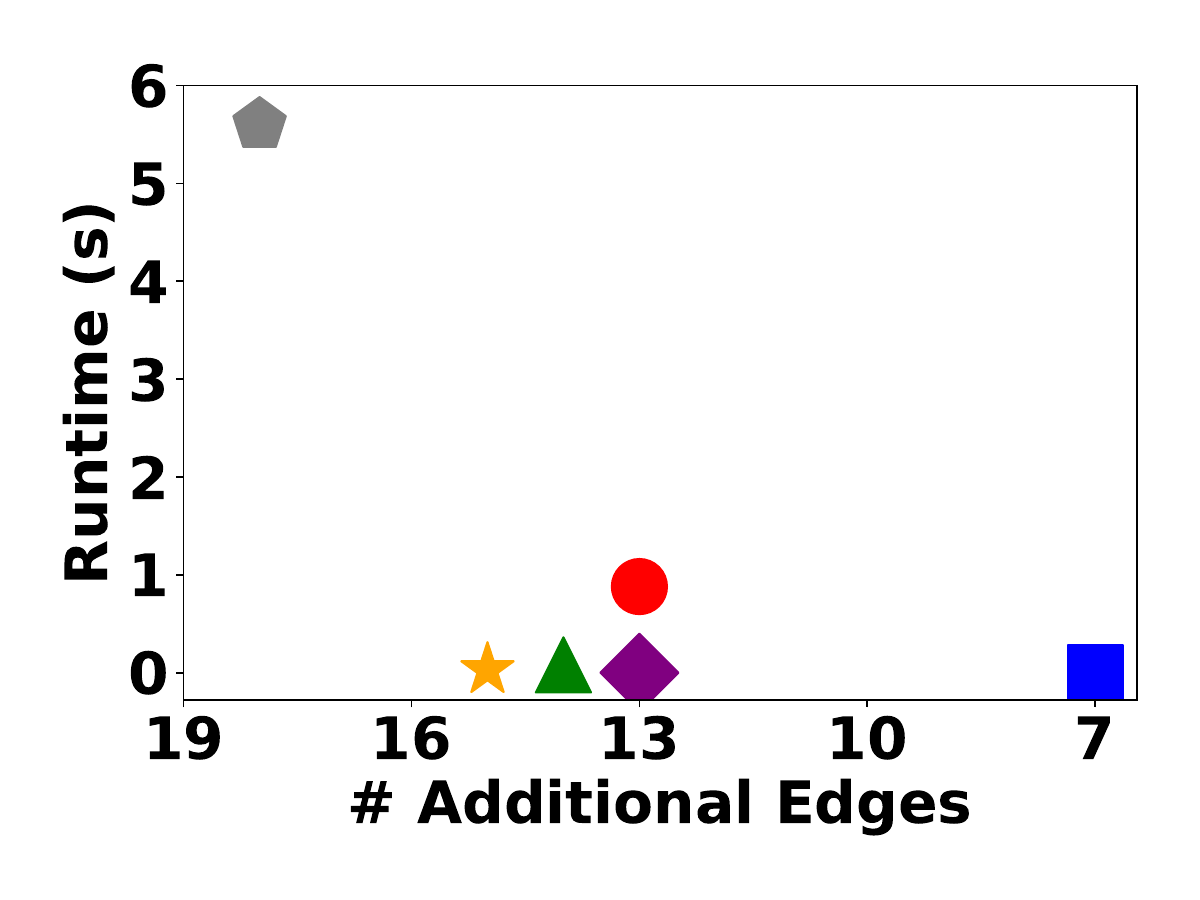}
      \vspace{-8mm}
    \caption{\flights}
    \label{fig:subfig1}
  \end{subfigure}
  \begin{subfigure}[b]{0.2\textwidth}
    \centering
    \includegraphics[width=\textwidth]{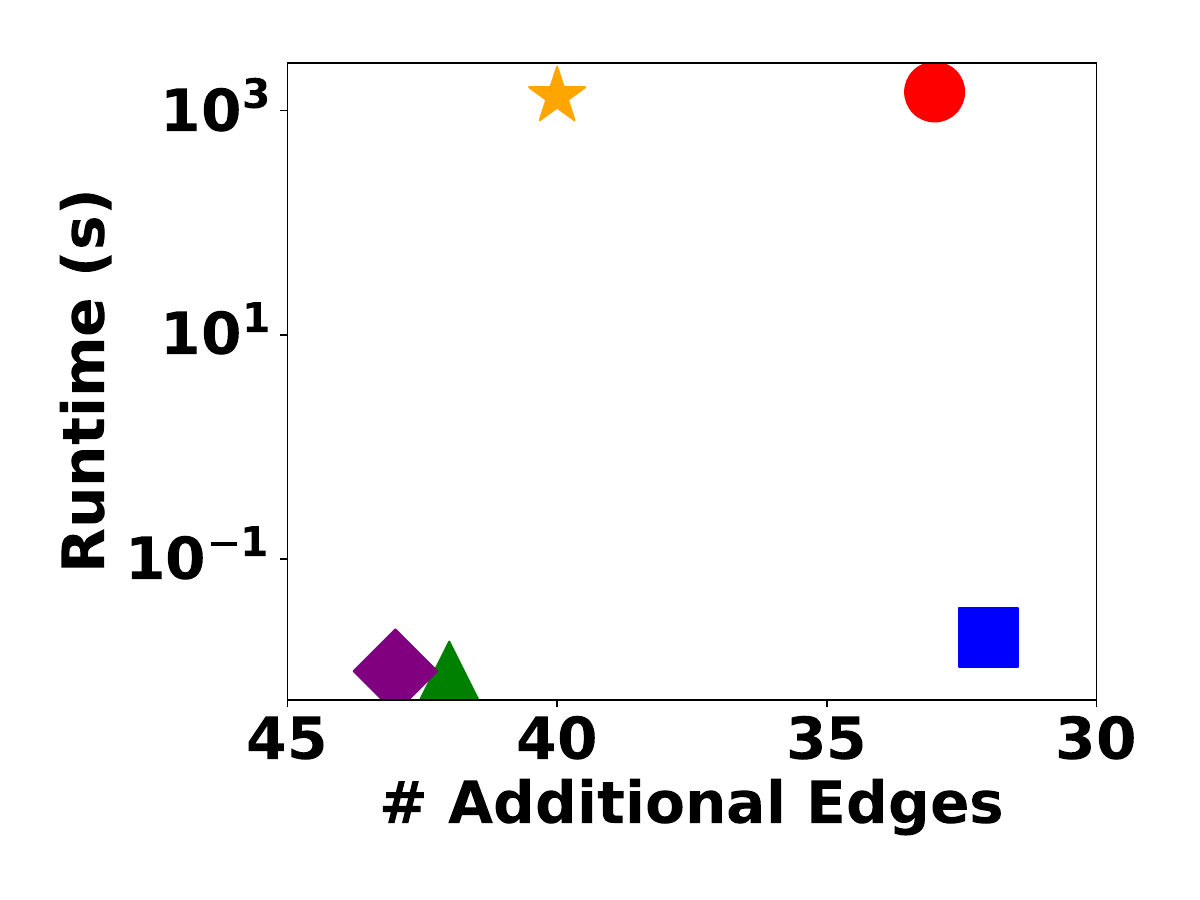}
      \vspace{-8mm}
    \caption{\adult}
    \label{fig:subfig2}
  \end{subfigure}

  \begin{subfigure}[b]{0.2\textwidth}
    \centering
    \includegraphics[width=\textwidth]{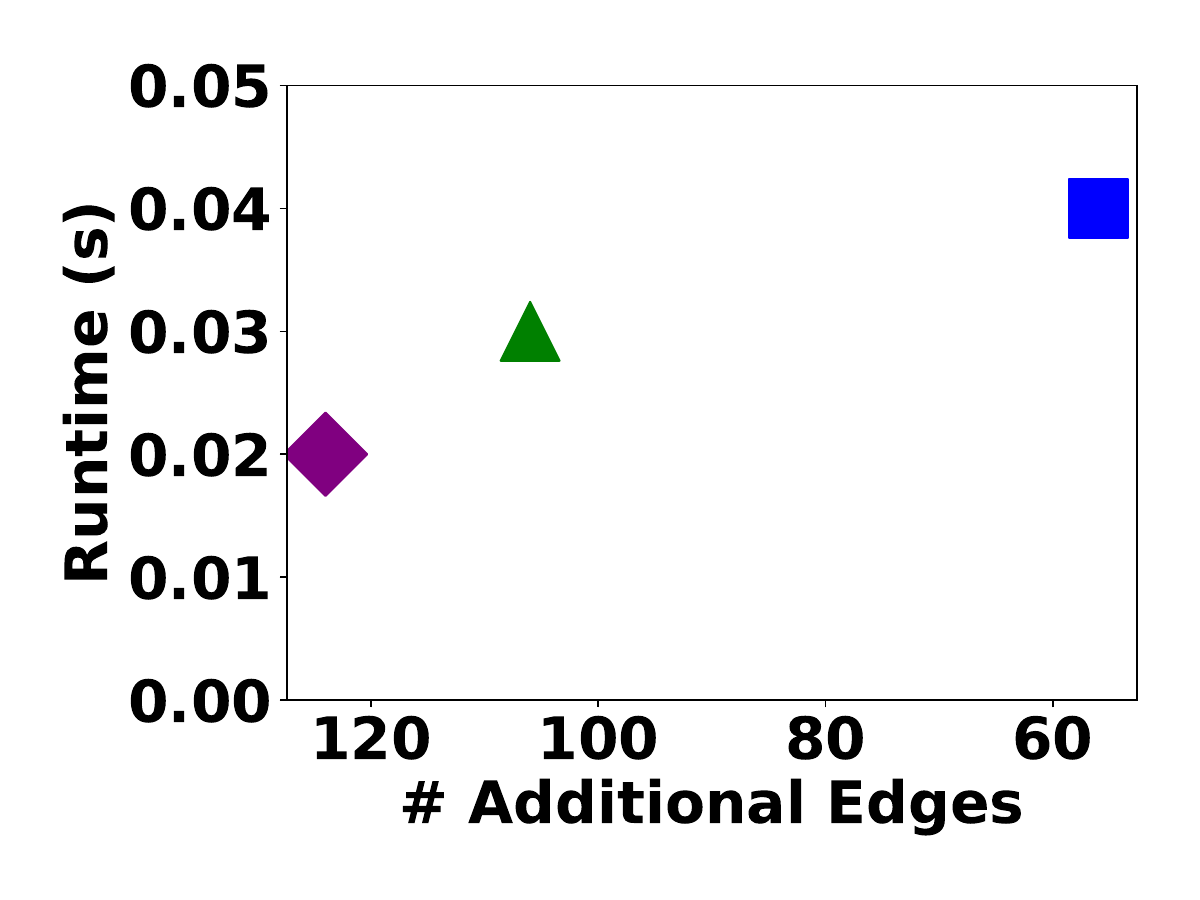}
      \vspace{-8mm}
    \caption{\german}
    \label{fig:subfig3}
  \end{subfigure}
  \begin{subfigure}[b]{0.2\textwidth}
    \centering
    \includegraphics[width=\textwidth]{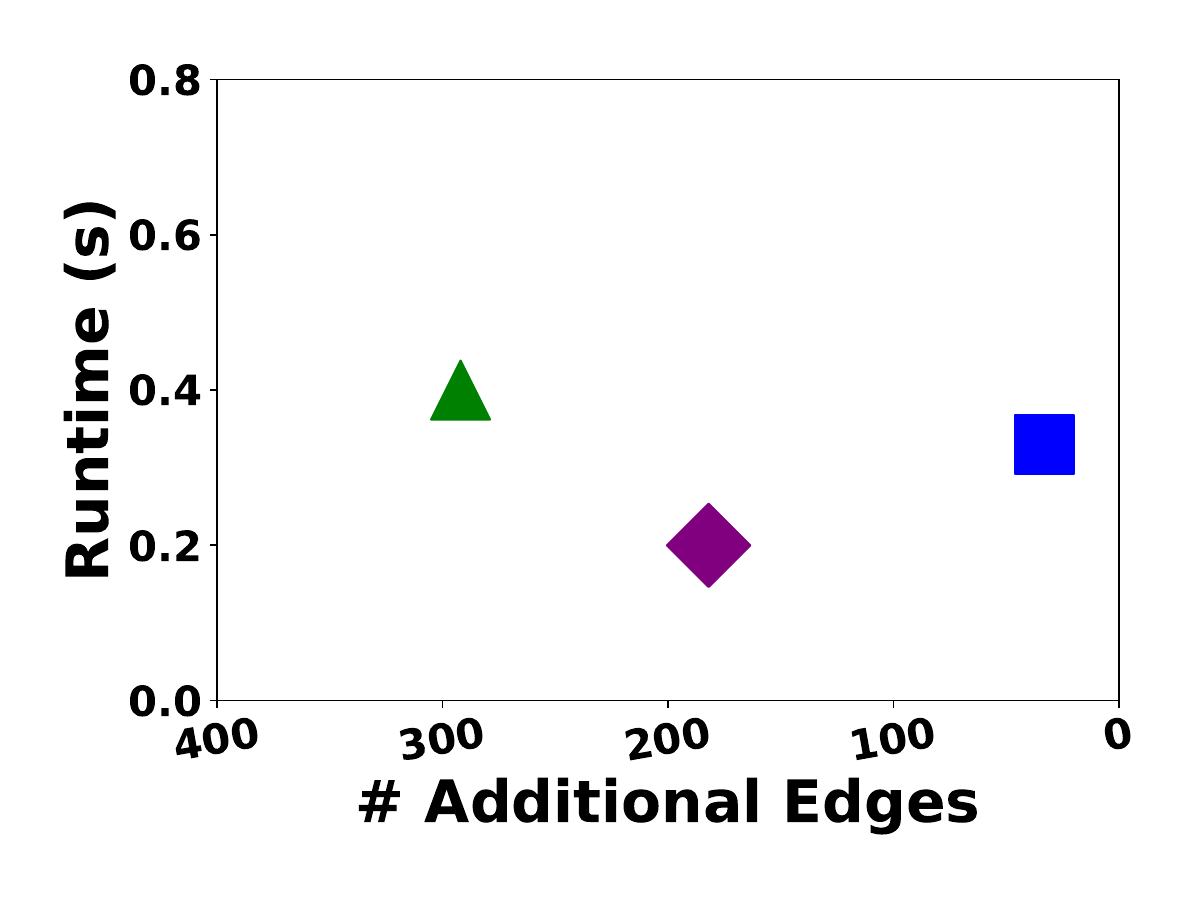}
    \vspace{-8mm}
    \caption{\accidents}
    \label{fig:subfig4}
  \end{subfigure}
   \begin{subfigure}[b]{0.2\textwidth}
    \centering
    \includegraphics[width=\textwidth]{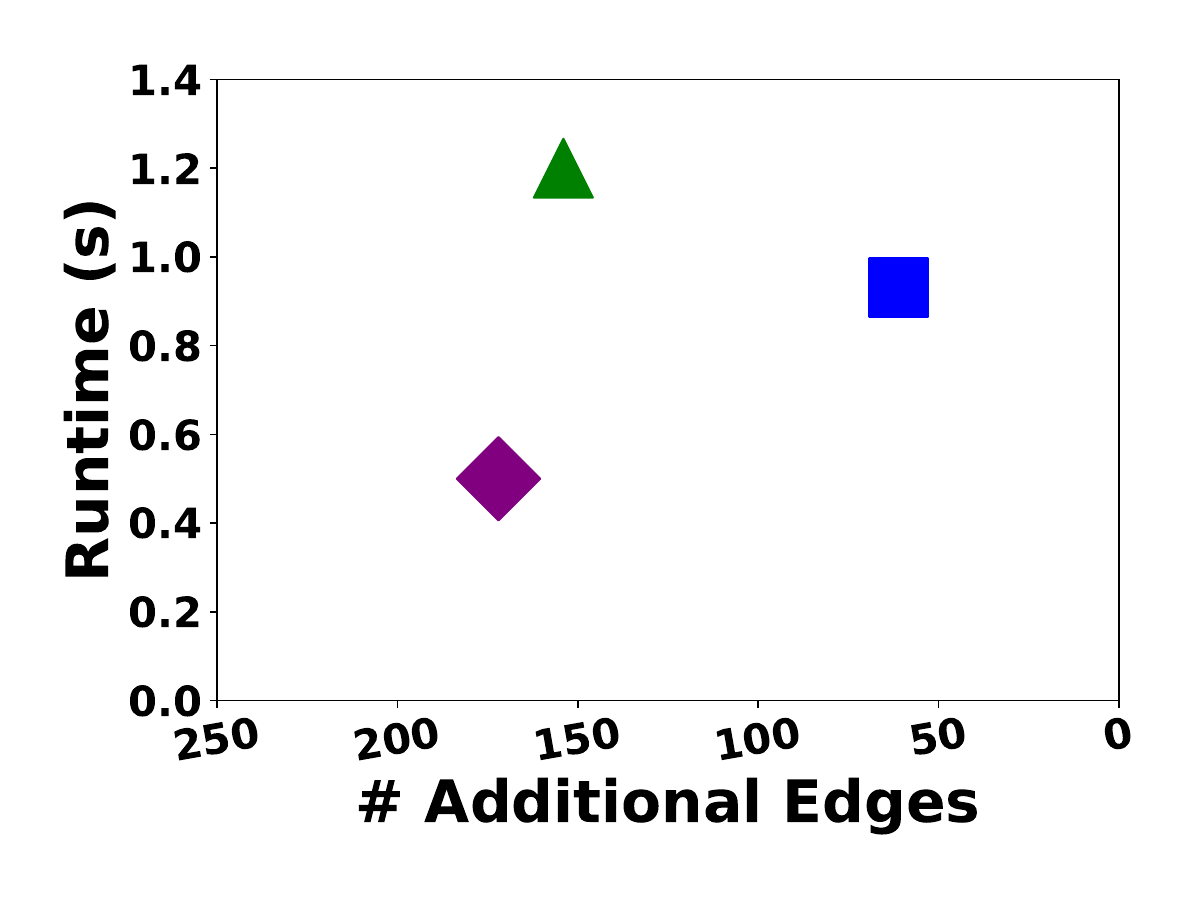}
      \vspace{-8mm}
    \caption{\urls}
    \label{fig:subfig1}
  \end{subfigure}
  \vspace{-4mm}
  \caption{Number of additional edges vs. runtime. The optimal solution should be located in the lower right region. }
  \label{fig:objective_runtime}
  \vspace{-5mm}
\end{figure*}

\vspace{1mm}
Next, for each dataset, we report the runtime and the number of additional edges. The results are depicted in Fig. \ref{fig:objective_runtime}. 
Only \algoName, \ksnap, and \rand\ can handle causal DAGs with more than 20 nodes within a responsive runtime. While \rand\ and \ksnap\ exhibit runtimes comparable to that of \algoName, \algoName\ consistently produces summary DAGs with fewer additional edges. As expected, \brutef\ outperforms \algoName\ in terms of quality but is impractical for interactive interaction. 
\cic\ exhibits relatively low performance, primarily due to a causal discovery component. \tc\ cannot handle large causal DAGs, as the algorithm materializes all transit clusters to select the maximal one.

\begin{figure}
  \centering
   \resizebox{0.44\linewidth}{!}{
  \begin{subfigure}[b]{0.23\textwidth}
    \centering
    \includegraphics[scale=0.23]{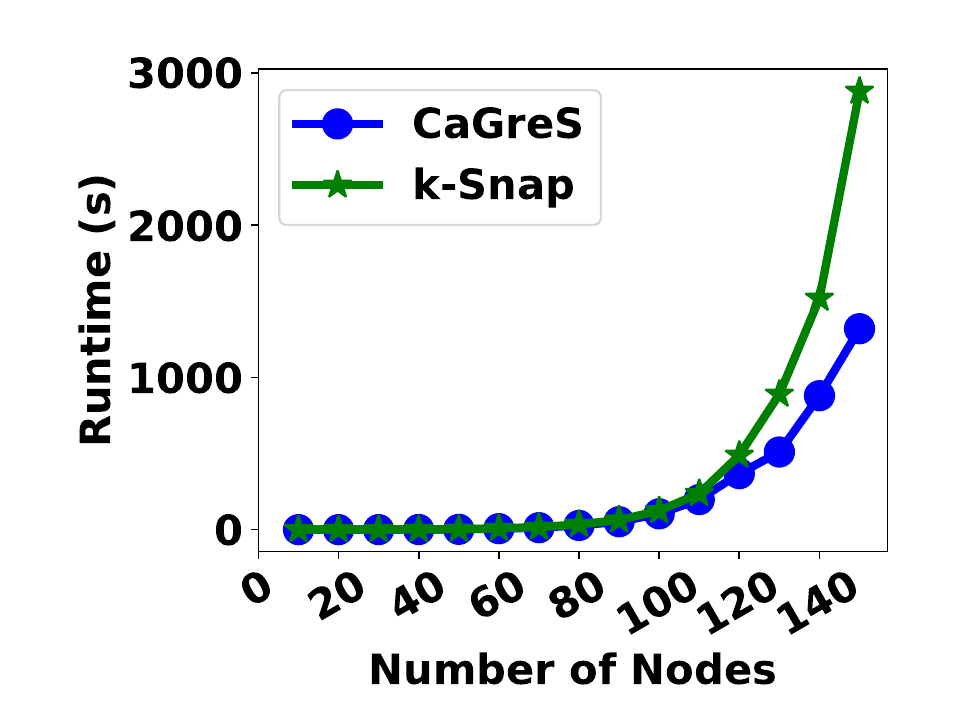}
    \vspace{-3mm}
    \caption{Runtime}
    \label{fig:subfig1}
  \end{subfigure}}
   \resizebox{0.44\linewidth}{!}{
  \begin{subfigure}[b]{0.23\textwidth}
    \centering
    \includegraphics[scale=0.23]{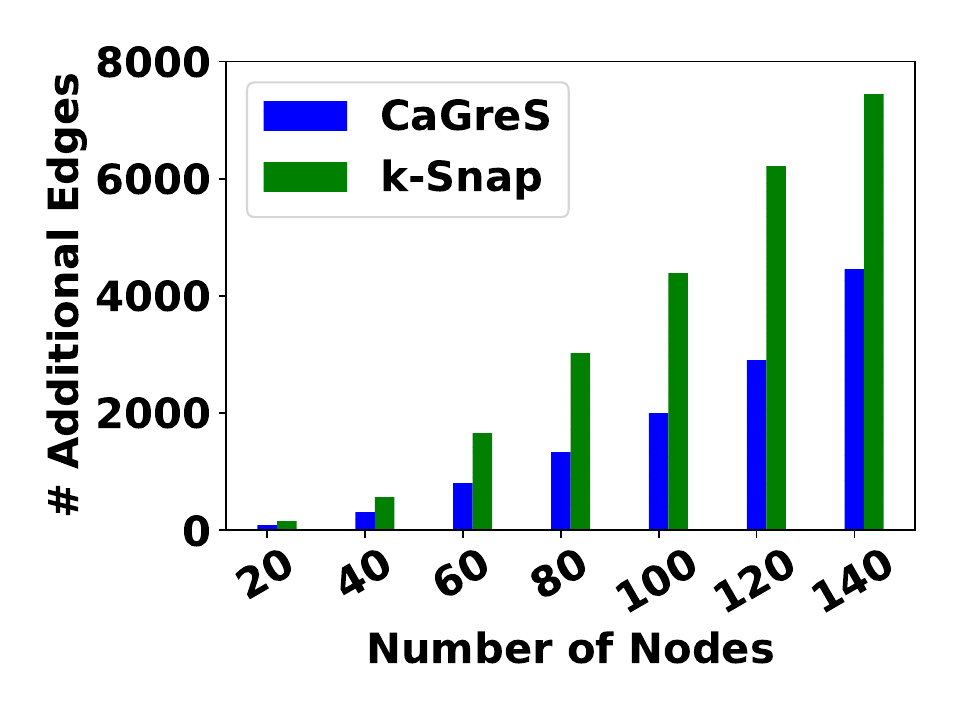}
       \vspace{-3mm}
    \caption{Quality}
    \label{fig:subfig2}
  \end{subfigure}}
     \vspace{-4mm}
  \caption{Number of nodes vs. running times and quality. }
  \label{fig:nodes}
   \vspace{-5mm}
\end{figure}

\begin{figure}
  \centering
    \resizebox{0.44\linewidth}{!}{
  \begin{subfigure}[b]{0.23\textwidth}
    \centering
    \includegraphics[scale=0.23]{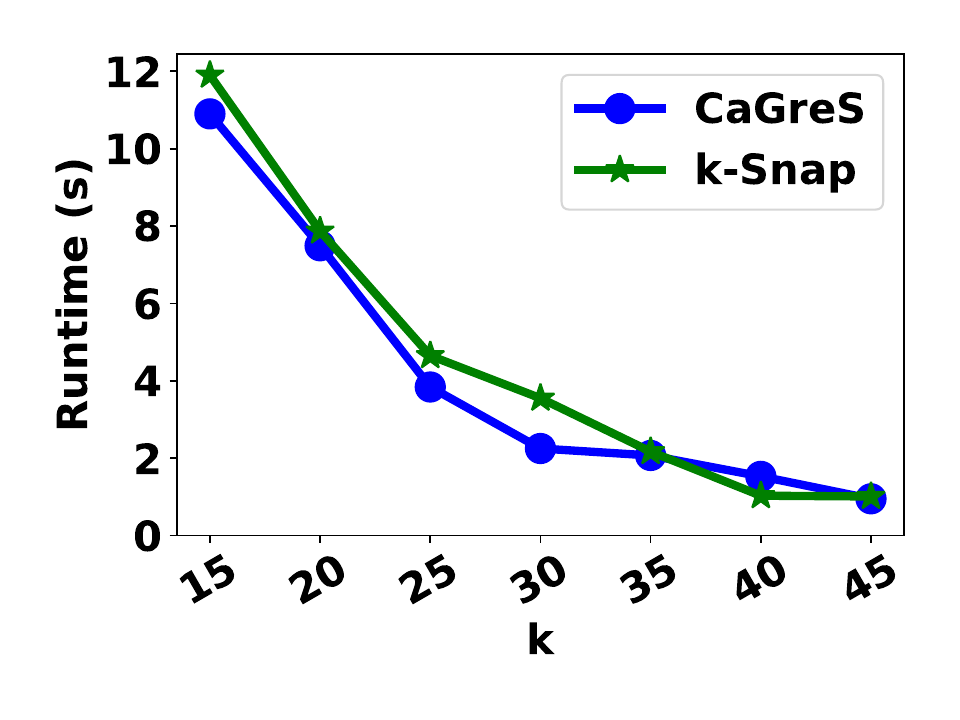}
        \vspace{-4mm}
    \caption{Runtime}
    \label{fig:subfig1}
  \end{subfigure}}
    \resizebox{0.44\linewidth}{!}{
  \begin{subfigure}[b]{0.23\textwidth}
    \centering
    \includegraphics[scale=0.23]{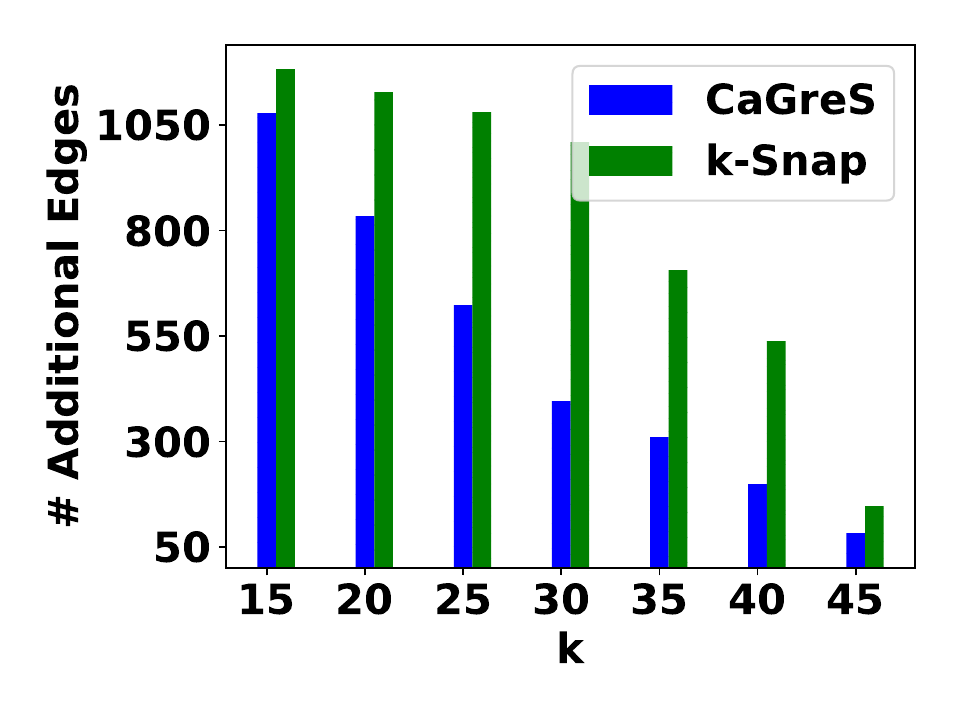}
    \vspace{-4mm}
    \caption{Quality}
    \label{fig:subfig2}
  \end{subfigure}}
  \vspace{-4mm}
  \caption{Summary size $k$ vs. running times and quality. }
  \label{fig:k}
   \vspace{-5mm}
\end{figure}

\begin{figure}
  \centering
   \resizebox{0.44\linewidth}{!}{
  \begin{subfigure}[b]{0.23\textwidth}
    \centering
    \includegraphics[scale=0.23]{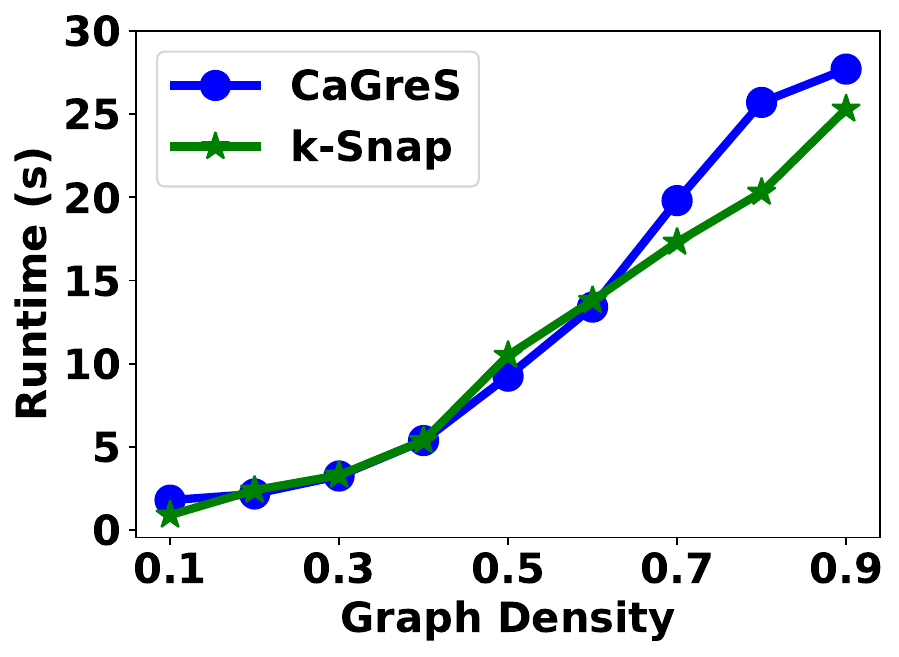}
    \vspace{-3mm}
    \caption{Runtime}
    \label{fig:subfig1}
  \end{subfigure}}
   \resizebox{0.44\linewidth}{!}{
  \begin{subfigure}[b]{0.23\textwidth}
    \centering
    \includegraphics[scale=0.23]{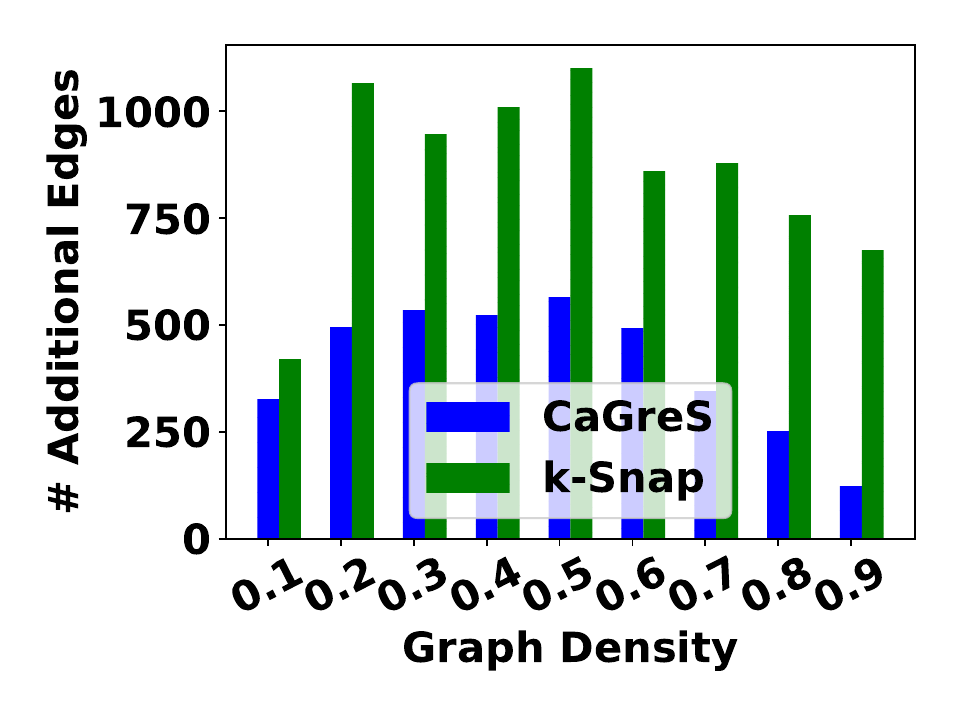}
       \vspace{-3mm}
    \caption{Quality}
    \label{fig:subfig2}
  \end{subfigure}}
     \vspace{-4mm}
  \caption{Graph density vs. running times and quality. }
  \label{fig:density}
     \vspace{-1mm}
\end{figure}

\vspace{0.5mm}
We next analyze the influence of different parameters on performance. In these experiments, our focus shifts to synthetic data, which enables us to manipulate data-related factors.

\noindent
{\bf Input DAG size}
We vary the number of nodes in the input DAG by generating a series of random DAGs varying the number of nodes (5 DAGs per node count) and keeping all other parameters constant. The results are shown in Fig. \ref{fig:nodes}.
As expected, \ksnap\ and \algoName\ exhibit a polynomial increase in runtime (Fig. \ref{fig:nodes}(a)). The improvement relative to \ksnap\ is attributed to our caching mechanisms.
\algoName\ consistently generates summary DAGs with fewer additional edges (Fig. \ref{fig:nodes}(b)), indicating better quality.

\noindent
{\bf Summary size}
We vary the size constraint $k$. Here, the node count is set to 
$50$, and the graph density is held constant at 
$0.3$. The results are depicted in Fig.\ref{fig:k}. The runtime of both \algoName\ and \ksnap\ demonstrate a linear increase with $k$ (Fig. \ref{fig:k}(a)). This is because larger 
$k$ values necessitate more merges. As expected, \algoName\ manages to generate summary DAGs corresponding to canonical causal DAGs with fewer edges (Fig. \ref{fig:k}(b)).

\noindent
{\bf Graph density}
We investigate the influence of graph density on performance. We observe a nearly linear increase in runtime for both \algoName\ and \ksnap\ as graph density rises (Fig. \ref{fig:density}(a)). This is because both algorithms examine neighboring nodes of each node pair, and higher density increases the number of neighbors.
As density increases, both algorithms add more edges. However, at high densities (above 0.7), fewer edges remain to be added, so the number of additional edges decreases (Fig. \ref{fig:density}(b)).


\noindent
{\bf Data size}
We report that the data size, i.e., number of tuples, has no effect on the performance of \algoName\ and \ksnap. This is because both algorithms only examine the input causal DAGs.

\revc{\subsection{Optimizations (C4)}
\label{subsec:ablation}
We assess the effect of our optimizations on the performance of \algoName. To this end, we examine three variants of \algoName: (I) No Cache, a version of \algoName\ without caching, (ii)  No Preprocessing, a version without the low-cost merge optimization, and (iii) No Optimizations: a variant without either optimization.} 


We used synthetic data to generate causal DAGs, varying node count and density as in Section 8.4. Figure \ref{fig:nodes_ablation} shows the results. The number of additional edges remained constant across baselines, so we omit that plot. This confirms that \emph{our optimizations enhance runtime without sacrificing quality}. Notably, \emph{caching improves runtime nearly threefold}, while preprocessing has a modest effect. For large, dense DAGs, preprocessing may slightly slow the algorithm but benefits smaller, sparser ones. 


\begin{figure}
  \centering
   \resizebox{0.48\linewidth}{!}{
  \begin{subfigure}[b]{0.25\textwidth}
    \centering
    \includegraphics[scale=0.18]{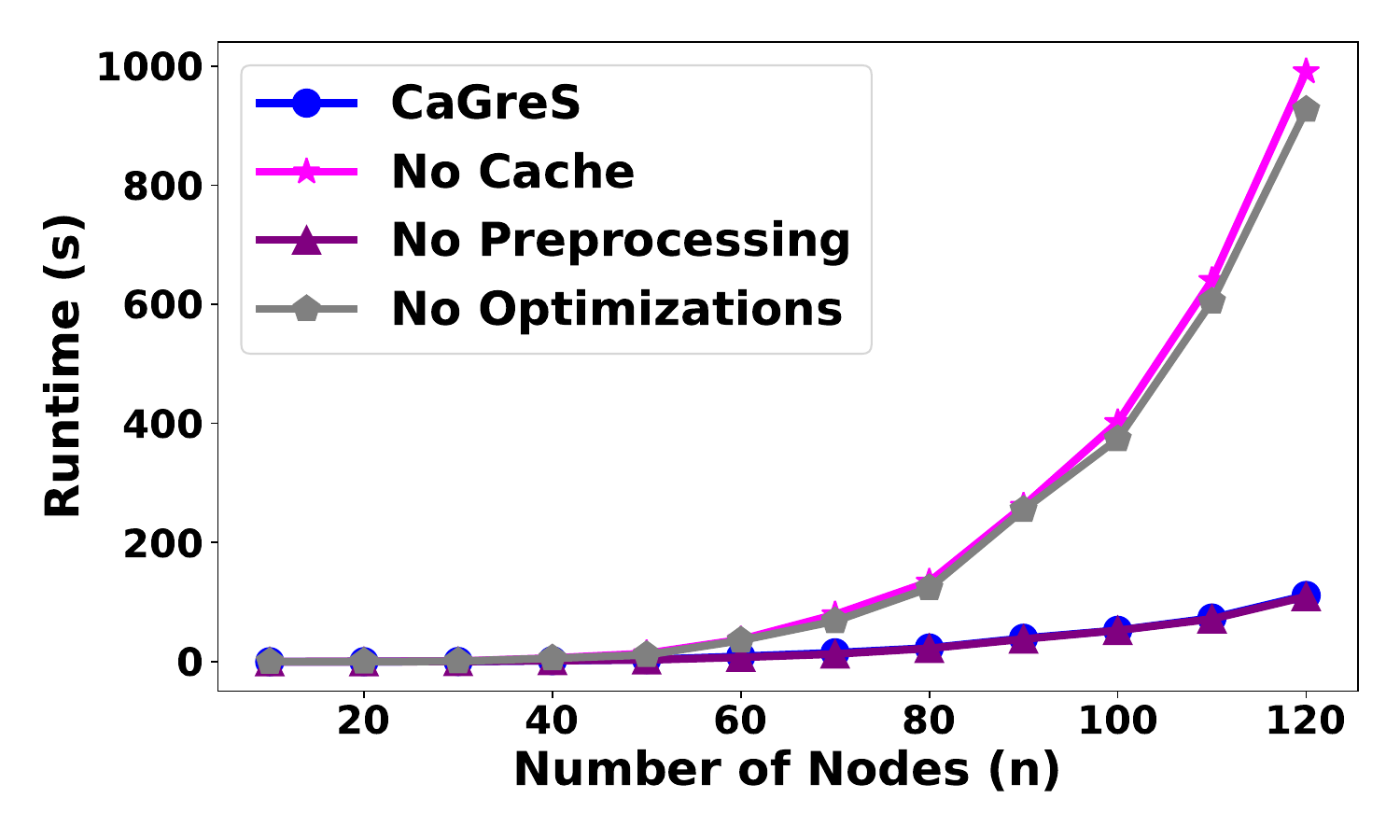}
    \vspace{-5mm}
    \caption{Varying \# nodes (density = 0.4)}
    \label{fig:subfig1}
  \end{subfigure}}
   \resizebox{0.48\linewidth}{!}{
  \begin{subfigure}[b]{0.25\textwidth}
    \centering
    \includegraphics[scale=0.18]{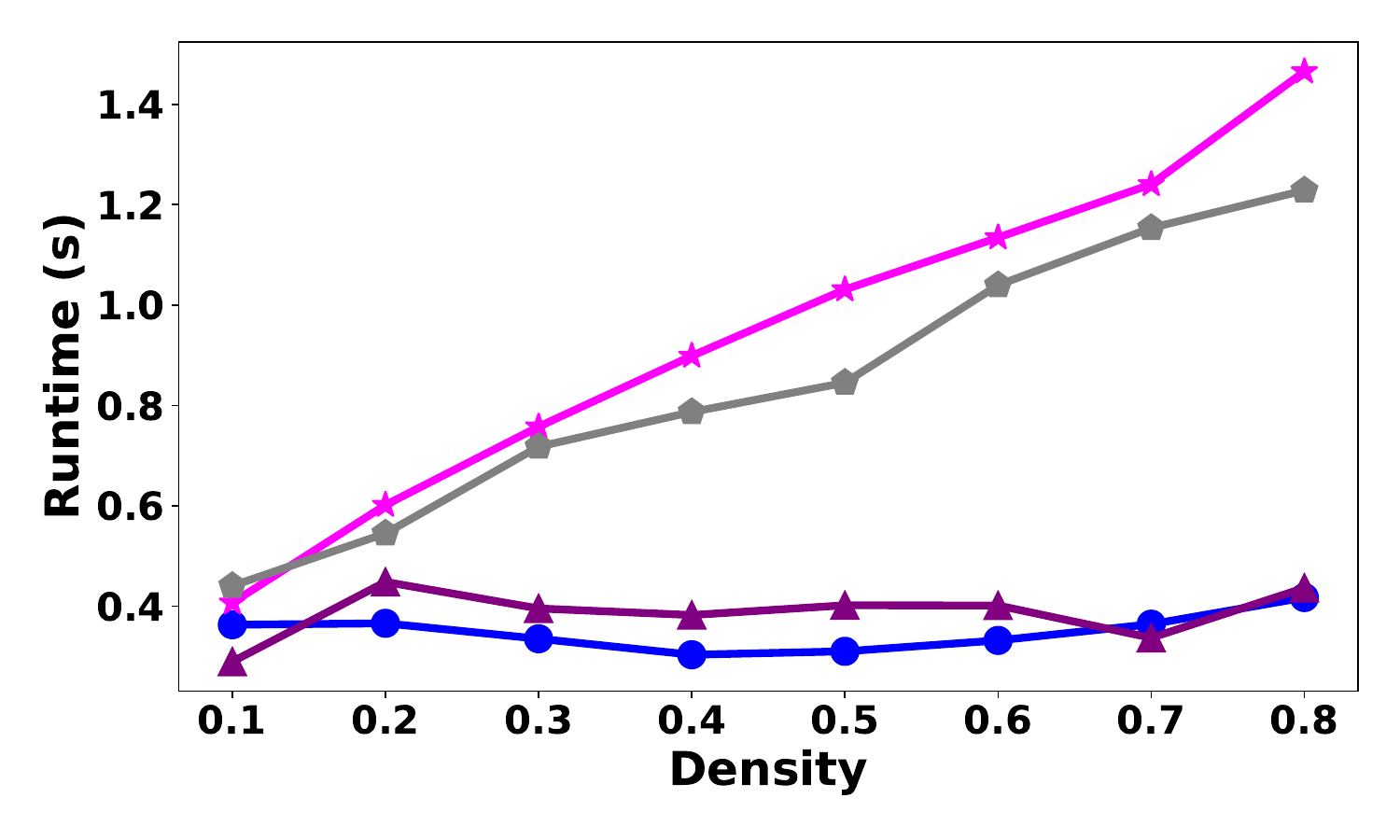}
       \vspace{-5mm}
    \caption{Varying density (n = 30)}
    \label{fig:subfig2}
  \end{subfigure}}
     \vspace{-4mm}
  \caption{\revc{Optimizations.}}
  \label{fig:nodes_ablation}

\end{figure}
\section{Related Work}
\label{sec:related_work}

\noindent
{\bf Summary Causal DAGs}. 
The abstraction of causal models has been studied in literature~\cite{scholkopf2021toward}. Previous work \cite{parviainen2016bayesian, beckers2019abstracting} investigated the problem of determining under what assumptions DAGs over sets of variables can represent the same \ci.
The authors of \cite{chalupka2014visual, chalupka2016multi, rubenstein2017causal} explored the problem of determining the causes of a target behavior (macro-variable) from micro-variables (e.g., image pixels). Other works consider chain or ancestral causal graphs \cite{zhang2008completeness,lauritzen2002chain}.
\cite{puente2013compressing} presented a method to compress causal graphs by removing nodes to eliminate redundancy. In contrast, our work addresses causal DAG summarization, where some causal information is lost but the summary DAG still supports reliable causal inference. The authors of \cite{anand2023causal} expanded the \emph{do-calculus} framework~\cite{pearl_causality_2000} to clustered causal graphs, a related but distinct concept. Our contribution lies in presenting a more streamlined proof of this principle, relying on the connection between node contraction and edge addition.





\vspace{1mm}
\noindent
{\bf General-Purpose Summary Graphs}
Graph summarization aims to condense an input graph into a more concise representation.
\reva{Graph summarization has been extensively studied within the data management community~\cite{bhowmick2022data,fan2012query,liu2014distributed,maneth2016compressing,navlakha2008graph}, as summary graphs not only reduce the graph's size but also enable efficient query answering~\cite{fan2012query,shin2019sweg,maccioni2016scalable,raghavan2003representing}, and enables enhanced data visualization and pattern discovery \cite{shen2006visual,jin2017ecoviz,koutra2014vog,cook1993substructure,dunne2013motif,li2015modulgraph}, and supports extraction of influence dynamics~\cite{mehmood2013csi}.}
Various techniques have been explored, including grouping nodes based on similarity  \cite{liu2014distributed,navlakha2008graph,shin2019sweg,shoaran2013zero,song2018mining,raghavan2003representing,zhou2009graph,tian2008efficient,lee2020ssumm, yong2021efficient,merchant2023graph}, reducing the number bits required to represent graphs \cite{boldi2004webgraph,rossi2018graphzip,shah2017summarizing,navlakha2008graph,maneth2016compressing}, and removing unimportant nodes/edges \cite{maccioni2016scalable,spielman2008graph}. 
We argue that existing techniques are ill-suited for the \probName\ problem. Graph summarization objectives differ across applications, often prioritizing minimizing the reconstruction error \cite{lee2020ssumm, yong2021efficient}, facilitating accurate query answering \cite{shin2019sweg,maccioni2016scalable}, \revb{or selecting contractions that preserve shortest paths to facilitate routing queries ~\cite{geisberger2012exact}}. 
Consequently, existing methods inadequately cater to the objective of preserving causal information, often yielding graphs unsuitable for causal inference, as shown in Section \ref{sec:intro}. 
\revb{Our algorithm 
follows a previous line of work ~\cite{tian2008efficient,geisberger2012exact}, where a bottom-up greedy approach is used to identify promising node pairs for contraction. Our main contribution lies in how it estimates merge costs related to the causal interpretation of the graph and the objective of preserving causal information. }

\vspace{1mm}
\noindent
{\bf Causal Discovery}. 
Causal discovery is a well-studied problem~\cite{glymour2019review,zhao2023causal,youngmann2023causal}, whose goal is to infer causal relationships among variables. While background knowledge is crucial~\cite{pearl2018book}, causal DAGs can be inferred from data under certain assumptions~\cite{glymour2019review,chickering2002optimal}. Existing methods include constraint-based~\cite{spirtes2000causation} and score-based algorithms~\cite{shimizu2006linear,chickering2002optimal, wiering2002evolving,zhu2019causal}. Pashami et al.~\cite{pashami2018causal} proposed a cluster-based conflict resolution mechanism to determine the causal relationship among variables.
Recent works \cite{castelnovo2024marrying, vashishtha2023causal} have explored the use of LLMs for causal discovery.
Our work serves as a complementary endeavor to existing research in causal discovery.

\section{Conclusions \& Limitations}
\label{sec:conc}
A mixed graph is a typical output of causal discovery algorithms~\cite{pena2016learning,spirtes2000causation,doi:10.1137/20M1362796}. For simplicity in exposition, we concentrated on regular causal DAGs throughout this paper. Nevertheless, our results and algorithms apply to mixed graphs as well.

This paper opens up promising future research directions.
This includes the development of compact representations of node sets tailored specifically for causal inference, addressing additional size constraints, and refining algorithms with theoretical guarantees. 

\revb{Lastly, we note that the size constraint can impact the generated summary DAG, and users may need to adjust it to obtain a desirable summary. Future research will explore methods to recommend an optimal value for this parameter. For example, a heuristic stopping condition could be added to the algorithm, signaling it to halt if the next merge would result in a significant loss of information.}



\begin{acks}
We sincerely appreciate the support from DARPA ASKEM Award HR00112220042, the ARPA-H Biomedical Data Fabric project, and Liberty Mutual. We also want to thank the NSF Award IIS-2340124, NIH Grant U54HG012510, and the Israel Science Foundation (ISF) Grant No. 1442/24 for their partial funding of this work.
\end{acks}

\clearpage
\bibliographystyle{ACM-Reference-Format}
\bibliography{references}

  \appendix

\section{Semigraphoid-Axioms}
\label{app:semigraphoid}
The semi-graphoid axioms are the following:

\begin{enumerate}
    \item Triviality: $I(A;\emptyset|C){=}0$.
    \item Symmetry: $I(A;B|C){=}0 {\implies} I(A;C|B){=}0$.
    \item Decomposition: $I(A;BD|C){=}0 {\implies} I(A;B|D){=}0$.
    \item Contraction: $I(A;B|C){=}0$, $I(A;D|BC){=}0 {\implies} I(A;BD|C){=}0$.
    \item Weak Union: $I(A;BD|C){=}0 {\implies} I(A;B|CD){=}0, I(A;D|BC){=}0$.
\end{enumerate}
The semi-graphoid axioms can be summarized using the following identity, which follows from the \e{chain-rule} for mutual information~\cite{Yeung:2008:ITN:1457455}.
\begin{equation}
	\label{eq:chainRule}
	I(A;BD|C){=}0 \text{ if and only if } I(A;B|C){=}0 \text{ and }I(A;D|BC){=}0
\end{equation}





\section{Proofs}
\label{app:proof}
Next, we provide the missing proofs.

We begin with some basic definitions used in the proofs.

Let $\cg$ be a causal DAG.
and let $U,V {\in} \nodes(\cg)$ be two nodes. We say that $U$ is a \e{parent} of $V$, and $V$ a \e{child} of $U$ if $(U \rightarrow V) \in \edges(\cg)$. 
A \e{directed path} $t{=}(V_1,{\dots},V_n)$ is a sequence of nodes such that there is an edge $(V_i {\rightarrow} V_{i+1}){\in} \edges(\cg)$ for every $i{\in} \set{1,{\dots},n{-}1}$. We say that $V$ is a \e{descendant} of $U$, and $U$ an \e{ancestor} of $V$ if there is a directed path from $U$ to $V$.
We denote the child-nodes of $V$ in $\cg$ as $\child_{\cg}(V)$ ; the descendants of $V$ (we assume that $V{\in} \desc_{\cg}(V)$) as $\desc_{\cg}(V)$, and the nodes of $\cg$ that are not descendants of $V$ as $\nondesc_{\cg}(V)$. For a set of nodes $\mb{S}{\subseteq} \nodes(\cg)$, we let $\desc_{\cg}(\mb{S}){\eqdef} \bigcup_{U\in \mb{S}}\desc_{\cg}(U)$, and by $\nondesc_{\cg}(\mb{S}){\eqdef} \bigcap_{U\in \mb{S}}\nondesc_{\cg}(U)$. 


A \e{trail} $t{=}(V_1,{\dots},V_n)$ is a sequence of nodes such that there is an edge between $V_i$ and $V_{i{+}1}$ for every $i{\in} \set{1,{\dots},n{-}1}$. That is, $(V_i{\rightarrow} V_{i+1}){\in} \edges(\cg)$ or $(V_i {\leftarrow} V_{i+1}){\in} \edges(\cg)$ for every $i{\in} \set{1,{\dots},n{-}1}$. A node $V_i$ is said to be \e{head-to-head} with respect to $t$ if $(V_{i-1}{\rightarrow} V_i){\in} \edges(\cg)$ and $(V_{i}{\leftarrow} V_{i+1}){\in} \edges(\cg)$.
A trail $t=(V_1,{\dots},V_n)$ is \e{active} given $\mb{Z}{\subseteq} \mathcal{V}$ if (1) every $V_i$ that is a head-to-head node with respect to $t$ either belongs to $\mb{Z}$ or has a descendant in $\mb{Z}$, and (2) every $V_i$ that is not a head-to-head node w.r.t. $t$ does not belong to $\mb{Z}$. If a trail $t$ is not active given $\mb{Z}$, then it is \e{blocked} given $\mb{Z}$. 

\subsection{Proofs for Section \ref{sec:problem}}

\begin{proof}[Proof of Lemma \ref{lem:contractDAG}]
	Let $P$ be a directed path from $A$ to $B$, such that $|P|\geq 2$. Let $X$ be $A$'s successor in $P$, and $Y$ be $B$'s predecessor in $P$. By our assumption that $|P|\geq 2$, $X\notin \set{A,B}$, but $Y$ may be the same as $X$. Now, consider the graph $H$. By definition, $H$ contains a node $AB$, with an incoming edge from $Y$, and an outgoing edge to $X$. If $X=Y$, we immediately get the cycle $AB\rightarrow X \rightarrow AB$. Otherwise, we consider the subpath $P'$ (of $P$) from $X$ to $Y$ ($X\underset{P'}{\rightsquigarrow} Y$) in $G$. This results in the following cycle in $H$: $Y\rightarrow AB \rightarrow X\underset{P'}{\rightsquigarrow} Y$.
	
	Now, suppose that $H$ contains the cycle $C$. $C$ must contain the node $AB$. Otherwise, the cycle is included in $G$, which leads to a contradiction that $G$ is a DAG. Let $Y$ and $X$ be the incoming and outgoing vertices, respectively, to $AB$ in $C$. Then, there is a directed path $P$ from $X$ to $Y$ in $H$ that avoids $AB$. That is, every vertex and edge on the path $P$ belongs to $G$ as well. Hence, $P$ is a directed path from $X$ to $Y$ in $G$. Since $Y$ is incoming to $AB$, then $Y$ is incoming to either $A$ or $B$ in $G$. Assume, wlog, that $Y\rightarrow A \in E$. Since $X$ is an outgoing vertex from $AB$, then it is outgoing from from either $A$ or $B$ (or both). If $X$ is outgoing from $A$, then we get the following cycle in $G$: $Y\rightarrow A \rightarrow X \underset{P}{\rightsquigarrow}Y$. Since $G$ is a DAG, this brings us to a contradiction.
	Therefore, $X$ must be outgoing from $B$ and not $A$.  But this gives us the following directed path from $B$ to $A$: $$A\leftarrow Y \underset{P}{\leftsquigarrow} X \leftarrow B.$$ This completes the proof.
\end{proof}

Next, we show that the \probName\ problem is $NP$-hard via a reduction from the $k$-Max-Cut problem~\cite{DBLP:journals/tcs/GareyJS76}. Let $G$ be an undirected graph with weighted edges (i.e., $w: \nodes(G) \rightarrow \mathbb{R}_{\geq 0}$).
The $k$-Max-Cut problem consists of
partitioning $\nodes(G)$ into $k$ disjoint clusters so as to maximize the sum of weights
of the edges joining vertices in different clusters. It is well-known that $k$-Max-Cut is NP-hard even if $k=2$~\cite{DBLP:journals/tcs/GareyJS76}, and the weight of every edge is $1$. In other words, deciding whether there exists a $k$-clustering of $\nodes(G)$ to clusters $\set{V_1,\dots,V_k}$, where the sum of weights of edges between vertices in distinct clusters is at least a given threshold $\gamma$ is NP-hard.

Let $\mathcal{S}\eqdef \set{V_1,\dots,V_k}$ be a $k$-clustering of $G$. We define $M_G(\mathcal{S})$ to be the number of edges of between vertices in a common cluster:
\begin{equation}
\label{eq:M(S)}
M_G(\mathcal{S})\eqdef \sum_{i=1}^k \sum_{u,v\in V_i} \mathbb{1}[(u,v)\in \edges(G)].
\end{equation}
We define $\overline{M}_G(\mathcal{S})$ to be the number of non-edges between vertices in a common cluster:
\begin{equation}
\label{eq:overM(S)}
\overline{M}_G(\mathcal{S})\eqdef \sum_{i=1}^k \sum_{u,v\in V_i} \mathbb{1}[(u,v)\notin \edges(G)].
\end{equation}

Similarly, we define $B_G(\mathcal{S})$ to be the number of edges of between vertices in distinct clusters:
\begin{equation}
\label{eq:B(S)}
B_G(\mathcal{S})\eqdef \sum_{1\leq i<j \leq k} \sum_{\substack{u\in V_i, \\v\in V_j}}\mathbb{1}[(u,v)\in \edges(G)].
\end{equation}
Since every edge $(u,v)\in \edges(G)$ is either between vertices in the same cluster or vertices in distinct clusters, then:
\[
M_G(\mathcal{S})+B_G(\mathcal{S})=|\edges(G)|.
\]
In particular, $M_G(\mathcal{S})\leq \tau$ if and only if $B_G(\mathcal{S})\geq |\edges(G)|-\tau$, for every $\tau\in [0,|\edges(G)|]$.

\begin{lemma}
\label{lem:MInimizeM(S)}
   Let $G$ be an undirected graph, $k\geq 2$, and $\tau \in [0,|\edges(G)|]$. Deciding whether there exists a $k$-clustering $\mathcal{S}$ of $G$ such that $M_G(\mathcal{S})\leq \tau$ is NP-Complete.
\end{lemma}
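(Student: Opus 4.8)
The plan is to prove both directions of NP-completeness, with the heavy lifting already done by the complementarity identity $M_G(\mathcal{S})+B_G(\mathcal{S})=|\edges(G)|$ established just above the lemma. First I would argue membership in NP. A candidate $k$-clustering $\mathcal{S}=\set{V_1,\dots,V_k}$ is a certificate of size $O(|\nodes(G)|)$, and given it we can compute $M_G(\mathcal{S})$ by a single scan over $\edges(G)$, checking for each edge whether both endpoints fall in the same cluster, which is polynomial. Comparing the result against $\tau$ then verifies the ``yes'' instance in polynomial time, so the problem lies in NP.

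The core of the argument is NP-hardness, which I would obtain by a reduction from the decision version of $k$-Max-Cut: given $(G,k,\gamma)$, decide whether some $k$-clustering $\mathcal{S}$ satisfies $B_G(\mathcal{S})\geq \gamma$. This is NP-hard already for $k{=}2$ with unit edge weights~\cite{DBLP:journals/tcs/GareyJS76}. The reduction is essentially the identity map on the graph: from $(G,k,\gamma)$ I produce the instance $(G,k,\tau)$ with $\tau \eqdef |\edges(G)|-\gamma$. Since $\gamma\in[0,|\edges(G)|]$ we have $\tau\in[0,|\edges(G)|]$, so the produced instance is well-formed, and the map is clearly computable in linear time. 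The key step is then to invoke the complementarity identity: for \emph{every} $k$-clustering $\mathcal{S}$ of $G$,
\[
M_G(\mathcal{S})\leq \tau \iff B_G(\mathcal{S})\geq |\edges(G)|-\tau = \gamma .
\]
Hence a $k$-clustering witnessing $M_G(\mathcal{S})\leq\tau$ exists if and only if a $k$-clustering witnessing $B_G(\mathcal{S})\geq\gamma$ exists; the two instances are equivalent, and the same clustering serves as a witness in both directions. Combining this with membership in NP yields NP-completeness.

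I do not expect a genuine obstacle here, precisely because the complementarity identity has already been derived and it converts a maximization constraint on cut edges into a minimization constraint on intra-cluster edges verbatim. The only points that require a moment of care are bookkeeping ones: verifying that the transformed threshold stays within the legal range $[0,|\edges(G)|]$ so the target instance is valid, and confirming that the reduction preserves the parameter $k$ exactly (so that hardness already at $k{=}2$ transfers). The more substantive downstream use of this lemma — relating intra-cluster \emph{non}-edges $\overline{M}_G(\mathcal{S})$ (Eq.~\eqref{eq:overM(S)}) to the edges added inside clusters of the canonical causal DAG, per the proof-intuition sketch around Eq.~\eqref{eq:specialedges} — is where the connection to causal DAG summarization is forged, but for this lemma itself the complementary counting argument suffices.
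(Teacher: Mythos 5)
Your proof is correct and follows essentially the same route as the paper: membership in NP by direct computation of $M_G(\mathcal{S})$, and hardness by reducing from $k$-Max-Cut with the threshold shift $\tau = |\edges(G)| - \gamma$ via the identity $M_G(\mathcal{S}) + B_G(\mathcal{S}) = |\edges(G)|$. The only cosmetic difference is that you phrase it as a Karp reduction with an explicit two-way equivalence, while the paper frames it as a Turing-style reduction (assuming a poly-time solver and using it to decide $k$-Max-Cut), leaving the converse direction of the equivalence implicit; your version is, if anything, slightly tighter.
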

\begin{proof}
    The problem is clearly in NP because given a $k$-clustering $\mathcal{S}$, computing $M_G(\mathcal{S})$ (see~\eqref{eq:M(S)}) can be done in polynomial time.
    
    We prove hardness by reduction from $k$-Max-Cut. Suppose there exists a $P$-Time algorithm that given an undirected graph $G$, and a threshold value $\tau\in [0,|\edges(G)|]$, returns a $k$-clustering $\mathcal{S}$ such that $M_G(\mathcal{S})\leq \tau$ if one exists, and null otherwise. We show that such an algorithm can be applied to solve $k$-Max-Cut in polynomial time.

    Let $G$, $k$, and $\gamma$ be an instance of the $k$-max-Cut problem where $G$ is an undirected graph, $k\leq |\nodes(G)|$, and $\gamma\in [0,|\edges(G)|]$. We define $\tau\eqdef |\edges(G)|-\gamma$, and execute the algorithm for finding a $k$-clustering $\mathcal{S}$ such that $M_G(\mathcal{S})\leq \tau$. Since $M_G(\mathcal{S})+B_G(\mathcal{S})=|\edges(G)|$, then:
    \begin{align*}
        B_G(\mathcal{S})&=|\edges(G)|-M_G(\mathcal{S})&& \Rightarrow_{M_G(\mathcal{S})\leq \tau}\\
        &\geq |\edges(G)|-\tau && \Rightarrow_{\tau \eqdef |\edges(G)|-\gamma}\\
        &=|\edges(G)|-(|\edges(G)|-\gamma) &&\\
        &=\gamma &&
    \end{align*}
   Hence, we can decide in $P$-Time whether $G$ has a $k$-cut whose cardinality is at least $\gamma$. 
\end{proof}
\begin{lemma}
    \label{lem:overlineMNPHard}
   Let $G$ be an undirected graph, $k\geq 2$, and $\tau \in [0,|\edges(G)|]$. Deciding whether there exists a $k$-clustering $\mathcal{S}$ of $G$ such that $\overline{M}_G(\mathcal{S})\leq \tau$ is NP-Complete.
\end{lemma}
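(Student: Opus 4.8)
The plan is to reduce from the problem shown NP-complete in Lemma~\ref{lem:MInimizeM(S)}, namely deciding whether a $k$-clustering with $M_G(\mathcal{S})\leq\tau$ exists, via the \emph{complement graph}. Membership in NP is immediate: given a candidate $k$-clustering $\mathcal{S}$, the quantity $\overline{M}_G(\mathcal{S})$ from~\eqref{eq:overM(S)} is computable in polynomial time by inspecting each intra-cluster pair.

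For hardness, I would first fix the convention that both sums in~\eqref{eq:M(S)} and~\eqref{eq:overM(S)} range over distinct unordered pairs $\{u,v\}\subseteq V_i$. The crux is then an exact identity relating the two objectives through the complement graph $\bar{G}$ (same vertex set, with $\{u,v\}\in\edges(\bar{G})$ iff $u\neq v$ and $\{u,v\}\notin\edges(G)$): for \emph{every} $k$-clustering $\mathcal{S}$,
\begin{equation*}
\overline{M}_G(\mathcal{S}) = M_{\bar{G}}(\mathcal{S}),
\end{equation*}
since a within-cluster non-edge of $G$ is precisely a within-cluster edge of $\bar{G}$. Note that minimizing $\overline{M}_G$ is \emph{not} merely the complement of minimizing $M_G$ on the same graph, because $M_G(\mathcal{S})+\overline{M}_G(\mathcal{S})=\sum_i\binom{|V_i|}{2}$ depends on the (variable) cluster sizes; this is exactly why the reduction must pass to $\bar{G}$ rather than reuse $G$.

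Given an instance $(H,k,\tau)$ of the $M$-minimization problem, I would output the instance $(\bar{H},k,\tau)$ of the $\overline{M}$-minimization problem. By the identity above, a $k$-clustering $\mathcal{S}$ satisfies $\overline{M}_{\bar{H}}(\mathcal{S})\leq\tau$ if and only if the same $\mathcal{S}$ satisfies $M_H(\mathcal{S})\leq\tau$, so the two instances are equivalent, and $\bar{H}$ is constructible in polynomial time. Combined with membership in NP, this yields NP-completeness.

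The only delicate point — and the one I would treat as the "obstacle," since the core argument is a one-line identity — is the stated threshold range $\tau\in[0,|\edges(G)|]$. The genuine maximum value of $\overline{M}_G$ is the number of non-edges $\bar m=\binom{n}{2}-|\edges(G)|$, so the non-trivial range is really $[0,\bar m]$. Under the map $(H,k,\tau)\mapsto(\bar{H},k,\tau)$, the non-edge count of $\bar{H}$ equals $|\edges(H)|$, so any $\tau\in[0,|\edges(H)|]$ coming from a hard $M$-instance lands in the corresponding non-trivial range for $\bar{H}$; hence the reduction stays within meaningful instances. I would therefore state the threshold bound in terms of the non-edge count of the input graph (which is its genuine range), after which the complement identity closes the argument without further work.
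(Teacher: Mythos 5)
Your proof is correct and takes essentially the same route as the paper's: NP membership by direct verification of $\overline{M}_G(\mathcal{S})\leq\tau$, followed by a reduction from the $M_G$-minimization problem of Lemma~\ref{lem:MInimizeM(S)} via the complement graph, using the identity $\overline{M}_{\overline{G}}(\mathcal{S})=M_{G}(\mathcal{S})$ (the paper phrases this as a P-time algorithm for $\overline{M}$ yielding one for $M$, but the underlying many-one reduction $(H,k,\tau)\mapsto(\overline{H},k,\tau)$ is identical). Your additional remark about the stated threshold range $\tau\in[0,|\edges(G)|]$ versus the true range of $\overline{M}_G$ (the non-edge count) is a valid refinement of a point the paper's statement glosses over, and your fix of restating the bound via the non-edge count is sound.
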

\begin{proof}
The problem is clearly in NP because given a $k$-clustering $\mathcal{S}$, computing $\overline{M}_G(\mathcal{S})$, and verifying $\overline{M}_G(\mathcal{S})\leq \tau$ can be done in polynomial time (see~\eqref{eq:overM(S)}).

So, suppose that there exists a $P$-Time algorithm that given an undirected graph $G$, and a threshold value $\tau\in [0,|\edges(G)|]$, returns a $k$-clustering $\mathcal{S}$ such that $\overline{M}_G(\mathcal{S})\leq \tau$ if one exists, and null otherwise. We show that such an algorithm can be applied to decide whether there exists a $k$-clustering of $G$ where $M_G(\mathcal{S})\leq \alpha$. By Lemma~\ref{lem:MInimizeM(S)}, this problem is NP-Hard, and hence this will prove that minimizing $\overline{M}_G(\mathcal{S})$ is NP-hard as well.

So let $G$, $k$, and $\alpha$ be the input to the problem for deciding whether there exists a $k$-clustering $\mathcal{S}$ such that $M_G(\mathcal{S})\leq \alpha$. Let $\overline{G}$ denote the complement graph of $G$. That is $\nodes(\overline{G})=\nodes(G)$ and $(u,v)\in \edges(G)$ if and only if $(u,v)\notin \edges(\overline{G})$. In particular, $\overline{M}_{\overline{G}}(\mathcal{S})=\alpha$ if and only if $M_G(\mathcal{S})=\alpha$. Therefore, if we can, in $P$-Time find a clustering that minimizes $\overline{M}_{\overline{G}}(\mathcal{S})$, then we have also found a clustering that minimizes $M_G(\mathcal{S})$. This completes the proof.
\end{proof}

We now show that the \probName\ problem is $NP$-hard. Specifically, we show that finding a summary DAG $(\qg,f)$ whose canonical DAG $\cg$ minimizes $|\edges(\cg_{\qg})|-|\edges(G)|$; that is, the canonical DAG $\cg_{\qg}$ results in the smallest number of added edges, is NP-Hard.
\begin{theorem}
    The \probName\ problem which minimizes the number of added edges to the canonical DAG is NP-Hard.
    \label{theorem:problem-is-np-hard-appendix}
\end{theorem}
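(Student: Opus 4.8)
The plan is to establish NP-hardness by reduction from the clustering problem shown NP-hard in Lemma~\ref{lem:overlineMNPHard}: deciding whether an undirected graph $G$ admits a $k$-clustering $\mathcal{S}$ with $\overline{M}_G(\mathcal{S})\le\tau$, i.e.\ few non-edges inside clusters. Membership in NP is immediate, since for a candidate summary DAG $(\qg,f)$ with $|\nodes(\qg)|=k$ one can verify acyclicity, construct $\cg_{\qg}$ by Definition~\ref{def:groundedDAG}, and count $|\edges(\cg_{\qg})|-|\edges(\cg)|$ in polynomial time. For the reduction, given an instance $(G,k,\tau)$ I would build a causal DAG $\cg$ whose skeleton is exactly $G$: fix a vertex order $v_1,\dots,v_n$ and orient each $(v_i,v_j)\in\edges(G)$ as $v_i\to v_j$ for $i<j$. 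This orientation is acyclic, so $\cg$ is a legal causal DAG, and every partition of $\nodes(\cg)$ into $k$ groups induces a $k$-clustering $\mathcal{S}$ of $G$; the feasible summary DAGs are exactly those partitions whose contraction stays acyclic.

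The technical core is an exact count of the edges added \emph{inside} clusters. By Definition~\ref{def:groundedDAG}, within a cluster $\mb C=f^{-1}(X)$ the canonical DAG connects every topologically ordered pair of $\mb C$, so $\cg_{\qg}$ has exactly $\binom{|\mb C|}{2}$ internal edges in $\mb C$. Among them, those already present in $\cg$ are precisely the $e_G(\mb C)$ edges of $G$ with both endpoints in $\mb C$, so the internal edges that are genuinely added — the set counted in~\eqref{eq:specialedges} — are exactly the non-edges of $G$ inside $\mb C$. Summing over clusters yields
\[
\left|\set{(X_i,X_j)\in\edges(\cg_{\qg}){\setminus}\edges(\cg): f(X_i)=f(X_j)}\right| = \sum_{\mb C}\left(\binom{|\mb C|}{2}-e_G(\mb C)\right) = \overline{M}_G(\mathcal{S}).
\]
Hence deciding whether a $k$-node summary DAG of $\cg$ adds at most $\tau$ internal edges is literally the $\overline{M}_G\le\tau$ problem, which is NP-hard by Lemma~\ref{lem:overlineMNPHard}. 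This proves the \emph{stronger claim}: the within-cluster version of the objective, the quantity in~\eqref{eq:specialedges}, is already NP-hard.

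It then remains to transfer hardness to the full objective $|\edges(\cg_{\qg})|-|\edges(\cg)|$. Because the internal additions form a subset of all additions, a yes-instance of the total problem is always a yes-instance of the within-cluster problem; the reverse direction requires that an optimal $\overline{M}_G$ clustering be realizable as an acyclic summary DAG that introduces no extra between-cluster edges, so that the two objectives coincide on the constructed instances. I expect this to be the main obstacle. Acyclicity of a contraction is governed by Lemma~\ref{lem:contractDAG} — a contraction is safe unless it closes a directed path of length $\ge2$ — and the complement graphs produced by Lemma~\ref{lem:overlineMNPHard} are dense, so the plain index orientation does not by itself guarantee that every candidate clustering is contractible. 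The plan is to discharge this by refining the construction (for instance by subdividing edges or inserting a topological-layer gadget) so that the clusterings relevant to the threshold are forced to be contractible and contribute no cross-cluster additions; the total-edge count then equals $\overline{M}_G(\mathcal{S})$ and inherits its NP-hardness, completing the proof.
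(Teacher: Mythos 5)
Your proposal is, up to its admitted stopping point, the paper's own proof. The appendix proof of Theorem~\ref{theorem:problem-is-np-hard-appendix} performs exactly your reduction: it fixes a complete order $v_1,\dots,v_n$ on $\nodes(G)$, orients each edge of $G$ from lower to higher index to obtain a DAG $D$, and identifies the within-cluster added edges of the \groundedDAG\ --- the quantity in~\eqref{eq:specialedges} --- with $\overline{M}_G(\mathcal{S})$, invoking Lemma~\ref{lem:overlineMNPHard} (itself obtained from $k$-Max-Cut via Lemma~\ref{lem:MInimizeM(S)} and graph complementation) for hardness. Your counting identity $\sum_{\mb C}\bigl(\binom{|\mb C|}{2}-e_G(\mb C)\bigr)=\overline{M}_G(\mathcal{S})$ is the slightly more explicit form of the paper's one-line observation that a $k$-clustering of $D$ with $\overline{M}_D(\mathcal{S})\le\gamma$ yields $\overline{M}_G(\mathcal{S})\le\gamma$. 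So the ``stronger claim'' you establish about the within-cluster objective is precisely the statement the appendix actually proves, and your NP-membership argument matches as well.

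Where you diverge is the transfer to the full objective $|\edges(\cg_{\qg})|-|\edges(\cg)|$, and here the comparison is instructive. The paper builds no gadget: in Section~\ref{subsec:canonical_DAG} it disposes of the transfer with the single observation that the within-cluster additions are a subset of all additions, so $|\edges(\cg_{\qg})|-|\edges(\cg)|\le\tau$ forces the bound in~\eqref{eq:specialedges} --- the same one-directional implication you correctly note is only half of the equivalence a reduction would need. Likewise, the acyclicity obstacle you flag is genuine: under the fixed index orientation a cluster such as $\set{v_1,v_3}$ with an external path $v_1\to v_2\to v_3$ is not contractible by Lemma~\ref{lem:contractDAG}, so not every clustering with small $\overline{M}_G$ is realizable as a feasible summary DAG. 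But the paper's proof is silent on this too: the appendix argues only the soundness direction (from a clustering of $D$ with $\overline{M}_D(\mathcal{S})\le\gamma$ to a clustering of $G$ with $\overline{M}_G(\mathcal{S})\le\gamma$), never the converse realizability direction. In short, to match the published proof you need no subdivision or topological-layer gadget --- only the subset observation; if you do carry out your gadget plan, forcing the threshold-relevant clusterings to be contractible with no cross-cluster additions, you would be closing two directions the paper leaves open rather than reproducing its argument.
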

\begin{proof}
    Given a DAG $D$ and a $k$-clustering $\mathcal{S}$ of $D$, it is straightforward to verify that $\overline{M}_D(\mathcal{S})\leq \tau$, and hence this problem is in NP.

    Let $G$ be an undirected graph, $k>0$, and $\gamma>0$ a threshold. Let $\nodes(G)=\set{v_1,\dots,v_n}$ denote a complete order of $\nodes(G)$. Define $D$ to be the directed graph where $\nodes(D)=\nodes(G)$ and $(v_i\rightarrow v_j)\in \edges(D)$ if and only if $(v_u,v_j)\in \edges(G)$ and $i<j$. Let $\mathcal{S}=\set{V_1,\dots,V_k}$ be a $k$-clustering of $\nodes(D)=\nodes(G)$ such that $\overline{M}_D(\mathcal{S})\leq \gamma$. By the definition of $\edges(D)$, we immediately get that $\overline{M}_G(\mathcal{S})\leq \gamma$. 
    By Lemma~\ref{lem:overlineMNPHard}, the \probName\ problem is $NP$-hard. 
\end{proof}

\eat{
Next, we show that the \probName\ problem is $NP$-hard via a reduction from the $k$-Min-Cut problem~\cite{hartmanis1982computers}.
Let $G$ be an undirected graph with weighted edges (i.e., $w: \nodes(G) \rightarrow \mathbb{R}_{\geq 0}$).
The $k$-Min-Cut problem consists of
partitioning $\nodes(G)$ into $k$ disjoint clusters so as to minimize the sum of weights
of the edges joining vertices in different clusters. It is well-known that $k$-Min-Cut is NP-hard even if $k=2$~\cite{DBLP:journals/tcs/GareyJS76}. In other words, deciding whether there exists a $k$-clustering of $\nodes(G)$ to clusters $\set{V_1,\dots,V_k}$, where the sum of weights of edges between vertices in distinct clusters is at most a given threshold $\gamma$ is NP-hard.

\begin{proof}[Proof of Theorem \ref{theorem:problem-is-np-hard}]
    In this proof we make the assumption that $InterSim(\mb S)=\sum_{u,v\in \mb S}InterSim(\set{u,v})$ where $\mb S\subseteq \nodes(G)$.
   Given a DAG $G$, a similarity threshold $\tau$, and its summary graph $\qg$, it is easy to verify that $\sum_{\mb S\in \nodes(\qg)}InterSim(\mb S)\geq \tau$. Therefore, the \probName~problem is in NP.

   We prove hardness by reduction from $k$-Min-Cut. Let $G$ be an undirected weighted graph, and let $\gamma>0$ be the threshold for the $k$-Min-Cut problem. For every pair of vertices $u,v \in \nodes(G)$, define $InterSim(\set{u,v})\eqdef w(u,v)$; define $\tau \eqdef \sum_{u,v\in \nodes(G)}w(u,v)- \gamma$. Let $G'$ be the DAG that results from $G$ by orienting its edges such that no directed cycles are generated (this is simple to do by fixing a complete order over the vertices, and orienting the edges accordingly). It is easy to see that there is a $k$-clustering $\set{V_1,\dots,V_k}$ of $G$ where the sum of weights of edges between vertices in distinct clusters is at least $\gamma$ if and only if $\sum_{i=1}^k InterSim(V_i)\geq \tau$ in $G'$. 
\end{proof}

}
\subsection{Proofs for Section \ref{sec:csep}}

Next, we prove some simple lemmas that will be useful later on. We denote by $\mathcal{H}_{UV}$ the summary DAG where nodes $U$ and $V$ are contracted. 
\begin{lemma}
\label{lemma:lemma1}
	The following holds:
	\begin{align}
		&\pi_{\mathcal{H}_{UV}}(\mb{X_{UV}})=\pi_{\cg_{UV}}(U)&& \pi_{\cg_{UV}}(V)=\pi_{\cg_{UV}}(U){\cup} \set{U}  \label{eq:lem:contractionAddedges1:1}\\
		&\child_{\mathcal{H}_{UV}}(\mb{X_{UV}})=\child_{\cg_{UV}}(V) && \child_{\cg_{UV}}(U)=\child_{\cg_{UV}}(V){\cup} \set{V} \label{eq:lem:contractionAddedges1:2}\\
		&\nondesc_{\mathcal{H}_{UV}}(\mb{X_{UV}}){=}\nondesc_{\cg_{UV}}(U) && \nondesc_{\cg_{UV}}(V){=}\nondesc_{\cg_{UV}}(U){\cup} \set{U} \label{eq:lem:contractionAddedges1:4}
	\end{align}
\end{lemma}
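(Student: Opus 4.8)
The plan is to argue directly from the construction of the canonical DAG (Definition~\ref{def:groundedDAG}) together with the contraction operation. The statement presupposes that $U$ precedes $V$ in the fixed topological order of $\cg$ (this is the source of the asymmetry in the claim, where $V$ acquires $U$ as an extra parent and $U$ acquires $V$ as an extra child), so I would assume this ordering throughout. Write $\cg_{UV}$ for the canonical DAG of $\mathcal{H}_{UV}$; it has node set $\nodes(\cg)$, so $U$ and $V$ persist as distinct vertices in it, and it is acyclic since the canonical construction respects the topological order of $\cg$ refined by the cluster order of $\mathcal{H}_{UV}$.

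First I would pin down the local edge structure of $\cg_{UV}$ around $U$ and $V$. Fix $W\in\nodes(\cg)\setminus\set{U,V}$. Since $f$ sends $U,V$ to $\mb{X_{UV}}$ and fixes $W$, Definition~\ref{def:groundedDAG} gives $(W\to U)\in\edges(\cg_{UV})$ iff $(W\to U)\in\edges(\cg)$ or $(W\to\mb{X_{UV}})\in\edges(\mathcal{H}_{UV})$, and the latter holds exactly when $W\in\pi(U)\cup\pi(V)$ in $\cg$; the identical condition characterizes $(W\to V)\in\edges(\cg_{UV})$, and symmetrically a node $W$ is a child of $U$ iff it is a child of $V$ iff $W\in\child_{\cg}(U)\cup\child_{\cg}(V)$. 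The third clause of Definition~\ref{def:groundedDAG} adds the intra-cluster edge $U\to V$ (because $U$ precedes $V$) and no edge $V\to U$. Hence $\pi_{\cg_{UV}}(U)=(\pi(U)\cup\pi(V))\setminus\set{U,V}$ with $\pi_{\cg_{UV}}(V)=\pi_{\cg_{UV}}(U)\cup\set{U}$, and dually $\child_{\cg_{UV}}(V)=(\child_{\cg}(U)\cup\child_{\cg}(V))\setminus\set{U,V}$ with $\child_{\cg_{UV}}(U)=\child_{\cg_{UV}}(V)\cup\set{V}$. In $\mathcal{H}_{UV}$ the cluster $\mb{X_{UV}}$ inherits precisely the direction-preserving neighbors of $U$ and $V$ with the internal edge dropped, so $\pi_{\mathcal{H}_{UV}}(\mb{X_{UV}})=(\pi(U)\cup\pi(V))\setminus\set{U,V}$ and $\child_{\mathcal{H}_{UV}}(\mb{X_{UV}})=(\child_{\cg}(U)\cup\child_{\cg}(V))\setminus\set{U,V}$. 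Matching these against the formulas above yields exactly \eqref{eq:lem:contractionAddedges1:1} and \eqref{eq:lem:contractionAddedges1:2}.

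For the non-descendant identities \eqref{eq:lem:contractionAddedges1:4} I would reason about reachability. Since $\cg_{UV}$ contains $U\to V$ and $\child_{\cg_{UV}}(U)=\child_{\cg_{UV}}(V)\cup\set{V}$, every directed path leaving $U$ either begins with $U\to V$ or reuses a child of $V$, so $\desc_{\cg_{UV}}(U)=\set{U}\cup\desc_{\cg_{UV}}(V)$; acyclicity gives $U\notin\desc_{\cg_{UV}}(V)$, and complementing yields the right-hand identity $\nondesc_{\cg_{UV}}(V)=\nondesc_{\cg_{UV}}(U)\cup\set{U}$. For the left-hand identity I would show a vertex $W\notin\set{U,V}$ is reachable from $U$ in $\cg_{UV}$ iff it is reachable from $\mb{X_{UV}}$ in $\mathcal{H}_{UV}$: a directed $U$--$W$ path starts either with $U\to V$ or with a child shared by $U$ and $V$ (by \eqref{eq:lem:contractionAddedges1:2}), and contracting the $\set{U,V}$ prefix produces a directed $\mb{X_{UV}}$--$W$ path; conversely every $\mb{X_{UV}}$--$W$ path lifts to one out of $V$, hence out of $U$. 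Thus $\desc_{\mathcal{H}_{UV}}(\mb{X_{UV}})$ equals $\desc_{\cg_{UV}}(U)$ with $U,V$ replaced by $\mb{X_{UV}}$, and complementing within the identified common node set $\nodes(\cg)\setminus\set{U,V}$ gives $\nondesc_{\mathcal{H}_{UV}}(\mb{X_{UV}})=\nondesc_{\cg_{UV}}(U)$.

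The parent/child bookkeeping is routine once the edge structure of $\cg_{UV}$ is written out; the step that requires genuine care is the reachability correspondence in \eqref{eq:lem:contractionAddedges1:4}, where I must verify that contraction neither creates nor destroys reachability from the merged node relative to the canonical DAG. This is the path-level manifestation of the ``contraction as edge addition'' principle behind Theorem~\ref{thm:contractionEdgeAddition}, and the acyclicity of $\cg_{UV}$ together with the already-established children identity \eqref{eq:lem:contractionAddedges1:2} is exactly what makes the path correspondence go through.
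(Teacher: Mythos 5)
Your proof is correct and takes essentially the same approach as the paper's: the parent/child identities are read off directly from the canonical-DAG construction together with the definition of contraction (with $U$ preceding $V$ in the topological order supplying the intra-cluster edge $U\to V$), and the identity $\nondesc_{\mathcal{H}_{UV}}(\mb{X_{UV}})=\nondesc_{\cg_{UV}}(U)$ is established by the same bidirectional path-lifting/contraction correspondence, pivoted on the children identity and the acyclicity of the summary DAG, that the paper uses in its two contradiction arguments. If anything you are slightly more complete, since you also spell out the right-hand identity $\nondesc_{\cg_{UV}}(V)=\nondesc_{\cg_{UV}}(U)\cup\set{U}$ via $\desc_{\cg_{UV}}(U)=\set{U}\cup\desc_{\cg_{UV}}(V)$, a step the paper's proof leaves implicit.
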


\begin{lemma}
	\label{lem:contractionAddedges2}
	Let $T\in \mathcal{V}$ such that $T\notin \set{U,V}\cup \child_{\cg}(U)\cup \child_{\cg}(V)$. Then it holds that:
	\begin{align}
		& \pi_{\cg_{UV}}(T)=\pi_{\mathcal{H}_{UV}}(T) \text{ and } \label{eq:sameParents}\\
		&\nondesc_{\cg_{UV}}(T){\setminus}\set{UV}=\nondesc_{\qg}(T){\setminus}\set{\mb{X_{UV}}} \text{ and }\label{eq:tNotChild} \\
		& \set{U,V}\subseteq \nondesc_{\cg_{UV}}(T)  \text{ if and only if }  \mb{X_{UV}}\in \nondesc_{\qg}(T) \label{eq:uvXuvparent} 
	\end{align}
	Now, let  $T\in \mathcal{V}$ such that $T\notin \set{U,V}\cup \pi_{\cg}(U)\cup \pi_{\cg}(V)$. Then:
	\begin{align}
		&\child_{\cg_{UV}}(T)=\child_{\mathcal{H}_{UV}}(T) \text{ and }  \label{eq:sameChildren}\\
		& \desc_{\cg_{UV}}(T){\setminus}\set{U,V}=\desc_{\mathcal{H}_{UV}}(T){\setminus}\set{\mb{X_{UV}}} \label{eq:tNotParent}\\
		& \set{U,V}\subseteq \desc_{\cg_{UV}}(T)  \text{ if and only if }  \mb{X_{UV}}\in \desc_{\mathcal{H}_{UV}}(T)
	\end{align}
\end{lemma}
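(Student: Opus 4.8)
The plan is to exploit the fact that $\cg_{UV}$ is precisely the canonical causal DAG $\groundedQG$ of the one-step contraction $\mathcal{H}_{UV}$ (Definition~\ref{def:groundedDAG}), so that $\mathcal{H}_{UV}$ is recovered from $\cg_{UV}$ by contracting the pair $\set{U,V}$ into $\mb{X_{UV}}$. Writing $\qg = \mathcal{H}_{UV}$, the natural device is the surjection $f$ that fixes every node outside $\set{U,V}$ and sends both $U$ and $V$ to $\mb{X_{UV}}$; all six identities will be read off from how $f$ relates edges and directed paths in the two graphs, using the neighborhood identities of Lemma~\ref{lemma:lemma1} (shared parents, shared children, and the internal edge $U\to V$). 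I would first dispatch the two immediate-neighborhood equalities, then prove a single reachability-correspondence claim, and finally derive the descendant and non-descendant statements from it.

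For the equality $\pi_{\cg_{UV}}(T){=}\pi_{\mathcal{H}_{UV}}(T)$, note that the hypothesis $T\notin \set{U,V}\cup \child_{\cg}(U)\cup \child_{\cg}(V)$ says $T$ is not a child of the contracted pair. Hence in $\mathcal{H}_{UV}$ there is no edge $\mb{X_{UV}}\to T$, so the in-edges of $T$ are exactly those of $T$ in $\cg$; and in $\cg_{UV}$ the only edges added by Definition~\ref{def:groundedDAG} that could point into $T$ have the form $U\to T$ or $V\to T$, which again require $T$ to be a child of $U$ or $V$. Thus $\pi_{\cg_{UV}}(T)=\pi_{\cg}(T)=\pi_{\mathcal{H}_{UV}}(T)$. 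The equality $\child_{\cg_{UV}}(T)=\child_{\mathcal{H}_{UV}}(T)$ in the second part is the exact dual, now using $T\notin\set{U,V}\cup\pi_{\cg}(U)\cup\pi_{\cg}(V)$ to rule out both $T\to\mb{X_{UV}}$ in $\mathcal{H}_{UV}$ and any added edge $T\to U$ or $T\to V$ in $\cg_{UV}$.

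The heart of the argument is the claim that for every $S,T\notin\set{U,V}$ there is a directed path $T\rightsquigarrow S$ in $\cg_{UV}$ iff there is one in $\mathcal{H}_{UV}$, and that $U\in\desc_{\cg_{UV}}(T)\iff V\in\desc_{\cg_{UV}}(T)\iff \mb{X_{UV}}\in\desc_{\mathcal{H}_{UV}}(T)$. The forward (``projection'') direction applies $f$ edgewise: every edge of $\cg_{UV}$ maps to an edge of $\mathcal{H}_{UV}$ except $U\to V$, which collapses, so any directed path projects to a directed walk and hence preserves reachability. The backward (``lifting'') direction replaces each visit of a path to $\mb{X_{UV}}$ by a detour through $U$ and/or $V$: by Lemma~\ref{lemma:lemma1} an incoming edge $W\to\mb{X_{UV}}$ lifts to $W\to U$ or $W\to V$, an outgoing edge $\mb{X_{UV}}\to C$ lifts to $U\to C$ or $V\to C$, and the internal edge $U\to V$ connects any incoming representative to any outgoing one. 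The equivalence $U\in\desc(T)\iff V\in\desc(T)$ follows because $\pi_{\cg_{UV}}(V)=\pi_{\cg_{UV}}(U)\cup\set{U}$: reaching $V$ forces reaching either $U$ itself or a common parent that then reaches $U$, while $U\to V$ gives the converse.

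From this claim the remaining four identities follow by complementation and bookkeeping: for $S\notin\set{U,V,\mb{X_{UV}}}$ the descendant memberships coincide, yielding $\desc_{\cg_{UV}}(T)\setminus\set{U,V}=\desc_{\mathcal{H}_{UV}}(T)\setminus\set{\mb{X_{UV}}}$ together with its non-descendant complement $\nondesc_{\cg_{UV}}(T)\setminus\set{U,V}=\nondesc_{\qg}(T)\setminus\set{\mb{X_{UV}}}$; and the two ``iff'' statements are exactly the second half of the claim, read once for descendants and once, by negation, for non-descendants. I expect the lifting direction of the reachability claim to be the only delicate step, since it requires checking that a path routed through $\mb{X_{UV}}$ can always be realized through $\set{U,V}$ without skipping a needed edge; here the neighborhood identities of Lemma~\ref{lemma:lemma1} and the acyclicity guaranteed by Lemma~\ref{lem:contractDAG} do the work, and the second part of the lemma is obtained by the symmetric argument on the edge-reversed graph.
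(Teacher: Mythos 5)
Your proposal is correct and takes essentially the same route as the paper's proof: you first settle \eqref{eq:sameParents} and \eqref{eq:sameChildren} by noting that only nodes in $\set{U,V}\cup\child_{\cg}(U)\cup\child_{\cg}(V)$ (resp.\ the parents' side) can have their neighborhoods altered, and you then derive \eqref{eq:tNotChild}--\eqref{eq:tNotParent} and the two ``iff'' statements from a directed-path correspondence between $\cg_{UV}$ and $\mathcal{H}_{UV}$ powered by the identities of Lemma~\ref{lemma:lemma1}, exactly as the paper does. Your projection-to-a-walk and lifting-via-the-edge $U\to V$ packaging is only a mild streamlining of the paper's shortest-path four-case analysis, and your appeal to edge-reversal symmetry for the second half matches the paper's implicit duality, so no substantive difference or gap remains.
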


\begin{proof}[proof of Lemma \ref{lemma:lemma1}]
By the definition of $G'$, it holds that $\pi_{G'}(u)=\pi_G(u)\cup \pi_G(v)$. Since $(u,v)\in \edges(G')$, then $\pi_{G'}(v)=\pi_{G'}(u) \cup \set{u}$.
Similarly, $\child_{G'}(v)=\child_G(u)\cup \child_G(v)$, and since $(u,v)\in \edges(G')$, then $\child_{G'}(u)=\child_{G'}(v) \cup \set{v}$.
By definition of edge contraction, it holds that $\pi_H(X_{uv})=\pi_G(u)\cup \pi_G(v)=\pi_{G'}(u)$, proving~\eqref{eq:lem:contractionAddedges1:1}.
Also, by definition of edge contraction, it holds that $\child_H(X_{uv})=\child_G(u)\cup \child_G(v)=\child_{G'}(v)$, proving~\eqref{eq:lem:contractionAddedges1:2}.

We now prove~\eqref{eq:lem:contractionAddedges1:4}. Let $t\in \nondesc_H(X_{uv})$. If $t \notin \nondesc_{G'}(u)$, then $t\in \desc_{G'}(u){\setminus}\set{v}$. This means that there is a directed path $P$ from $u$ to $t$ in $G'$. Let $s$ be the first vertex on this path (after $u$). Since $s\in \child_{G'}(u){\setminus}\set{v}$, then by the definition of $G'$, $s\in \child_G(u)\cup \child_G(v)$. By the definition of edge contraction, $s\in \child_H(X_{uv})$. Since $s\notin uv \cup \pi_G(u)\cup \pi_G(v)$, then every directed path starting at $s$ in $G$ remains a directed path in $H$. But this means that there is a directed path from $X_{uv}$ to $t$ (via $s$); contradicting the assumption that $t\in \nondesc_H(X_{uv})$. 
Now, let $t\in \nondesc_{G'}(u)$. If $t\notin \nondesc_H(X_{uv})$, then $t\in \desc_H(X_{uv}){\setminus}\set{X_{uv}}$.  This means that there is a directed path $P$ from $X_{uv}$ to $t$ in $H$. Let $s$ be the first vertex on this path (after $X_{uv}$). Since $s\in \child_{H}(X_{uv})$, then by the definition of $H$, $s\in \child_G(u)\cup \child_G(v)$. But then, by the definition of $G'$, it holds that $s\in \child_{G'}(u)$. Since no edges are removed by the transition from $G$ to $G'$, there is a directed path from $u$ to (via $s$) in $G'$; contradicting the assumption that $t\in \nondesc_{G'}(u)$. 
\end{proof}

\begin{proof}[Proof of Lemma \ref{lem:contractionAddedges2}]
By the definition of contraction, the only vertices in $G$ whose parent-set can potentially change following the contraction of $u$ and $v$ belong to the set $uv \cup \child_G(u) \cup \child_G(v)$.
By the definition of $\edges(G')$, the only vertices in $G$ whose parent-set can potentially change belong to the set $uv \cup \child_G(u) \cup \child_G(v)$.
Therefore, if $t\notin uv \cup \child_G(u) \cup \child_G(v)$, then $\pi_{G'}(t)=\pi_{H}(t)=\pi_G(t)$. This proves~\eqref{eq:sameParents}.

By the definition of contraction, the only vertices in $G$ whose child-set can potentially change following the contraction of $u$ and $v$ belong to the set $uv \cup \pi_G(u) \cup \pi_G(v)$.
By the definition of $\edges(G')$, the only vertices in $G$ whose child-set can potentially change belong to the set $uv \cup  \pi_G(u) \cup \pi_G(v)$.
Therefore, if $t\notin uv \cup \pi_G(u) \cup \pi_G(v)$, then $\child_{G'}(t)=\child_{H}(t)=\child_G(t)$. This proves~\eqref{eq:sameChildren}.

We now prove~\eqref{eq:tNotChild};
Let $s\in \nondesc_{H}(t){\setminus}\set{X_{uv}}$. If $s\notin \nondesc_{G'}(t)$, then $s\in \desc_{G'}(t)$. That is, there is a directed path $P$ from $t$ to $s$ in $G'$. Let us assume wlog that $P$ is the shortest directed path from $t$ to $s$ in $G'$. By this assumption, exactly one of the following holds: (1) $u,v \notin \nodes(P)$ (2) $u\in \nodes(P)$, $v\notin nodes(P)$ (3) $v\in \nodes(P)$, $u\notin nodes(P)$, or (4) $(u,v)\in \edges(P)$.
In the first case, every edge of $P$ is also an edge of $\edges(G)$, that does not enter or exit $\set{u,v}$. Therefore, $P$ is a directed path in $H$, a contradiction. In case (2), since $\pi_{G'}(u)=\pi_H(X_{uv})$ (see~\eqref{eq:lem:contractionAddedges1:1}), and $\child_{G'}(u)=\child_H(X_{uv}){\setminus}\set{v}$ (see~\eqref{eq:lem:contractionAddedges1:2}), then the path with nodes $X_{uv}\cup(\nodes(P)\setminus\set{u}) $, is a directed path in $H$ from $s$ to $t$; a contradiction.
In case (3), since $\child_{G'}(v)=\child_H(X_{uv})$ (see~\eqref{eq:lem:contractionAddedges1:2}), and $\pi_{G'}(v)=\pi_H(X_{uv}){\setminus}\set{u}$ (see~\eqref{eq:lem:contractionAddedges1:1}), then the path with nodes $X_{uv}\cup(\nodes(P)\setminus\set{v}) $, is a directed path in $H$ from $s$ to $t$; a contradiction. Finally, if $(u,v)\in \edges(P)$, then since $\pi_{G'}(u)=\pi_H(X_{uv})$ and $\child_{G'}(v)=\child_H(X_{uv})$, then the path with nodes $X_{uv}\cup (\nodes(P){\setminus}uv)$, is a directed path from $s$ to $t$ in $H$; a contradiction.
For the other direction, let $s\in \nondesc_{G'}(t){\setminus} uv$. If $s\notin \nondesc_H(t)$, then there is a directed path $P$ from $t$ to $s$ in $H$. If $X_{uv}\notin \nodes(P)$, then $\edges(P)\subseteq \edges(G)\subseteq \edges(G')$, and hence $P$ is a directed path from $t$ to $s$ in $G'$. Otherwise, if $X_{uv}\in \nodes(P)$, then since $\pi_H(X_{uv})=\pi_{G'}(u)$, $\child_H(X_{uv})=\child_{G'}(v)$, and $(u,v)\in \edges(G')$, then replacing $X_{uv}$, with the edge $(u,v)$ results in a directed $t,s$-path in $G'$; a contradiction.
\eat{
If $u,v \notin \nodes(P)$, then $P$ is a path in $H$, and hence $s\in \desc_H(t)$, which leads to a contradiction. If $u\in \nodes(P)$, then since $(u,v)\in \edges(G')$, and $\child_{G'}(u)=\child_{G'}(v)\cup \set{v}$, then $P$ can be made to go through both $u$ and $v$, and we can assume that $u,v \in \nodes(P)$. Likewise,  if $v\in \nodes(P)$, then since $(u,v)\in \edges(G')$, and $\pi_{G'}(u)\cup\set{u}=\pi_{G'}(v)$, then $P$ can be made to go through both $u$ and $v$, and we can assume that $u,v \in \nodes(P)$. So, we assume that $uv \subseteq \nodes(P)$. Since $H$ is a DAG, then by Lemma~\ref{lem:contractDAG}, there is no directed path between $u$ and $v$ whose length is greater than 2. But this means that $(u,v)\in \edges(P)$. But this means that $\nodes(P){\setminus}uv \cup X_{uv}$ form a directed path in $H$, contradicting the assumption that $s\in \nondesc_{H}(t)$. 
}
\end{proof}

\begin{proof} [Proof of Theorem \ref{thm:contractionEdgeAddition}]
	We first prove that $\Sigma_{\text{RB}}(G')\implies \Sigma_{\text{RB}}(H)$. We divide to cases.
	Let $(X_i; B_i|\pi_H(X_i))\in \Sigma_{\text{RB}}(H)$, where $X_{uv}\notin B_i\cup \pi_H(X_i)\cup \set{X_i}$.
	In particular, $X_i\in \nodes(G)$, and $X_i\notin \set{u,v}\cup \child_H(X_{uv})=\set{u,v}\cup \child_G(u)\cup \child_G(v)$.
	By~\eqref{eq:tNotChild}, we have that $\pi_{G'}(X_i)=\pi_H(X_i)$, and that $\nondesc_{G'}(X_i){\setminus}uv=\nondesc_{H}(X_i){\setminus}X_{uv}$.
	Therefore, we have that $(X_i;\nondesc_{H}(X_i){\setminus}\set{X_{uv}}|\pi_{G'}(X_i))_{G'}$. Since
	 $B_i\subseteq \nondesc_{H}(X_i){\setminus}\set{X_{uv}}$, then, by decomposition, we have that $\Sigma_{\text{RB}}(G')\implies (X_i; B_i| \pi_H(X_i))$.
	 
	Now, let $(X_i; X_{uv}B_i|\pi_H(X_i))\in \Sigma_{\text{RB}}(H)$. In this case as well $X_i\in \nodes(G)$, and $X_i\notin \set{u,v}\cup \child_H(X_{uv})=\set{u,v}\cup \child_G(u)\cup \child_G(v)$. By~\eqref{eq:tNotChild}, we have that $\pi_{G'}(X_i)=\pi_H(X_i)$, and that $\nondesc_{G'}(X_i){\setminus}uv=\nondesc_{H}(X_i){\setminus}X_{uv}$. 
	By , we have that $X_{uv}\in \nondesc_{H}(X_i)$ iff $uv \subseteq \nondesc_{G'}(X_i)$. Therefore, $B_iX_{uv}\subseteq \nondesc_{H}(X_i)$ iff $B_iuv\subseteq \nondesc_{G'}(X_i)$. This means that $\Sigma_{\text{RB}}(G')\implies (X_i;\nondesc_{G'}(X_i)|\pi_{G'}(X_i))$. By decomposition, we have that $\Sigma_{\text{RB}}(G')\implies (X_i;B_iuv|\pi_{H}(X_i))$ as required.

	Now, suppose that $X_{uv}\in \pi_H(X_i)$, or that $X_i \in \child_H(X_{uv})$. Since $X_i\in \nodes(G){\setminus}\set{u,v}$, then by~\eqref{eq:lem:contractionAddedges1:2}, we have that $X_i\in \child_{G'}(v){\setminus}\set{v}$. Therefore, $\pi_{G'}(X_i)=\pi_H(X_i){\setminus}\set{X_{uv}}\cup \set{u,v}$. 
	By~\eqref{eq:tNotChild}, we have that:
	\begin{align*}
		\nondesc_{G'}(X_i){\setminus}\pi_{G'}(X_i)&=\nondesc_{G'}(X_i){\setminus}(\pi_H(X_i){\setminus}\set{X_{uv}}\cup uv)\\
		&=(\nondesc_{G'}(X_i){\setminus}uv){\setminus}(\pi_H(X_i){\setminus}\set{X_{uv}})\\
		&\underbrace{=}_{\eqref{eq:tNotChild}}(\nondesc_{H}(X_i){\setminus}\set{X_{uv}}){\setminus}(\pi_H(X_i){\setminus}\set{X_{uv}}) \\
		&=\nondesc_{H}(X_i){\setminus}\pi_H(X_i)
	\end{align*} 
Therefore, $\Sigma_{\text{RB}}(G')\implies (X_i; \nondesc_H(X_i){\setminus}\pi_H(X_i) \mid \pi_H(X_i){\setminus}\set{X_{uv}}\cup \set{u,v})$. 
Finally, we consider the case where $X_i=X_{uv}$. By construction of $G'$, and by~\eqref{eq:lem:contractionAddedges1:4}, it holds that:
\begin{align}
	\Sigma_{\text{RB}}(G')&\implies (u; \nondesc_{G'}(u){\setminus}\pi_{G'}(u) \mid \pi_{G'}(u)) \label{eq:GImpliesHContractionAxiom:1}\\
	\Sigma_{\text{RB}}(G')&\implies (v; \nondesc_{G'}(u){\setminus}\pi_{G'}(u) \mid \pi_{G'}(u)\cup \set{u}) \label{eq:GImpliesHContractionAxiom:2}
\end{align}
By applying the contraction axiom on~\eqref{eq:GImpliesHContractionAxiom:1} and~\eqref{eq:GImpliesHContractionAxiom:2}, we get that 
$$\Sigma_{\text{RB}}(G') \implies (uv; \nondesc_{G'}(u){\setminus}\pi_{G'}(u) \mid \pi_{G'}(u)).$$
Using the fact that hat $\pi_H(X_{uv})=\pi_{G'}(u)$ (see~\eqref{eq:lem:contractionAddedges1:1}), and that $\nondesc_H(X_{uv})=\nondesc_{G'}(u)$ (see~\eqref{eq:lem:contractionAddedges1:4}), we get that 
$$\Sigma_{\text{RB}}(G') \implies (uv; \nondesc_{H}(X_{uv}){\setminus}\pi_{H}(X_{uv}) \mid\pi_{H}(X_{uv})).$$
Since $B_i \subseteq \nondesc_{H}(X_{uv}){\setminus}(\pi_{H}(X_{uv})\cup \set{X_{uv}})$, this proves the claim.

Now, for the other direction. Let $(X_i;B_i \mid \pi_{G'}(X_i))\in \Sigma_{\text{RB}}(G')$. If $u,v \notin X_i\cup B_i \cup \pi_{G'}(X_i))$, then $X_i\notin \set{u,v}\cup \child_G(u) \cup \child_G(v)$. By~\eqref{eq:tNotChild}, it holds that $\pi_H(X_i)=\pi_{G'}(X_i)$, and that $\nondesc_{G'}(X_i){\setminus}uv=\nondesc_H(X_i){\setminus}X_{uv}$. Since $B_i\subseteq \nondesc_{G'}(X_i){\setminus}uv=\nondesc_H(X_i){\setminus}X_{uv}$, then  $\Sigma_{\text{RB}}(H)\implies (X_i;B_i \mid \pi_{G'}(X_i))$.

If $uv \subseteq B_i$, then $u,v\notin \pi_{G'}(X_i)$, then $X_i\notin uv\cup \child_G(u)\cup \child_G(v)$. By~\eqref{eq:tNotChild}, we have that $\pi_H(X_i)=\pi_{G'}(X_i)$, and that $\nondesc_{H}(X_i){\setminus}X_{uv}=\nondesc_{G'}(X_i){\setminus}uv$. Therefore, $B_i{\setminus}uv\subseteq \nondesc_{H}(X_i)$, and by~\eqref{eq:uvXuvparent}, if $uv \subseteq B_i \subseteq \nondesc_{G'}(X_i)$, then $X_{uv}\in \nondesc_{H}(X_i)$.
Therefore, $\Sigma_{\text{RB}}(H)\implies (X_i;B_i{\setminus}uv \cup X_{uv} \mid \pi_{G'}(X_i))$, and since $X_{uv}=uv$, then $\Sigma_{\text{RB}}(H)\implies (X_i;B_i \mid \pi_{G'}(X_i))$. 

Since $(u,v)\in \edges(G')$, we are left with two other cases. 
First, that $(u;B_u \mid \pi_{G'}(u))$, and second $(v; B_v \mid \pi_{G'}(v))$. 
By $d$-separation in $H$, the following holds:
\begin{align}
\Sigma_{\text{RB}}(H) & \implies (X_{uv}; \nondesc_H(X_{uv}){\setminus}\pi_H(X_{uv}) \mid \pi_H(X_{uv})) \nonumber \\
& \underbrace{\implies}_{\eqref{eq:lem:contractionAddedges1:1},~\eqref{eq:lem:contractionAddedges1:4}}(X_{uv}; \nondesc_{G'}(u){\setminus}\pi_{G'}(u) \mid \pi_{G'}(u)) \nonumber \\
& \implies (uv ; \nondesc_{G'}(u){\setminus}\pi_{G'}(u) \mid \pi_{G'}(u)) \label{eq:HImpliesG1}
\end{align}

By~\eqref{eq:lem:contractionAddedges1:1}, it holds that $\pi_{G'}(v)= \pi_{G'}(u)\cup \set{u}$. 
By~\eqref{eq:lem:contractionAddedges1:4}, it holds that 
\begin{align*}
B_v \subseteq \nondesc_{G'}(v){\setminus}\pi_{G'}(v)&=(\nondesc_{G'}(u)\cup \set{u}){\setminus}(\pi_{G'}(u)\cup \set{u})\\
&=\nondesc_{G'}(u){\setminus}\pi_{G'}(u)
\end{align*}
Therefore, $B_v \cup B_u \subseteq \nondesc_{G'}(u){\setminus}\pi_{G'}(u)$.
In other words, by~\eqref{eq:HImpliesG1}, we have that;
\begin{align*}
&\Sigma_{\text{RB}}(H) \implies (uv ; B_u \cup B_v \mid \pi_{G'}(u))  \text{ if and only if } \\
&\Sigma_{\text{RB}}(H) \implies (u ; B_u \cup B_v \mid \pi_{G'}(u)) ,  (v ; B_u \cup B_v \mid \pi_{G'}(u)\cup \set{u}) 
\end{align*}
Since  $\pi_{G'}(v)= \pi_{G'}(u)\cup \set{u}$, then
overall, we have that $\Sigma_{\text{RB}}(H) \implies (u;B_u \mid \pi_{G'}(u))$, and $\Sigma_{\text{RB}}(H) \implies (v;B_v \mid \pi_{G'}(v))$. This completes the proof.
\end{proof}

To prove Theorem \ref{cor:dsep}, we first show the following lemma that establishes the connection between $d$-separation on the \groundedDAG\ and the original causal DAG. 
\begin{lemma}
\label{lem:dSepsupergraph}
Let $\cg$ and $\cg'$ be causal DAGs defined over the same set of nodes, i.e., $\nodes(\cg)=\nodes(\cg')$, where $\cg'$ is a supergraph of $\cg$ ($\edges(\cg')\supseteq \edges(\cg)$). Then, for any three disjoint subsets $\mb{X},\mb{Y},\mb{Z}\subseteq \nodes(\cg)$, it holds that:
$(\mb{X} \indep_d \mb{Y} |\mb{Z})_{\cg'}\implies (\mb{X} \indep_d \mb{Y} |\mb{Z})_{\cg}$
\end{lemma}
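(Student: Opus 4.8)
The plan is to prove the contrapositive: assuming $\mb{X}$ and $\mb{Y}$ are $d$-\emph{connected} given $\mb{Z}$ in the sparser graph $\cg$, I would show they remain $d$-connected given $\mb{Z}$ in the supergraph $\cg'$. This immediately yields the stated implication, since $d$-separation is exactly the negation of $d$-connection. Concretely, I would fix an active trail $t=(V_1,\dots,V_m)$ in $\cg$ with $V_1\in \mb{X}$ and $V_m\in \mb{Y}$, and argue that the \emph{same} sequence of nodes is an active trail in $\cg'$, witnessing the $d$-connection there.

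First I would observe that $t$ is still a valid trail in $\cg'$. Since $\edges(\cg)\subseteq \edges(\cg')$ and no existing edge is removed or reoriented, each consecutive pair $(V_i,V_{i+1})$ remains joined by an edge of the same orientation in $\cg'$. Crucially, whether an internal node $V_i$ is head-to-head with respect to $t$ depends only on the orientations of the two trail edges incident to $V_i$ (namely $V_{i-1}\to V_i$ and $V_i\leftarrow V_{i+1}$), and these orientations are identical in $\cg$ and $\cg'$. Hence the head-to-head classification of every node along $t$ is the same in both graphs.

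Next I would verify that the two activeness conditions carry over. The condition on non-head-to-head nodes---that such a node does not belong to $\mb{Z}$---is a pure membership statement independent of the graph, so it is inherited unchanged. For the condition on head-to-head nodes---that each lies in $\mb{Z}$ or has a descendant in $\mb{Z}$---I would invoke the monotonicity $\desc_{\cg}(V)\subseteq \desc_{\cg'}(V)$: because every directed path of $\cg$ survives intact in $\cg'$, descendant sets can only grow under edge addition. Thus any collider on $t$ that was unblocked in $\cg$ (via membership in $\mb{Z}$ or a descendant in $\mb{Z}$) stays unblocked in $\cg'$. Combining these observations, $t$ is active in $\cg'$, completing the contrapositive.

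The argument is essentially local bookkeeping, and I expect no genuine obstacle; the single point demanding care is the descendant-set monotonicity, which is precisely what guarantees that edge addition cannot \emph{block} a previously active trail---the only graph-dependent ingredient, whether a collider has a descendant in $\mb{Z}$, moves in the favorable direction. This lemma is the per-graph counterpart of Pearl's observation quoted in Section~\ref{sec:identifiability} that adding arcs only shrinks the set of $d$-separations, and it is exactly the tool needed for the soundness direction of Theorem~\ref{cor:dsep}, once applied to a compatible DAG $\cg\in\comp$ together with an appropriate supergraph derived from $\groundedQG$.
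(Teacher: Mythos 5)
Your proposal is correct and takes essentially the same approach as the paper: both arguments fix an active trail in $\cg$, observe that it survives with identical edge orientations (hence identical head-to-head classification) in $\cg'$, treat the non-collider condition as graph-independent membership in $\mb{Z}$, and invoke the monotonicity $\desc_{\cg}(W)\subseteq \desc_{\cg'}(W)$ under edge addition to keep colliders unblocked. The only cosmetic difference is that the paper phrases this as a contradiction with the assumed $d$-separation in $\cg'$ rather than as a direct contrapositive, which is logically the same argument.
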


\begin{proof}[Proof of Lemma \ref{lem:dSepsupergraph}]
	Suppose that $X$ and $Y$ are $d$-separated by $Z$ in $G'$ (i.e., $(X;Y|Z)_{G'}$). If $X$ and $Y$ are $d$-connected by $Z$ in $G$, then let $P$ denote the unblocked path between $X$ and $Y$, relative to $Z$. Since $\edges(G')\supseteq \edges(G)$, then clearly $P$ is a path in $G'$ as well. Consider any triple $(x,w,y)$ on this path. If this triple has one of the forms $$\set{x\rightarrow w\rightarrow y, x\leftarrow w\leftarrow y, x \leftrightarrow w \rightarrow y, x\leftarrow w \leftrightarrow y},$$ then since $P$ is unblocked in $G$, relative to $Z$, then $w\notin Z$. Since $w\notin Z$, then the subpath $(x,w,y)$ is also unblocked in $G'$.
	If the triple has one of the forms:
	$$\set{x\leftrightarrow w\leftarrow y, x\rightarrow w\leftrightarrow y, x\rightarrow w \leftarrow y},$$
	then since $P$ is unblocked in $G$, relative to $Z$, then $\desc_G(w) \cap Z\neq \emptyset$. Since $\edges(G')\supseteq \edges(G)$, then $\desc_G(w)\subseteq \desc_{G'}(w)$. Therefore, $\desc_{G'}(w) \cap Z\neq \emptyset$. Consequently, we have, again, that the subpath $(x,w,y)$ is unblocked in $G'$.
	Overall, we get that every triple $(x,w,y)$ on the path $P$ is unblocked in $G'$, relative to $Z$, and hence $X$ and $Y$ are $d$-connected in $G'$, a contradiction.
	
	By definition, $G'$ is compatible with $G'$. Therefore, if $X$ and $Y$ are $d$-connected by $Z$ in $G'$, then by the completeness of $d$-separation, there exists a probability distribution that factorizes according to $G'$ in which the CI $(X;Y|Z)$ does not hold. This proves completeness.
\end{proof}

\begin{proof}[Proof of Theorem \ref{cor:dsep}]
	Let $\groundedQG$ denote the \groundedDAG\ corresponding to $\qg$. By Theorem~\ref{thm:contractionEdgeAddition}, $\Sigma_{\text{RB}}(\qg)\equiv \Sigma_{\text{RB}}(\groundedQG)$. Therefore, $(\mb{X}\indep_d \mb{Y}|\mb{Z})_{\qg} \Longleftrightarrow (f^{-1}(\mb{X})\indep_d f^{-1}(\mb{Y})|f^{-1}(\mb{Z}))_{\groundedQG}$
 Since $\edges(\cg)\subseteq \edges(\groundedQG)$, the claim immediately follows from Lemma~\ref{lem:dSepsupergraph}.
\end{proof}

\subsection{Proofs for Section \ref{sec:identifiability}}

We next show a smile lemma that will be useful for proving the soundness and completeness of do-calculus in summary graphs.


\begin{lemma}
	\label{lem:mutilatedGraphInclusion}
Let $G$ be ADMG, and let $G'$ be an ADMG where $\nodes(G')=\nodes(G)$, and $\edges(G')\supseteq \edges(G)$. Let $A,B,C\subseteq \nodes(G)$ be disjoint sets of variables, and let $X,Z \subseteq \nodes(G)$. Then:
\begin{align}
	(A;B|C)_{G'_{\overline{X}\underline{Z}}} \implies (A;B|C)_{G_{\overline{X}\underline{Z}}} 
\end{align}
\end{lemma}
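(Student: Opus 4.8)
The plan is to reduce this statement to Lemma~\ref{lem:dSepsupergraph}, which already establishes that $d$-separation is preserved when passing from a supergraph to a subgraph over the same vertex set. The key observation is that the two mutilation operations --- deleting all edges whose head lies in $X$ and all edges whose tail lies in $Z$ --- delete edges according to a purely \emph{structural} criterion that depends only on the edge itself (its endpoints and orientation), not on the surrounding graph. Consequently, applying the identical mutilation to $G$ and to its supergraph $G'$ preserves the subgraph relation, after which the result follows from the supergraph lemma.

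Concretely, the first step is to show that $\edges(G_{\overline{X}\underline{Z}}) \subseteq \edges(G'_{\overline{X}\underline{Z}})$. Take any edge $e \in \edges(G_{\overline{X}\underline{Z}})$. By the definition of mutilation, $e \in \edges(G)$, and $e$ is neither an edge with head in $X$ nor an edge with tail in $Z$. Since $\edges(G) \subseteq \edges(G')$, we have $e \in \edges(G')$; and because the ``incoming to $X$'' and ``outgoing from $Z$'' predicates are properties of the edge itself, $e$ likewise survives the mutilation applied to $G'$, so $e \in \edges(G'_{\overline{X}\underline{Z}})$. The mutilation of $G'$ may additionally delete edges that were present only in $G'$ (e.g.\ extra edges into $X$), but this only shrinks its edge set and cannot break the inclusion. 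Hence $G'_{\overline{X}\underline{Z}}$ is a supergraph of $G_{\overline{X}\underline{Z}}$ over the common node set $\nodes(G)$.

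With this inclusion established, I would invoke Lemma~\ref{lem:dSepsupergraph} with $G_{\overline{X}\underline{Z}}$ playing the role of the subgraph and $G'_{\overline{X}\underline{Z}}$ playing the role of the supergraph, applied to the disjoint sets $A,B,C$. This immediately yields $(A;B|C)_{G'_{\overline{X}\underline{Z}}} \implies (A;B|C)_{G_{\overline{X}\underline{Z}}}$, which is exactly the claim.

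The one point requiring care --- and the main potential obstacle --- is that $G$ and $G'$ here are ADMGs rather than plain DAGs, whereas Lemma~\ref{lem:dSepsupergraph} is phrased for causal DAGs. This is not a genuine difficulty: the proof of that lemma proceeds by a case analysis over the triples $(x,w,y)$ along an active path and already handles the bidirected forms (the $x \leftrightarrow w$ cases), so it applies verbatim to ADMGs. I would simply state that we use the ADMG form of Lemma~\ref{lem:dSepsupergraph}. Everything else is bookkeeping about which edges each mutilation removes, and is routine.
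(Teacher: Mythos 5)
Your proposal is correct and matches the paper's own proof essentially verbatim: both establish $\edges(G_{\overline{X}\underline{Z}})\subseteq \edges(G'_{\overline{X}\underline{Z}})$ by noting that an edge $(u,v)$ surviving the mutilation of $G$ has $u\notin Z$ and $v\notin X$ and therefore also survives the mutilation of $G'$, and then invoke the supergraph $d$-separation result (Lemma~\ref{lem:dSepsupergraph}). Your remark about the ADMG setting is also consistent with the paper, whose proof of that lemma already performs the case analysis over triples including the bidirected forms $x\leftrightarrow w$, so it applies to ADMGs as you claim.
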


\begin{corollary}
	\label{corr:mutilatedSummaryDAG}
	Let $G$ be ADMG, and let $(H,f)$ be a summary-DAG for $G$. 
    Let $A,B,C\subseteq \nodes(H)$ be disjoint sets of nodes, and let $X,Z \subseteq \nodes(H)$. Then:
	\begin{align}
		(A;B|C)_{H_{\overline{X}\underline{Z}}} \implies ({\bf A};{\bf B}|{\bf C})_{G_{\overline{{\bX}}\underline{{\bZ}}}} 
	\end{align}
where for $U \subseteq \nodes(H)$, we denote ${\bf U}\eqdef f(U)$. 
\end{corollary}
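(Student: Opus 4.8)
The plan is to reduce the whole statement to a single invocation of Theorem~\ref{cor:dsep}, by producing a causal DAG that is compatible with the \emph{mutilated} summary DAG and whose own mutilation is exactly the target graph. Write $\qg'\eqdef \qg_{\overline{X}\underline{Z}}$ for the mutilated summary DAG and $\cg'\eqdef \cg_{\overline{\mb{X}}\underline{\mb{Z}}}$ for the mutilated original, where $\mb{X}\eqdef f^{-1}(X)$ and $\mb{Z}\eqdef f^{-1}(Z)$ are the node-level preimages of the cluster sets $X,Z$ (this is the intended reading of $\mb{U}$ in the statement). Both graphs are obtained from DAGs by deleting edges, so both are acyclic, and $(\qg',f)$ keeps the same partition function $f$ as $(\qg,f)$. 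The key fact I would establish first is that $\cg'$ is \emph{compatible} with $\qg'$ in the sense of Definition~\ref{def:compatSD}, i.e. that $(\qg',f)$ is a summary DAG for $\cg'$ per Definition~\ref{def:summaryDAG}.

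This compatibility check is the crux, and it amounts to showing that cluster-level mutilation of $\qg$ matches node-level mutilation of $\cg$. I would take an arbitrary edge $(U,V)\in \edges(\cg')$; by construction it is an edge of $\cg$ that survived both operations, so $V\notin \mb{X}$ and $U\notin \mb{Z}$. Because $\cg$ is compatible with $\qg$, either $f(U){=}f(V)$, in which case the first clause of Definition~\ref{def:summaryDAG} holds and we are done, or $(f(U),f(V))\in \edges(\qg)$. In the latter case $V\notin \mb{X}{=}f^{-1}(X)$ gives $f(V)\notin X$, and $U\notin \mb{Z}{=}f^{-1}(Z)$ gives $f(U)\notin Z$; hence the cluster edge $(f(U),f(V))$ is removed by neither $\overline{X}$ nor $\underline{Z}$ and therefore still lies in $\edges(\qg')$. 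Thus every edge of $\cg'$ satisfies Definition~\ref{def:summaryDAG}, so $\cg'$ is compatible with $\qg'$.

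With compatibility in hand the conclusion is immediate: applying Theorem~\ref{cor:dsep} to the summary DAG $(\qg',f)$ and the compatible DAG $\cg'\in\{\mathcal{G}_i\}_{\qg'}$, with query sets $A,B,C$, yields $(A\indep_d B\mid C)_{\qg'}\implies (f^{-1}(A)\indep_d f^{-1}(B)\mid f^{-1}(C))_{\cg'}$, which is exactly $(A;B|C)_{\qg_{\overline{X}\underline{Z}}}\implies (\mb{A};\mb{B}|\mb{C})_{\cg_{\overline{\mb{X}}\underline{\mb{Z}}}}$. The main obstacle I anticipate is precisely why one cannot argue more naively by ``mutilate, then contract'': the canonical DAG $\groundedQG$ inserts intra-cluster clique edges inside the clusters making up $X$ and $Z$, and node-level mutilation of $\groundedQG$ would delete exactly those, so forming the canonical DAG and mutilating do not commute on the nose. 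Routing through compatibility sidesteps this mismatch entirely. For the full ADMG generalization in the statement (where bidirected edges are present and Theorem~\ref{cor:dsep} is phrased for DAGs), I would instead transfer the $d$-separation from $\qg'$ to a mutilated supergraph of $\cg$ and then descend to $\cg'$ via Lemma~\ref{lem:mutilatedGraphInclusion}, using $\edges(\cg)\subseteq\edges(\groundedQG)$.
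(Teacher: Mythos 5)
Your proof is correct, but it follows a genuinely different route from the paper's. The paper argues through the canonical causal DAG: it transfers the $d$-separation statement from $\qg_{\overline{X}\underline{Z}}$ to a grounded DAG via the recursive-basis equivalence of Theorem~\ref{thm:contractionEdgeAddition}, then descends to $\cg_{\overline{\bX}\underline{\bZ}}$ via the supergraph-mutilation Lemma~\ref{lem:mutilatedGraphInclusion}. You instead prove a commutation fact --- that node-level mutilation of $\cg$ stays compatible (Definition~\ref{def:compatSD}) with cluster-level mutilation of $\qg$ under the same $f$ --- and invoke the soundness half of Theorem~\ref{cor:dsep} once; your edge-chase ($V\notin f^{-1}(X)$ gives $f(V)\notin X$, $U\notin f^{-1}(Z)$ gives $f(U)\notin Z$, so surviving edges of $\cg_{\overline{\bX}\underline{\bZ}}$ map to surviving edges of $\qg_{\overline{X}\underline{Z}}$) is airtight, and acyclicity is free since mutilation only deletes edges. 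What your route buys is precisely the point you flag: the paper's proof silently identifies the grounded DAG of the mutilated summary with the mutilated grounded DAG $(G_H)_{\overline{\bX}\underline{\bZ}}$, though these differ on the intra-cluster clique edges inside clusters of $X$ and $Z$; patching that step rigorously essentially requires your compatibility observation anyway, so your version is cleaner. What the paper's route buys is that Lemma~\ref{lem:mutilatedGraphInclusion} is stated for ADMGs, so bidirected edges come for free, whereas Theorem~\ref{cor:dsep} is phrased for DAGs --- your closing fallback for the ADMG case is, in effect, the paper's proof. You also correctly read $\mathbf{U}$ as $f^{-1}(U)$, the only reading under which the statement typechecks.
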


\begin{theorem}[Soundness of Do-Calculus in supergraphs]
\label{thr:do-calculus-supergraphs}
Let $\cg$ be a causal DAG encoding an interventional distribution $P(\cdot \mid do(\cdot))$. Let $\cg'$ be a causal DAG where $\nodes(\cg) = \nodes(\cg')$ and $\edges(\cg)\subseteq \edges(\cg')$. For any disjoint subsets $\mb{X,Y,Z,W} \subseteq \nodes(\cg)$, the following three rules hold:%

{\footnotesize
\begin{align*}
	{\bf R}_1: && (\mb{Y} \indep \mb{Z}|\mb{X,W})_{\cg'_{\overline{\mb{X}}}} &\implies P(\mb{Y}\mid do(\mb{X}),\mb{Z,W})=P(\mb{Y}\mid do(\mb{X}),\mb{W}) \\
	{\bf R}_2: && (\mb{Y}\indep \mb{Z}|\mb{X,W})_{\cg'_{\overline{\mb{X}}\underline{\mb{Z}}}} &\implies P(\mb{Y}\mid do(\mb{X}),do(\mb{Z}),\mb{W})=P(\mb{Y}\mid do(\mb{X}),\mb{Z}, \mb{W}) \\
	{\bf R}_3: && (\mb{Y} \indep \mb{Z}|\mb{X},\mb{W})_{\cg'_{\overline{\mb{X}}\overline{\mb{Z}(\mb{W})}}} &\implies P(\mb{Y}\mid do(\mb{X}),do(\mb{Z}),\mb{W})=P(\mb{Y}\mid do(\mb{X}), \mb{W}) 
\end{align*}}%
where $\mb{Z}(\mb{W})$ is the set of nodes in $\mb{Z}$ that are not ancestors of any node in $\mb{W}$. That is, $\mb{Z}(\mb{W})=\mb{Z}{\setminus}\ancs_{\cg'}(\mb{W})$ where $\ancs_{\cg'}(\mb{W})\eqdef \bigcup_{W\in \mb{W}}\ancs_{\cg'}(W)$.
\end{theorem}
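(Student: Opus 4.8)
The plan is to reduce each of the three rules to Pearl's classical do-calculus applied to the true model $\cg$ itself, exploiting that $P(\cdot\mid do(\cdot))$ is the interventional distribution generated by $\cg$. Since $P$ factorizes according to $\cg$, Pearl's three rules are sound whenever their $d$-separation premises are read off the appropriately mutilated copy of $\cg$: namely $\cg_{\overline{\mb{X}}}$ for $\mathbf{R}_1$, $\cg_{\overline{\mb{X}}\underline{\mb{Z}}}$ for $\mathbf{R}_2$, and $\cg_{\overline{\mb{X}}\,\overline{\mb{Z}(\mb{W})}}$ for $\mathbf{R}_3$. The work therefore reduces to transporting each premise, which is stated over the larger graph $\cg'$, down to the corresponding premise over $\cg$, after which I invoke the standard soundness of do-calculus for $\cg$. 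The guiding principle is Pearl's observation quoted in Section \ref{sec:identifiability}: adding arcs can only destroy $d$-separations, formalized here by Lemma \ref{lem:dSepsupergraph} and Lemma \ref{lem:mutilatedGraphInclusion}.

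For $\mathbf{R}_1$ and $\mathbf{R}_2$ the reduction is an immediate instantiation of Lemma \ref{lem:mutilatedGraphInclusion}. Applying it with barred set $\mb{X}$ and empty underlined set turns the premise $(\mb{Y}\indep\mb{Z}\mid\mb{X},\mb{W})_{\cg'_{\overline{\mb{X}}}}$ into $(\mb{Y}\indep\mb{Z}\mid\mb{X},\mb{W})_{\cg_{\overline{\mb{X}}}}$, and Pearl's $\mathbf{R}_1$ for $\cg$ then yields the claimed equality; applying it with barred $\mb{X}$ and underlined $\mb{Z}$ handles $\mathbf{R}_2$ in the same way.

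The delicate point is $\mathbf{R}_3$, because the pruned set $\mb{Z}(\mb{W})=\mb{Z}\setminus\ancs(\mb{W})$ is defined via ancestry, which differs between $\cg$ and $\cg'$. Here I would first record the monotonicity $\ancs_{\cg}(\mb{W})\subseteq\ancs_{\cg'}(\mb{W})$ (new edges can only create new ancestors), whence $\mb{Z}(\mb{W})_{\cg'}\subseteq\mb{Z}(\mb{W})_{\cg}$. I then claim that $\cg_{\overline{\mb{X}}\,\overline{\mb{Z}(\mb{W})_{\cg}}}$ is a subgraph of $\cg'_{\overline{\mb{X}}\,\overline{\mb{Z}(\mb{W})_{\cg'}}}$ on the common node set $\nodes(\cg)$: any edge surviving the mutilation in $\cg$ lies in $\edges(\cg)\subseteq\edges(\cg')$ and points into neither $\mb{X}$ nor $\mb{Z}(\mb{W})_{\cg}$, hence a fortiori not into the smaller target set $\mb{X}\cup\mb{Z}(\mb{W})_{\cg'}$, so it survives the mutilation in $\cg'$ as well. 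Lemma \ref{lem:dSepsupergraph} (with this supergraph playing the role of $\cg'$ and the subgraph that of $\cg$) then transports the premise $(\mb{Y}\indep\mb{Z}\mid\mb{X},\mb{W})_{\cg'_{\overline{\mb{X}}\,\overline{\mb{Z}(\mb{W})_{\cg'}}}}$ down to $(\mb{Y}\indep\mb{Z}\mid\mb{X},\mb{W})_{\cg_{\overline{\mb{X}}\,\overline{\mb{Z}(\mb{W})_{\cg}}}}$, and Pearl's $\mathbf{R}_3$ for $\cg$ closes the argument.

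The main obstacle is precisely this $\mathbf{R}_3$ bookkeeping: one must track in \emph{which} graph the cut set $\mb{Z}(\mb{W})$ is computed and verify that shrinking it when passing from $\cg$ to $\cg'$ (because $\cg'$ carries more ancestors) is compatible with the supergraph containment that Lemma \ref{lem:dSepsupergraph} demands. Once the subgraph inclusion above is checked, everything else is a direct appeal to the soundness of do-calculus for the true model $\cg$; I do not anticipate any difficulty in $\mathbf{R}_1$ and $\mathbf{R}_2$, which are pure applications of Lemma \ref{lem:mutilatedGraphInclusion}.
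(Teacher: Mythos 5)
Your proposal is correct and follows essentially the same route as the paper: transport each $d$-separation premise from the mutilated supergraph down to the correspondingly mutilated $\cg$ via the edge-monotonicity lemmas (the paper handles $\mathbf{R}_1$--$\mathbf{R}_3$ uniformly through Lemma~\ref{lem:mutilatedGraphInclusion}, whose proof is exactly your edge-containment check followed by Lemma~\ref{lem:dSepsupergraph}) and then invoke the classical soundness of Pearl's do-calculus for $\cg$. Your one divergence---tracking that $\mb{Z}(\mb{W})$ is computed in $\cg'$ while Pearl's $\mathbf{R}_3$ for $\cg$ wants it relative to $\ancs_{\cg}(\mb{W})$, and bridging via $\ancs_{\cg}(\mb{W})\subseteq\ancs_{\cg'}(\mb{W})$ together with the containment $\edges(\cg_{\overline{\mb{X}}\,\overline{\mb{Z}(\mb{W})_{\cg}}})\subseteq\edges(\cg'_{\overline{\mb{X}}\,\overline{\mb{Z}(\mb{W})_{\cg'}}})$---is a step the paper's own proof leaves implicit (it applies the lemma with the same set $\mb{Z}(\mb{W})$ on both sides), so your extra bookkeeping makes the argument more, not less, complete.
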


\begin{theorem}[Soundness of Do-Calculus in supergraphs]
\label{thr:do-calculus-supergraphs}
Let $G$ be a causal BN (CBN) encoding an interventional distributions $P(\cdot \mid do(\cdot))$. Let $G'$ be an ADMG where $\edges(G)\subseteq \edges(G')$. For any disjoint subsets $X,Y,Z,W \subseteq \nodes(G)$, the following three rules hold:%

{\footnotesize
\begin{align*}
	{\bf R}_1: && (Y;Z|X,W)_{G'_{\overline{X}}} &\implies P(Y\mid do(X),Z,W)=P(Y\mid do(X),W) \\
	{\bf R}_2: && (Y;Z|X,W)_{G'_{\overline{X}\underline{Z}}} &\implies P(Y\mid do(X),do(Z),W)=P(Y\mid do(X),Z, W) \\
	{\bf R}_3: && (Y;Z|X,W)_{G'_{\overline{X}\overline{Z(W)}}} &\implies P(Y\mid do(X),do(Z),W)=P(Y\mid do(X), W) 
\end{align*}}%
where $Z(W)$ is the set of vertices in $Z$ that are not ancestors of any vertex in $W$. That is, $Z(W)=Z{\setminus}\ancs_{G'}(W)$ where $\ancs_{G'}(W)\eqdef \bigcup_{w\in W}\ancs_{G'}(w)$.
\end{theorem}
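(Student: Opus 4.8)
My plan is to reduce each of the three rules to the classical soundness of do-calculus over the original graph $G$, exploiting the fact that adding arcs can only destroy $d$-separations, never create them. Pearl's do-calculus is sound for any causal DAG together with the interventional distribution it encodes; in particular rules ${\bf R}_1,{\bf R}_2,{\bf R}_3$ hold whenever their $d$-separation premises are read off the corresponding mutilations of $G$ itself~\cite{pearl_causality_2000}. So it will suffice to show that whenever a rule's premise holds in the relevant mutilation of the supergraph $G'$, the same premise already holds in the relevant mutilation of $G$, after which classical soundness over $G$ closes the argument. The engine for this transfer is Lemma~\ref{lem:mutilatedGraphInclusion}, which states that for any supergraph $G'$ of $G$ (same node set, $\edges(G)\subseteq\edges(G')$) and any $X,Z$, $(A;B\mid C)_{G'_{\overline{X}\underline{Z}}}\implies (A;B\mid C)_{G_{\overline{X}\underline{Z}}}$; its empty-mutilation instance ($X=Z=\emptyset$) is just the statement that $d$-separation is preserved when passing from a supergraph to any of its subgraphs.

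Rules ${\bf R}_1$ and ${\bf R}_2$ will follow by a direct appeal to Lemma~\ref{lem:mutilatedGraphInclusion}. For ${\bf R}_1$ I would instantiate the lemma with an empty outgoing-edge set, turning the premise $(Y;Z\mid X,W)_{G'_{\overline{X}}}$ into $(Y;Z\mid X,W)_{G_{\overline{X}}}$, and then invoke classical Rule~1 over $G$ to conclude $P(Y\mid do(X),Z,W)=P(Y\mid do(X),W)$. For ${\bf R}_2$ I would apply the lemma verbatim with the same $X$ and $Z$ to pass from $(Y;Z\mid X,W)_{G'_{\overline{X}\underline{Z}}}$ to $(Y;Z\mid X,W)_{G_{\overline{X}\underline{Z}}}$, and then invoke classical Rule~2 over $G$.

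I expect ${\bf R}_3$ to be the main obstacle, because the mutilation set $Z(W)=Z\setminus\ancs(W)$ is itself graph-dependent and so differs between $G$ and $G'$. To handle it I would set $S'\eqdef X\cup(Z\setminus\ancs_{G'}(W))$ and $S\eqdef X\cup(Z\setminus\ancs_{G}(W))$, so that the premise is read off $G'_{\overline{S'}}$ while classical Rule~3 over $G$ needs a separation in $G_{\overline{S}}$. The key observation is monotonicity of ancestry under edge addition: since every directed path of $G$ survives in $G'$, we get $\ancs_{G}(W)\subseteq\ancs_{G'}(W)$, hence $Z\setminus\ancs_{G'}(W)\subseteq Z\setminus\ancs_{G}(W)$ and therefore $S'\subseteq S$. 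I would then check that $G'_{\overline{S'}}$ is a supergraph of $G_{\overline{S}}$: any edge of $G_{\overline{S}}$ lies in $\edges(G)\subseteq\edges(G')$ and is not incoming to $S$, so, because $S'\subseteq S$, it is not incoming to $S'$ either and thus survives in $G'_{\overline{S'}}$. Applying the empty-mutilation instance of Lemma~\ref{lem:mutilatedGraphInclusion} to the supergraph pair $(G_{\overline{S}},G'_{\overline{S'}})$ transfers $(Y;Z\mid X,W)_{G'_{\overline{S'}}}$ to $(Y;Z\mid X,W)_{G_{\overline{S}}}$, and classical Rule~3 over $G$ yields $P(Y\mid do(X),do(Z),W)=P(Y\mid do(X),W)$.

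The only nontrivial bookkeeping is the edge-containment check for the mutilated graphs, and for ${\bf R}_3$ the realization that enlarging the edge set shrinks $Z(W)$ in exactly the direction needed to keep the mutilated $G'$ a supergraph of the mutilated $G$; every other step is a mechanical application of Lemma~\ref{lem:mutilatedGraphInclusion} followed by classical do-calculus soundness on $G$.
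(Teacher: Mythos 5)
Your proposal is correct and takes essentially the same route as the paper: both proofs transfer each rule's $d$-separation premise from the mutilated supergraph $G'$ to the corresponding mutilation of $G$ via Lemma~\ref{lem:mutilatedGraphInclusion} and then invoke the classical soundness of do-calculus on $G$. Your extra care with ${\bf R}_3$ — observing that $\ancs_{G}(W)\subseteq\ancs_{G'}(W)$ makes $Z(W)$ shrink under edge addition, so the mutilated $G'$ remains a supergraph of the mutilated $G$ even though the mutilation sets differ — is a genuine refinement, since the paper's one-line application of the lemma to ${\bf R}_3$ silently treats $Z(W)$ as the same set in both graphs and leaves exactly this monotonicity step implicit.
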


\begin{proof}[Proof of Lemman \ref{lem:mutilatedGraphInclusion}]
	We show that $\edges(G_{\overline{X}\underline{Z}})\subseteq \edges(G'_{\overline{X}\underline{Z}})$, and the claim then follows from Theorem~\ref{thm:dSep}.
	Let $(u,v)\in \edges(G_{\overline{X}\underline{Z}})\subseteq \edges(G) \subseteq \edges(G')$. By definition, $u\notin Z$ and $v \notin X$. But this means that $(u,v)\in \edges(G'_{\overline{X}\underline{Z}})$, which completes the proof.
\end{proof}

\begin{proof}[Proof of Corollary \ref{corr:mutilatedSummaryDAG}]
Let $G_{H_{\overline{X}\underline{Z}}}$ denote the grounded DAG corresponding to $H_{\overline{X}\underline{Z}}$. By Theorem~\ref{thm:contractionEdgeAddition}, $\Sigma_{\text{RB}}(H_{\overline{X}\underline{Z}})\equiv \Sigma_{\text{RB}}(G_{H_{\overline{\bX}\underline{\bZ}})}$, and hence $(A;B|C)_{H_{\overline{X}\underline{Z}}}$ if and only if $({\bf A};{\bf B}|{\bf C})_{G_{H_{{\overline{{\bf X}}\underline{{\bf Z}}}}}}$. Since $\edges(G_H)\supseteq \edges(G)$, then by Lemma~\ref{lem:mutilatedGraphInclusion}, it holds that if $({\bf A};{\bf B}|{\bf C})_{G_{H_{{\overline{{\bf X}}\underline{{\bf Z}}}}}}$, then $({\bf A};{\bf B}|{\bf C})_{G_{{\overline{{\bf X}}\underline{{\bf Z}}}}}$. Overall, we have that:
\begin{align}
	(A;B|C)_{H_{\overline{X}\underline{Z}}} \Leftrightarrow ({\bf A};{\bf B}|{\bf C})_{G_{H_{{\overline{{\bX}}\underline{{\bZ}}}}}} \implies ({\bf A};{\bf B}|{\bf C})_{G_{\overline{{\bX}}\underline{{\bZ}}}} 
\end{align}
which proves the claim.
\end{proof}

\begin{proof}[Proof of Theorem \ref{thr:do-calculus-supergraphs}]
	If $(Y;Z|X,W)_{G'_{\overline{X}}}$, then by Lemma~\ref{lem:mutilatedGraphInclusion}, it holds that $(Y;Z|X,W)_{G_{\overline{X}}}$. By the soundness of do-calculus for causal BNs, we get that $P(\bY \mid do(\bX),\bZ,\bW)=P(\bY\mid do(\bX),\bW) $. 
	If $ (Y;Z|X,W)_{G'_{\overline{X}\underline{Z}}}$, then by Lemma~\ref{lem:mutilatedGraphInclusion}, it holds that $ (Y;Z|X,W)_{G_{\overline{X}\underline{Z}}} $. By the soundness of do-calculus for causal BNs, we get that $P(Y\mid do(X),do(Z),W)=P(Y\mid do(X),Z, W) $. 
	Finally, if $(Y;Z|X,W)_{G'_{\overline{X}\overline{Z(W)}}}$, then by Lemma~\ref{lem:mutilatedGraphInclusion}, it holds that $(Y;Z|X,W)_{G_{\overline{X}\overline{Z(W)}}}$. By the soundness of do-calculus for causal BNs, we get that $P(Y\mid do(X),do(Z),W)=P(Y\mid do(X), W) $. 
\end{proof}

\begin{proof}[Proof of Theorem \ref{thr:soundeness-do-calculus-summaryDAGs}]
	If $(Y;Z|X,W)_{H_{\overline{X}}} $, then by Corollary~\ref{corr:mutilatedSummaryDAG}, it holds that $(\bY;\bZ|\bX,\bW)_{G_{\overline{X}}}$. By the soundness of do-calculus for causal BNs, we get that $P(\bY \mid do(\bX),\bZ,\bW)=P(\bY\mid do(\bX),\bW)$.
	If $(Y;Z|X,W)_{H_{\overline{X}\underline{Z}}}$,  then by Corollary~\ref{corr:mutilatedSummaryDAG}, it holds that $(\bY;\bZ|\bX,\bW)_{G_{\overline{X}\underline{Z}}}$. By the soundness of do-calculus for causal BNs, we get that $P(\bY\mid do(\bX),do(\bZ),w)=P(\bY\mid do(\bX),\bZ, \bW)$.
	Finally, if $(Y;Z|X,W)_{H_{\overline{X}\overline{Z(W)}}}$, then by Corollary~\ref{corr:mutilatedSummaryDAG}, it holds that $(\bY;\bZ|\bX,\bW)_{G_{\overline{X}\underline{Z(W)}}}$. By the soundness of do-calculus for causal BNs, we get that $P(\bY \mid do(\bX),do(\bZ),\bW)=P(\bY\mid do(\bX), \bW) $.
\end{proof}

\begin{proof}[Proof of Theorem \ref{thr:completness-do-calculus}]
	Consider $G_H$, the grounded-DAG of $(H,f)$, that is, by definition, compatible with $H$.
	If $Y$ is $d$-connected to $Z$ in $H_{\overline{X}}$ with respect to $X\cup W$, then by Definition~\ref{def:groundedDAG}, it holds that $y$ is $d$-connected to $z$ in $G_{H_{\overline{f(X)}}}$ with respect to $f(X\cup W)$, for every $y\in f(Y)$ and $z\in f(Z)$. 
	Therefore,  $f(Y)$ is $d$-connected to $f(Z)$ in $G_{H_{\overline{f(X)}}}$ with respect to $f(X\cup W)$.
\end{proof}

\section{Handling Mixed Graphs}
\label{subsec:mix_graphs}
While one dominant form of graph input for causal inference is a causal DAG, other graph representations are also used when a full causal DAG is not retrievable, say, by a causal discovery algorithm (e.g., \cite{pena2016learning}).
Many of these graph representations are referred to as \textit{mixed graphs} due to their inclusion of undirected, bidirected, and other types of edges \cite{chickering2002optimal, perkovic2020identifying}.

One commonly used of mix graph is an acyclic-directed mixed graph (ADMG), which consists of a DAG with bidirected edges. As mentioned in the introduction, all of our results apply to scenarios where the input graph is an ADMG. Subsequently, we present an extension to the \algoName\ algorithm to accommodate an ADMG.

In this scenario, we modify the cost function (Algorithm \ref{algo:cost}) as follows: When we remove a bidirected edge between nodes $U$ and $V$ (i.e., $U \leftrightarrow V$) by merging $U$ and $V$ into a single node, the cost incurred is doubled compared to removing a "standard" directed edge. For instance, in line 4 of Algorithm \ref{algo:cost}, if $\mathbf{U}$ and $\mathbf{V}$ were linked by a bidirected edge, the line would be updated to:
$$cost \gets cost + 2\cdot size(\mathbf{U})\cdot size(\mathbf{V})$$
This adjustment is necessary because losing a bidirected edge should carry a higher cost than losing a regular directed edge, given that more information is lost.

\end{document}